\def\eqref#1{equation~\ref{#1}}
\def\1{\bm{1}}
\DeclareMathAlphabet{\mathsfit}{\encodingdefault}{\sfdefault}{m}{sl}
\SetMathAlphabet{\mathsfit}{bold}{\encodingdefault}{\sfdefault}{bx}{n}
\declaretheorem[name=Proposition]{proposition}
\declaretheorem[name=Corollary]{corollary}
\newcommand{\mbf}[1]{\mathbf{#1}}
\newcommand{\xb}{\mbf{x}}
\newcommand{\zb}{\mbf{z}}
\newcommand{\thetab}{\boldsymbol{\theta}}
\newcommand{\thetashb}{\boldsymbol{\theta_\parallel}}
\newcommand{\thetaspb}{\boldsymbol{\theta_\perp}}
\newcommand{\gb}{\mbf{g}}
\newcommand{\alphab}{\bm{\alpha}}
\newcommand{\conv}{\text{Conv}}
\newcommand{\aff}{\text{Aff}}
\renewcommand{\bfseries}{\fontseries{b}\selectfont}
\newrobustcmd{\B}{\bfseries}
\newacronym{mlp}{MLP}{multi-layer perceptron}
\newacronym{mdp}{MDP}{Markov Decision Process}
\newacronym{rl}{RL}{reinforcement learning}
\newacronym{smto}{SMTO}{Specialized Multi-Task Optimizer}
\newacronym{mgda}{MGDA}{Multiple-Gradient Descent Algorithm}
\newacronym{imtl}{IMTL}{Impartial Multi-Task Learning}
\newacronym{mtl}{MTL}{Multi-Task Learning}
\title{In Defense of the Unitary Scalarization\\for Deep Multi-Task Learning}
\author{%
	Vitaly Kurin\thanks{Equal contribution. \vspace{-10pt}} \\
	University of Oxford \\
	\texttt{vitaly.kurin@cs.ox.ac.uk} \\
	 \And
	Alessandro De Palma\footnotemark[1] \\
	University of Oxford \\
	\texttt{adepalma@robots.ox.ac.uk} \\ 
	 \AND  
	Ilya Kostrikov\\
	University of California, Berkeley\\ %
	New York University \\
	\And
	Shimon Whiteson \\
	University of Oxford \\
	\And
	M. Pawan Kumar \\
	University of Oxford 
}
\begin{document}

\maketitle

\vspace{-10pt}
\begin{abstract}

Recent multi-task learning research argues against \emph{unitary scalarization}, where training simply minimizes the sum of the task losses. Several ad-hoc multi-task optimization algorithms have instead been proposed, inspired by various hypotheses about what makes multi-task settings difficult. 
The majority of these optimizers require per-task gradients, and introduce significant memory, runtime, and implementation overhead.
We show that unitary scalarization, coupled with standard regularization and stabilization techniques from single-task learning, matches or improves upon the performance of complex multi-task optimizers in popular supervised and reinforcement learning settings.
We then present an analysis suggesting that many specialized multi-task optimizers can be partly interpreted as forms of regularization, potentially explaining our surprising results.
We believe our results call for a critical reevaluation of recent research in the area.
\end{abstract}

\vspace{-5pt}
\section{Introduction}
\gls{mtl}~\citep{Caruana1997} exploits similarities between tasks to yield models that are more accurate, generalize better and require less training data.
Owing to the success of \gls{mtl} on traditional machine learning models~\citep{Heskes2000,Bakker2003,Theodoros2004} and of deep single-task learning across a variety of domains, a growing body of research has focused on deep \gls{mtl}.
The most straightforward way to train a neural network for multiple tasks at once is to minimize the sum of per-task losses. 
Adopting terminology from multi-objective optimization, we call this approach \emph{unitary~scalarization}.

While some work shows that multi-task networks trained via unitary scalarization exhibit superior performance to independent per-task models~\citep{Kokkinos2017,DBLP:journals/corr/abs-2104-08212}, others suggest the opposite~\citep{Teh2017, Kendall2017, Sener2018}. 
As a result, many explanations for the difficulty of \gls{mtl} have been proposed, each motivating a new~\gls{smto}~\citep{Sener2018, Liu2021, Yu2020, Chen2020,Wang2021}. 
These works typically claim that the proposed~\gls{smto} outperforms unitary scalarization, in addition to relevant prior work.
However, \glspl{smto}~usually require access to per-task gradients either with respect to the shared parameters, or to the shared representation. Therefore, their reported performance gain comes at significant computation and memory cost, the overhead scaling linearly with the number of tasks. By contrast, unitary scalarization requires only the average of the gradients across tasks, which can be computed via a single~backpropagation. 

Existing \glspl{smto} were introduced to solve challenges~related to the optimization of the deep~\gls{mtl} problem.
We instead postulate that the reported weakness of unitary scalarization is linked to experimental variability or to a lack of regularization, leading to the following~contributions:
\vspace{-5pt}
\begin{itemize}[leftmargin=.5cm]
	\item A comprehensive experimental evaluation~($\S$\ref{sec:experiments}) of recent \gls{smto}s on popular multi-task benchmarks, showing that no \gls{smto} consistently outperforms unitary scalarization in spite of the added complexity and overhead. %
	In particular, either the differences between unitary scalarization and \glspl{smto} are not statistically significant, or they can be bridged by standard regularization and stabilization techniques from the single-task literature.
	Our \gls{rl} experiments include optimizers previously applied only to supervised learning. 
	\item An empirical and technical analysis of the considered \glspl{smto}, suggesting that they reduce overfitting on the multi-task problem and hence act as regularizers ($\S$\ref{sec:critical-analysis}). We conduct an ablation study and provide a collection of novel and existing technical results that support this hypothesis.
	\item Code to reproduce the experiments, including a unified PyTorch~\citep{Paszke2019} implementation of the considered \glspl{smto}, is available at \url{https://github.com/yobibyte/unitary-scalarization-dmtl}.
\end{itemize}
\vspace{-5pt}
We believe that our results suggest that the considered \glspl{smto} can be often replaced by less expensive techniques.
We hope that these surprising results stimulate the search for a deeper understanding~of~MTL.

\section{Related Work}
\vspace{-2pt}

Before diving into details of specific~\glspl{smto} in Section~\ref{sec:critical-analysis}, we provide a high-level overview of the deep \gls{mtl} research.
Seminal work in \gls{mtl} includes \emph{hard parameter sharing}~\citep{caruanaThesis}: sharing neural network parameters between all tasks with, possibly, a separate part of the model for each task.
Hard parameter sharing is still the major \gls{mtl} approach adopted in natural language processing~\citep{DBLP:conf/icml/CollobertW08, DBLP:journals/corr/abs-2109-09138}, computer vision~\citep{DBLP:conf/cvpr/MisraSGH16}, and speech recognition~\citep{DBLP:conf/icassp/SeltzerD13}.
In this work, we implicitly assume that each parameter update employs information from all tasks.
However, not all works satisfy this assumption, either due to a large number of tasks~\citep{DBLP:conf/ijcai/CappartCK00V21, DBLP:conf/nips/KurinGWC20}, or simply as an implementation decision~\citep{DBLP:conf/icml/HuangMP20,DBLP:conf/iclr/KurinIRBW21}.
In this setting, \gls{mtl} resembles other problems dealing with multiple tasks, i.e., continual~\citep{khetarpal2020towards}, curriculum~\citep{DBLP:journals/jmlr/NarvekarPLSTS20}, and meta-learning~\citep{DBLP:journals/corr/abs-2004-05439}, which are not the focus of this work.

\looseness=-1
Many works strive to improve the performance of deep multi-task models.
One line of research hypothesizes that conflicting per-task gradient directions lead to suboptimal models, and focuses on explicitly removing such conflicts~\cite{Yu2020,Chen2020,Liu2021,Wang2021,javaloy2022rotograd,liu2021conflict}. 
Some authors postulate that loss imbalances across tasks hinder learning, proposing loss reweighting methods~\citep{Kendall2017,Chen2018,rlw}. \citet{Sener2018} and \citet{Navon2022} propose that tasks compete for model capacity and interpret \gls{mtl} as multi-objective optimization in order to cope with inter-task competition. 
Here, we focus on algorithms that explicitly rely on per-task gradients to try to outperform unitary scalarization ($\S$\ref{sec:critical-analysis}).
Research on multi-task architectures~\citep{DBLP:conf/cvpr/MisraSGH16,Guo2020} or \gls{mtl} algorithms exclusively motivated by deterministic loss reweighting~\citep{Kendall2017,Guo2018,Liu2019} are orthogonal to our work.
Both topics are investigated by a recent survey on pixel-level multi-task computer vision problems~\citep{vandenhende2021multi}, which found that the minimization of tuned weighted sums of losses (scalarizations) is empirically competitive with deterministic loss reweighting and MGDA in the considered~settings.
These results are extended to popular \glspl{smto} by a critical review from \citet{Derrick2022}, concurrent to our work, which argues that the optimization and generalization performance of \glspl{smto} can be matched by tuning scalarization coefficients. 
Our work reaches a similar conclusion, demonstrating that unitary scalarization performs on par with \glspl{smto} when coupled with standard and inexpensive regularization or stabilization techniques.
In other words, \citet{Derrick2022} provide complementary support for the link between \glspl{smto} and regularization by showing that tuning scalarization weights positively affects generalization.

\looseness=-1
In addition to the common supervised settings, we also consider 
multi-task~\gls{rl}, whose research can be grouped into three categories:
the first adds auxiliary tasks providing additional inductive biases to speed up learning~\citep{DBLP:conf/iclr/JaderbergMCSLSK17} on a target task. 
The second, based on policy distillation, 
uses per-task teacher models to provide labels for a multi-task model or per-task policies as regularizers~\citep{DBLP:journals/corr/RusuCGDKPMKH15,DBLP:journals/corr/ParisottoBS15,DBLP:conf/nips/TehBCQKHHP17}.
The third directly learns a shared policy~\citep{DBLP:journals/corr/abs-2104-08212}, possibly via an \gls{smto}~\citep{Yu2020}. 
We focus on the third category, whose literature reports varying performance for unitary scalarization (better~\citep{DBLP:journals/corr/abs-2104-08212} or worse~\citep{Yu2020} than per-task models), indicating confounding factors in evaluation pipelines and further motivating our work. 
PopArt~\citep{van2016learning,DBLP:conf/aaai/HesselSE0SH19} performs scale-invariant value function updates in order to address differences in returns across environments, showing improvements in the multi-task setting while still using unitary scalarization.
PopArt does not require per-task gradients but introduces additional hyperparameters.
In our work, we address the differences in rewards by normalizing them at the replay buffer level. However, we believe both unitary scalarization and~\glspl{smto} might equally benefit from~PopArt.

\section{Multi-Task Learning Optimizers} \label{sec:setting}

We will now describe the deep \gls{mtl} training problem and popular algorithms employed for its solution.
Let $(X, Y) \in \mathbb{R}^{d \times n} \times \mathbb{R}^{o \times n}$ be the training set, composed of $n$ $d$-dimensional points and $o$-dimensional labels. In addition, $\mathcal{L}_i : \mathbb{R}^{o \times n} \times \mathbb{R}^{o \times n} \rightarrow \mathbb{R}$ denotes the loss for the $i$-th task, $\thetab \in \mathbb{R}^{S}$ the parameter space, $\mathcal{T}:= \{1, \dots, m\}$ the set of $m$ tasks. 
The goal of \gls{mtl} is to learn a single (generally task-aware) parametrized model $f : \mathbb{R}^{S} \times \mathbb{R}^{d \times n} \times \mathcal{T} \rightarrow \mathbb{R}^{o \times n}$ that performs well on all tasks $\mathcal{T}$. 
The parameter space is often split into a set of shared parameters across tasks (generally the majority of the architecture), denoted~$\thetashb$, and (possibly empty) task-specific parameters, denoted $\thetaspb$, so that $\thetab := [\thetashb, \thetaspb]^T$.
In this context, the model $f$ often takes on an encoder-decoder architecture, where the encoder $g$ learns a shared representation across tasks, and the decoders $h_i$ are task-specific predictive heads: $f(\thetab, X, i) = h_i (g(\thetashb, X), \thetaspb)$. In this case, we denote by $\zb = g(\thetashb, X) \in \mathbb{R}^{r \times n}$ the $r$-dimensional shared representation of $X$.

The training problem for \gls{mtl} is typically formulated as the sum of the per-task losses~\citep{Sener2018,Yu2020,Chen2020}:
\begin{equation}
	\label{eq:unit-scalarization}
	\smash{\min_{\thetab}} \left[\begin{array}{l} \mathcal{L}^{\text{MT}}(\thetab) := \sum_{i \in \mathcal{T}} \mathcal{L}_i (f(\thetab, X, i), Y)	\end{array}\right].
\end{equation}

\paragraph{Unitary Scalarization}
\looseness=-1
The obvious way to minimize the multi-task training objective in equation \cref{eq:unit-scalarization} is to rely on a standard gradient-based algorithm. While, for simplicity, we focus on standard gradient descent rather than mini-batch stochastic gradient descent, the notation can be adapted by replacing the dataset size $n$ by the mini-batch~size $b$. 
Equation \cref{eq:unit-scalarization} corresponds to a linear scalarization with unitary weights under a multi-objective interpretation of~\gls{mtl}; hence, we call the direct application of gradient descent on equation \cref{eq:unit-scalarization} \emph{unitary scalarization}.
 For vanilla gradient descent, this corresponds to taking a step in the opposite direction as the one given by the sum of per-task~gradients: $\nabla_{\thetab} \mathcal{L}^{\text{MT}} = \sum_{i \in \mathcal{T}}  \nabla_{\thetab} \mathcal{L}_i$. 
Per-task gradients are not required, as it suffices to directly compute the gradient of the sum $\mathcal{L}^{\text{MT}}$. Hence, when relying on deep learning frameworks based on reverse-mode differentiation, such as PyTorch~\citep{Paszke2019}, the backward pass is performed once per iteration (rather than $m$ times). Furthermore, the memory cost is a factor $m$ less than most \gls{smto}s, which require access to each $\nabla_{\thetab} \mathcal{L}_i$.
As a consequence, unitary scalarization is simple, fast, and memory efficient. 
Our experiments demonstrate that, when possibly coupled with single-task regularization such as early stopping, $\ell_2$ penalty or dropout layers~\citep{Srivastava2014}, this simple optimizer is strongly competitive with~\glspl{smto}.

\paragraph{MGDA}
\citet{Sener2018} point out that equation \cref{eq:unit-scalarization} can be cast as a multi-objective optimization problem with the following objective: $\boldsymbol{\mathcal{L}}^{\text{MT}}(\thetab) := [\mathcal{L}_1(\thetab), \dots, \mathcal{L}_m(\thetab)]^T$. 
A commonly employed solution concept in multi-objective optimization is Pareto optimality. A point $\thetab^*$ is called Pareto-optimal if, for any another point $\thetab^\dagger$ such that $\exists i \in \mathcal{T} : \mathcal{L}_i(\thetab^\dagger) < \mathcal{L}_i(\thetab^*)$, then $\exists j \in \mathcal{T} : \mathcal{L}_j(\thetab^\dagger) > \mathcal{L}_j(\thetab^*)$.
A necessary condition for Pareto optimality at a point is Pareto stationarity, defined as the lack of a shared descent direction across all losses at that point.
\citet{Sener2018} rely on \gls{mgda}~\citep{Desideri2012} to reach a Pareto-stationary point for shared parameters $\thetashb$. 
Intuitively, \gls{mgda} proceeds by repeatedly stepping in a shared descent direction~\citep{Fliege2000,Desideri2012}, which can be found by solving the following optimization~problem:
\begin{equation}
	\min_{\gb, \epsilon} \left[\epsilon + \nicefrac{1}{2} \left\lVert \gb \right \rVert_2^2\right] \quad
	\text{s.t. } \  \nabla_{\thetashb} \mathcal{L}_i ^T \gb \leq \epsilon \quad \forall \ i \in \mathcal{T},
	\label{eq:primal-mgda}
\end{equation}
whose dual takes the following form (corresponding to the formulation from \citet{Desideri2012}):
\begin{equation}
	\max_{\alphab \geq 0 } -\nicefrac{1}{2} \left\lVert \gb \right \rVert_2^2 \quad \text{s.t. } \ \sum_i \alpha_i \nabla_{\thetashb} \mathcal{L}_i  = -\gb, \quad \sum_{i \in \mathcal{T}} \alpha_i =  1.
	\label{eq:dual-mgda}
\end{equation}
In other words, \gls{mgda} takes a step in a direction $\gb$ given by the negative convex combination of per-task gradients, whose coefficients are given by solving equation \cref{eq:dual-mgda}.
In practice, per-task gradients are rescaled before applying \gls{mgda}: the original authors' implementation~\citep{Sener2018} relies on $\nabla_{\thetashb} \mathcal{L}_i \leftarrow \nicefrac{\nabla_{\thetashb} \mathcal{L}_i }{\left\lVert\nabla_{\thetashb} \mathcal{L}_i\right\rVert \mathcal{L}_i(\thetab)}$.
The convergence of \gls{mgda} to a Pareto-stationary point is still guaranteed after normalization~\citep{Desideri2012}.

\paragraph{IMTL}
\gls{imtl} \citep{Liu2021} is presented as an \gls{smto} that is not biased against any single task. It is composed of two complementary algorithmic blocks: IMTL-L, acting on task losses, and IMTL-G, acting on per-task~gradients.
IMTL-G follows the intuition that a multi-task optimizer should proceed along a direction $\gb = -\sum_i \alpha_i \nabla_{\thetashb} \mathcal{L}_i$ that equally represents per-task gradients. This is formulated analytically by requiring that the cosine similarity between $\gb$ and each $\nabla_{\thetashb} \mathcal{L}_i$ be the same.  To prevent the resulting problem from being underdetermined, \citet{Liu2021} add the constraint $\sum_{i \in \mathcal{T}} \alpha_i = 1$, resulting in a problem that admits a closed-form solution for $\gb$:
\begin{equation}
	\begin{array}{l}
	\gb^T \frac{ \nabla_{\thetashb} \mathcal{L}_1}{\left\lVert\nabla_{\thetashb} \mathcal{L}_1\right\rVert} = \gb^T \frac{\nabla_{\thetashb} \mathcal{L}_i}{\left\lVert\nabla_{\thetashb} \mathcal{L}_i\right\rVert} \enskip  \forall \ i \in \mathcal{T} \setminus \{1\}, \quad
	\gb = -\sum_i \alpha_i \nabla_{\thetashb} \mathcal{L}_i, \quad \sum_{i \in \mathcal{T}} \alpha_i = 1.	\end{array}
	\label{eq:imtlg-goal}
\end{equation}
IMTL-L, instead, aims to reweight task losses so that they are all constant over time, and equal to $1$. In order to limit oscillations of the scaling factors, the authors propose to learn them jointly with the network by minimizing a common objective via gradient descent. 
In particular, given $s_i \in \mathbb{R} \ \forall i \in \mathcal{T}$, \citet{Liu2021} derive the following form for the joint minimization problem: $\min_{\mbf{s}, \thetab} \left[\sum_i \left(e^{s_i} \mathcal{L}_i (\thetab)  - s_i\right) \right].$
As proved by \citet{Liu2021}, IMTL-L only has a rescaling effect on the update direction of IMTL-G. Unlike IMTL-G and  the other \glspl{smto} presented in this section, IMTL-L rescaling is designed to affect the updates for task-specific parameters $\thetaspb$ as well.

\paragraph{PCGrad} 
Let us write $\cos(\xb, \zb)$ for the cosine similarity between vectors $\xb$ and $\zb$.
\citet{Yu2020} postulate that multi-task convergence is severely slowed down if the following three conditions (named the \emph{tragic triad}) hold at once:
(i) conflicting gradient directions: $\cos(\nabla_{\thetashb} \mathcal{L}_i, \nabla_{\thetashb} \mathcal{L}_j ) < 0$ for some $i,j \in \mathcal{T}$;
(ii) differing gradient magnitudes: $\left\lVert\nabla_{\thetashb} \mathcal{L}_i\right\rVert \gg  \left\lVert\nabla_{\thetashb} \mathcal{L}_j\right\rVert $ for some $i,j \in \mathcal{T}$; and
(iii) the unitary scalarization $\mathcal{L}^{\text{MT}}$ has high curvature along $\nabla_{\thetashb} \mathcal{L}^{\text{MT}}$.
The PCGrad~\citep{Yu2020} \gls{smto} is presented as a solution to the tragic triad, targeted at the first condition. Consistent with the previous sections, let us denote the update direction by $\gb$. Furthermore, let $[\xb]_+ := \max(\xb, \mbf{0})$. Given per-task gradients $\nabla_{\thetashb} \mathcal{L}_i$, PCGrad iteratively projects each task gradient onto the normal plane of all the gradients with which it conflicts: \vspace{-5pt}
\begin{equation}
	\left[\begin{array}{l} \hspace{-4pt}
		\gb_i \leftarrow \nabla_{\thetashb} \mathcal{L}_i, \enskip \gb_i \leftarrow \gb_i + \hspace{-3pt} \left[\frac{-\gb_i^T \nabla_{\thetashb} \mathcal{L}_j (\xb)}{\left\lVert \nabla_{\thetashb} \mathcal{L}_j \right\rVert^2}\right]_+ \hspace{-7pt} \nabla_{\thetashb} \mathcal{L}_j \enskip \forall j \in \mathcal{T} \setminus \{i\}
	\end{array}\hspace{-5pt} \right] \forall i \in \mathcal{T}, \quad
	\gb =  -\sum_{i \in \mathcal{T}} \gb_i,
	\label{eq:pcgrad-update}
\end{equation}
where the iterative updates of $\gb_i$ with respect to $\nabla_{\thetashb}\mathcal{L}_j$ are performed in random order.

\paragraph{GradDrop}
\citet{Chen2020} focus on conflicting signs across task gradient entries, arguing that such conflicts lead to gradient ``tug-of-wars". 
The GradDrop \gls{smto}~\citep{Chen2020}, presented as a solution to this problem, proposes to randomly mask per-task gradients $\nabla_{\thetashb} \mathcal{L}_i$ so as to minimize such conflicts.
Specifically, GradDrop computes the  ``positive sign purity" $p_j$ for the task gradient's $j$-th entry
and then masks the $j$-th entry of each per-task gradient with probability increasing with $p_j$, if the entry is negative, or decreasing with $p_j$, if the entry is positive. Let us write $\mbf{p} := [p_1, \dots, p_S]$, where $S$ is the dimensionality of the parameter space (see $\S$\ref{sec:setting}), $\odot$ for the Hadamard product and $\mathds{1}_{\mbf{a}}$ for the indicator vector on condition $\mbf{a}$. Given a vector $\mbf{u}_i$, uniformly sampled in $[\mbf{0}, \mbf{1}]$ at each iteration, GradDrop takes a step in the direction given by:
\begin{equation}
	\gb = \sum_{i\in \mathcal{T}} 
	\left(\begin{array}{l}
		-\nabla_{\thetashb} \mathcal{L}_i \odot \mathds{1}_{ \left(\nabla_{\thetashb} \mathcal{L}_i > 0\right) } \odot \mathds{1}_{ \left(\mbf{u}_i > \mbf{p}\right)} \\
		-\ \nabla_{\thetashb} \mathcal{L}_i \odot\mathds{1}_{ \left(\nabla_{\thetashb} \mathcal{L}_i < 0 \right)} \odot \mathds{1}_{ \left(\mbf{u}_i < \mbf{p}\right)} 
	\end{array}\right)\text{, with } \enskip \mbf{p} = \frac{1}{2} \left( 1 + \frac{\sum_{i \in \mathcal{T}} \nabla_{\thetashb} \mathcal{L}_i}{\sum_{i \in \mathcal{T}} \left|\nabla_{\thetashb} \mathcal{L}_i\right|} \right).
	\label{eq:graddrop}
\end{equation}

\section{Experimental Evaluation} \label{sec:experiments}

Relying on a unified experimental pipeline, we present an empirical evaluation on common \gls{mtl} benchmarks of unitary scalarization ($\S$\ref{sec:setting}), of the popular~\glspl{smto} presented in $\S$\ref{sec:setting}, and of the recent RLW algorithms~\citep{rlw} due to their similarities with PCGrad and GradDrop (see $\S$\ref{sec:technical-reg}).
We benchmark against the two RLW instances that showed the best average performance in the original paper: RLW with weights sampled from a Dirichlet distribution (``RLW Diri.''), and RLW with weights sampled from a Normal distribution (``RLW Norm.''). 
The goal of this section is to assess the efficacy of a popular line of previous work, focusing on a few representative or well-established optimizers.
Therefore, we forego comparison with more recent \glspl{smto}~\citep{Navon2022,javaloy2022rotograd,liu2021conflict}. Nevertheless, we point out that these algorithms often lack significant enough improvements over the optimizers we consider, or may have substantial commonalities with them (see $\S$\ref{sec:technical-reg} for Nash-MTL~\citep{Navon2022}, which was published concurrently to the finalization of this work).
Whenever appropriate, we employ ``Unit.\ Scal.'' as shorthand for unitary scalarization.
We first present supervised learning experiments ($\S$\ref{sec:sl-experiments}), and then evaluate on a popular reinforcement learning benchmark~($\S$\ref{sec:rl-experiments}).

Our experiments indicate that the performance of unitary scalarization has been consistently underestimated in the literature.
By showing the variability between runs and by relying on standard regularization and stabilization techniques from the single-task literature, we demonstrate that \emph{no SMTO consistently outperforms unitary scalarization across the considered settings}. This result holds in spite of the added complexity and computational overhead associated with most \gls{smto}s. 
We provide a potential explanation of our results in $\S$\ref{sec:critical-analysis}.

\subsection{Supervised Learning} \label{sec:sl-experiments}

All the architectures employed in the supervised learning experiments conform to the encoder-decoder structure detailed in~$\S$\ref{sec:setting}.
Whenever suggested by the original authors for this context, the \gls{smto} implementations rely on per-task gradients with respect to the last shared activation, $\nabla_{\zb}$, rather than on the usually more expensive $\nabla_{\thetab} \mathcal{L}_i$. In particular, this is the case for MGDA, IMTL and GradDrop.
See appendix \ref{sec:supp-overview} for details concerning each individual~algorithm. 
Surprisingly, several \gls{mtl} works~\citep{Yu2020,Chen2020,Liu2021,rlw} report validation results, making it easier to overfit.
Instead, following standard machine learning practice, we select a model on the validation set, and later report test metrics for all benchmarks. 
Validation results are also available in appendix~\ref{sec:supp-sl-experiments}.
Appendix~\ref{sec:sl-setup} reports dataset descriptions, the computational setup, hyperparameter and tuning details.

\subsubsection{Multi-MNIST}
\begin{figure*}[t!]
	\begin{subfigure}{0.49\textwidth}
		\centering
		\includegraphics[width=\textwidth]{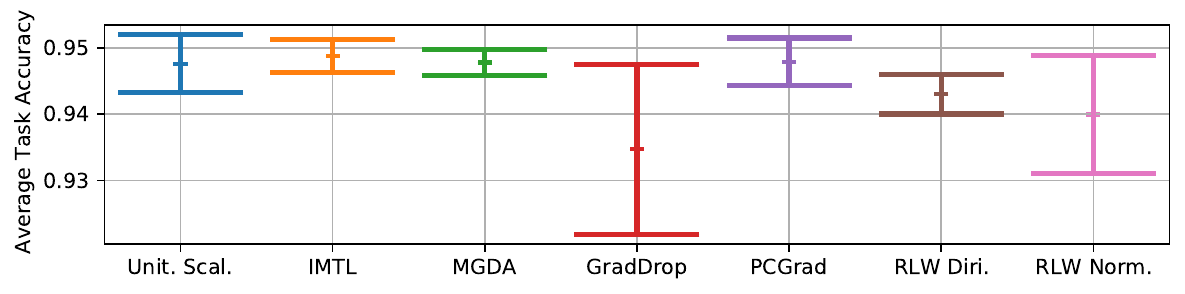}
		\vspace{-15pt}
		\caption{Avg. task test accuracy: mean and 95$\%$ CI (10~runs).}
		\label{fig:mnist-test}
	\end{subfigure}\hspace{5pt}
	\begin{subfigure}{0.49\textwidth}
		\centering
		\includegraphics[width=\textwidth]{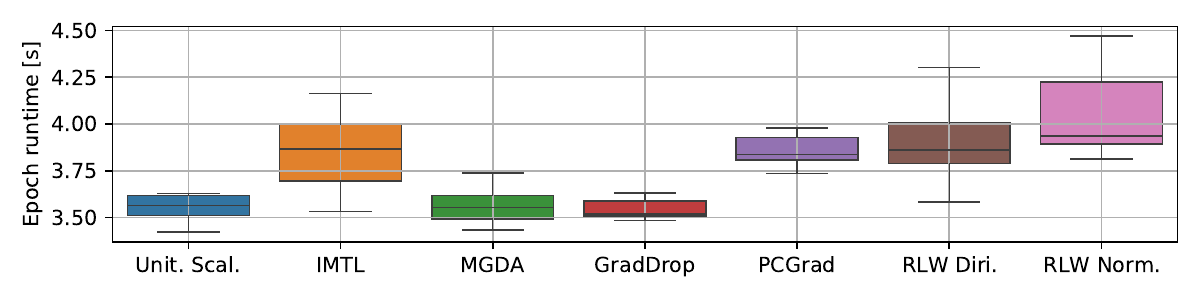}
		\vspace{-15pt}
		\caption{Box plots for the training time of an epoch (10 runs).}
		\label{fig:mnist-time}
	\end{subfigure}
	\vspace{-3pt}
	\caption{\label{fig:mnist} %
		No algorithm outperforms unitary scalarization on the Multi-MNIST dataset.}
	\vspace{-10pt}
\end{figure*}
\looseness=-1
We present results on the Multi-MNIST \citep{Sener2018} dataset, a simple two-task supervised learning benchmark. We employ a popular architecture from previous work~\citep{Sener2018,Yu2020} (see appendix~\ref{sec:sl-setup}), where a single dropout layer~\citep{Srivastava2014} (with dropout probability $0.5$) is employed in both the encoder and the decoder. $\ell_2$ regularization did not improve validation performance and was therefore omitted.
Figure~\ref{fig:mnist} reports the average task test accuracy, and the training time per epoch. 
For each run, the test model was selected as the model with the largest average task validation accuracy across the training epochs.
Appendix~\ref{sec:supp-sl-experiments} presents the results of Figure~\ref{fig:mnist} in tabular form, as well as the average task validation accuracy per epoch. 
As seen from the overlapping confidence intervals, none of the considered algorithms clearly outperforms the others. However, GradDrop displays  higher experimental variability. 
Finally, Figure~\ref{fig:mnist-time} shows that unitary scalarization also has among the lowest training times.
\subsubsection{CelebA} \label{sec:celeba-experiments}
\begin{figure*}[b!]
	\begin{subfigure}{0.49\textwidth}
		\centering
		\includegraphics[width=\textwidth]{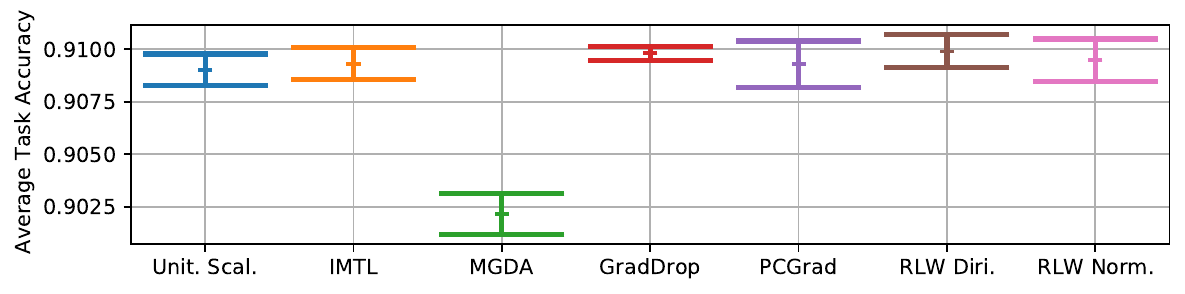}
		\vspace{-15pt}
		\caption{Avg. task test accuracy: mean and 95$\%$ CI (3 runs).}
		\label{fig:celeba-test}
	\end{subfigure}\hspace{5pt}
	\begin{subfigure}{0.49\textwidth}
		\centering
		\includegraphics[width=\textwidth]{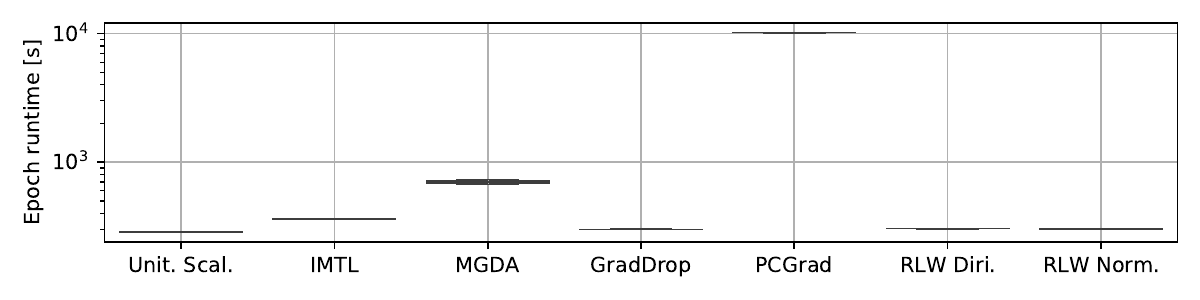}
		\vspace{-15pt}
		\caption{Box plots for the training time of an epoch (10 runs).}
		\label{fig:celeba-time}
	\end{subfigure}
	\vspace{-3pt}
	\caption{\label{fig:celeba}
	While \glspl{smto} display larger runtimes, none of them outperforms the unitary scalarization on the CelebA dataset.}
\end{figure*}
We now show results for the CelebA~\citep{liu2015faceattributes} dataset, a challenging $40$-task multi-label classification problem. We employ the same architecture as many previous studies~\citep{Sener2018,Yu2020,rlw,Liu2021} (see appendix~\ref{sec:sl-setup}).
We tuned $\ell_2$ regularization terms $\lambda$ for all \glspl{smto} in the following grid: $\lambda \in \{0, 10^{-4}, 10^{-3}\}$. 
The best validation performance was attained with $\lambda=10^{-3}$ for unitary scalarization, IMTL and PCGrad, and with $\lambda=10^{-4}$ for MGDA, GradDrop, and RLW. Validation performance was further stabilized by the addition of several dropout layers (see Figure~\ref{fig:celeba-unreg-validation}), with dropout probabilities from $0.25$ to $0.5$. 
We present an ablation study on the effect of regularization on this experiment in $\S$\ref{sec:reg-ablation}. Figure~\ref{fig:celeba-unreg-val-all} (appendix~\ref{sec:unreg}) shows that regularization improves the peak average validation performance for all the considered methods.
Analogously to our Multi-MNIST results, Figure~\ref{fig:celeba} plots the distribution of the training time per epoch, and the average test task accuracy. As with Multi-MNIST, the test model for each run was the one with maximal average validation task accuracy across epochs. 
In other words, if the peak is attained before the last epoch, we perform early stopping: as shown in Figure \ref{fig:celeba-plot} in appendix~\ref{sec:supp-sl-experiments} this is the case for most methods.
Due to the large number of tasks, Figure \ref{fig:celeba-time} shows relatively large runtime differences across methods. PCGrad is the slowest (roughly $35$ times slower than unitary scalarization). In fact, amongst the considered algorithms, it is the only one that computes per-task gradients over the parameters ($\nabla_{\thetab} \mathcal{L}_i \ \forall i \in \mathcal{T}$) at each iteration. GradDrop, MGDA and IMTL have overhead factors (compared to unitary scalarization) ranging from roughly $1.05$ to $2.4$ due to the relatively small size of $\zb$ for the employed architecture. The overhead of RLW is negligible: roughly $5\%$.
Nevertheless, due to largely overlapping confidence intervals in Figure \ref{fig:celeba-test}, none of the methods consistently outperforms unitary scalarization. In fact, owing to our adoption of explicit regularization techniques (see $\S$\ref{sec:reg-ablation}) its average performance is superior to that
reported in the literature~\citep{Sener2018,Liu2021}.

\subsubsection{Cityscapes} \label{sec:cityscapes-experiments}
\begin{figure}[t!]
	\begin{subfigure}{0.49\textwidth}
		\centering
		\includegraphics[width=\textwidth]{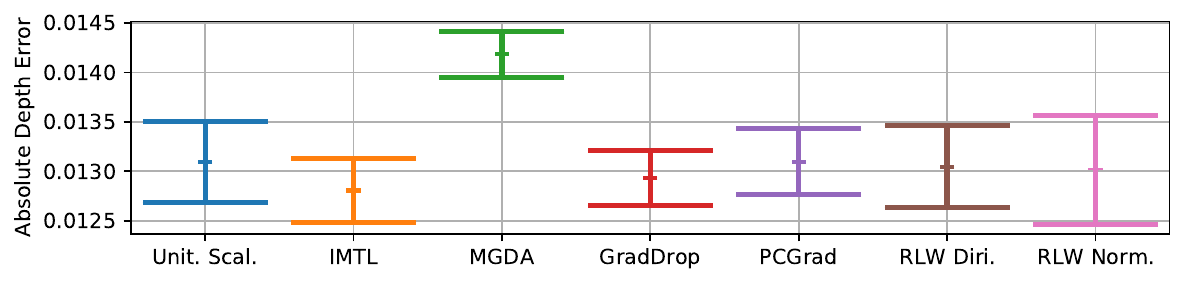}
		\vspace{-18pt}
		\caption{Absolute depth test error: lower is better.}
		\label{fig:cityscapes-best-absdepth}
	\end{subfigure}
	\begin{subfigure}{0.49\textwidth}
		\centering
		\includegraphics[width=\textwidth]{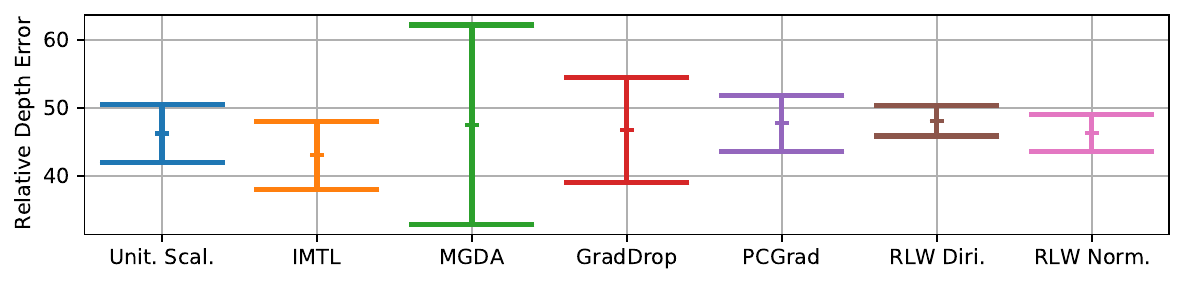}
		\vspace{-18pt}
		\caption{Relative depth test error: lower is better.}
		\label{fig:cityscapes-best-reldepth}
	\end{subfigure}
	\begin{subfigure}{0.49\textwidth}
		\centering
		\includegraphics[width=\textwidth]{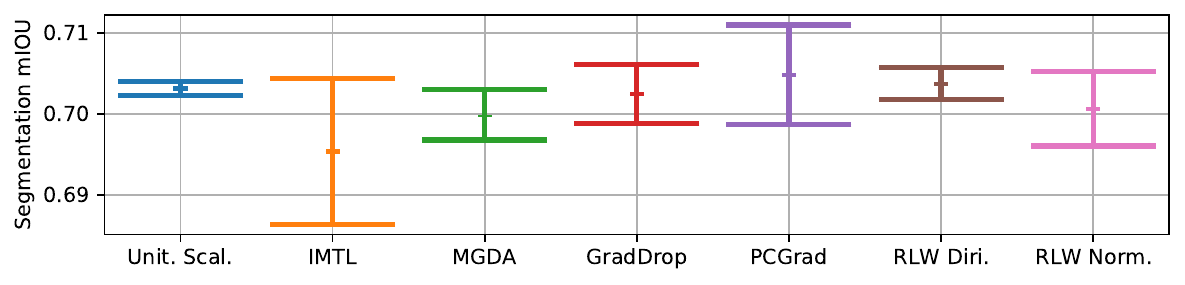}
		\vspace{-18pt}
		\caption{Test segmentation mIOU: higher is better.}
		\label{fig:cityscapes-best-segmIOU}
	\end{subfigure}
	\begin{subfigure}{0.49\textwidth}
		\centering
		\includegraphics[width=\textwidth]{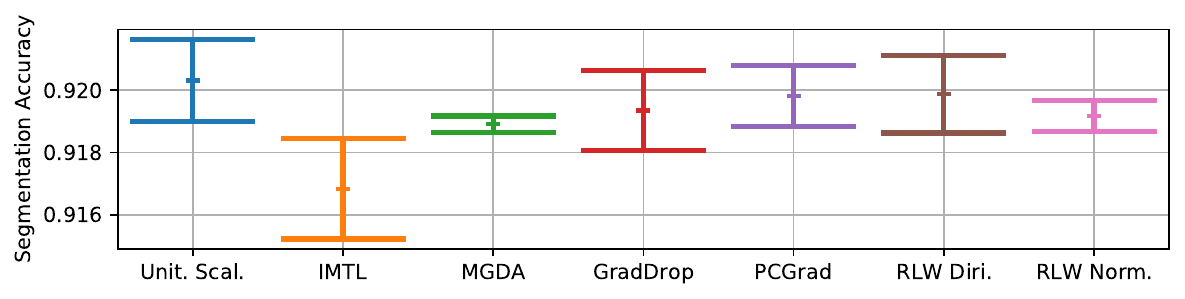}
		\vspace{-18pt}
		\caption{Test segmentation accuracy: higher is better.}
		\label{fig:cityscapes-best-segacc}
	\end{subfigure}
	\centering
	\begin{subfigure}{0.49\textwidth}
		\vspace{-10pt}
		\centering
		\includegraphics[width=\textwidth]{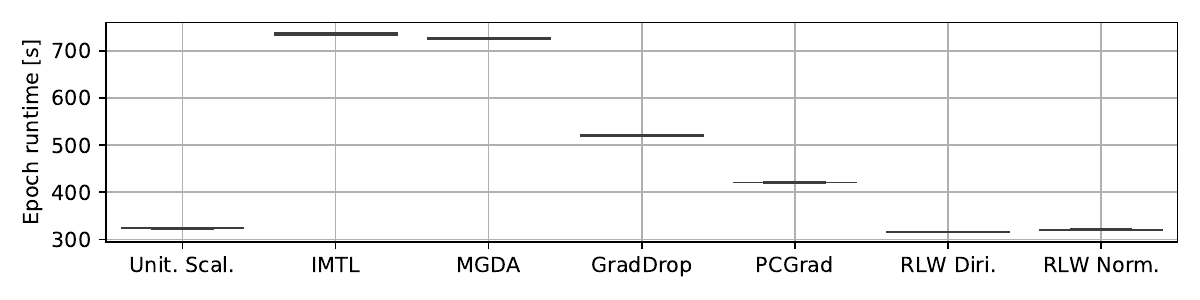}
		\vspace{-18pt}
		\caption{Box plots for the training time of an epoch (10 runs).}
		\label{fig:cityscapes-time}
	\end{subfigure}
	\hspace{13pt}
	\begin{minipage}{0.45\textwidth}
		\vspace{7pt}
		\caption{
			\label{fig:cityscapes} On Cityscapes, none of the \glspl{smto} outperforms unitary scalarization, which proves to be the most cost-effective algorithm.
			Subfigures \subref{fig:cityscapes-best-absdepth}-\subref{fig:cityscapes-best-segacc} report means for three runs, and their 95$\%$ CIs.
		}
	\end{minipage}
	\vspace{-20pt}
\end{figure}
In order to complement the multi-task classification experiments for Multi-MNIST and CelebA, we present results for Cityscapes~\citep{Cityscapes}, a dataset for semantic understanding of urban street scenes. We rely on a common encoder architecture from the literature~\citep{Liu2021,rlw} (see appendix~\ref{sec:sl-setup}),
with a single dropout layer in the task-specific heads~\citep{rlw}.
As for CelebA, unitary scalarization, IMTL, and PCGrad benefit from more regularization than the other optimizers: we employ $\lambda = 10^{-5}$ for these three algorithms, as it resulted in better validation performance on the majority of metrics, and $\lambda=0$ for the remaining methods.
Cityscapes is a heterogeneous \gls{mtl} problem: it contains tasks of different types whose validation metrics cannot be averaged to perform model selection.
Considering the lack of an established procedure in this context, we potentially evaluate a different model for each metric, chosen as the one with the best (maximal or minimal, depending on the metric) validation performance across epochs (we perform per-run early stopping). 
This procedure maximizes per-task performance, at the cost of increased inference time.
If inference time is a priority, an alternative model selection procedure could rely on relative task improvement~\citep{javaloy2022rotograd, Navon2022, liu2021conflict}, assuming that per-metric improvements are to be weighted linearly.
Nevertheless, any consistently applied model selection scheme serves the main goal of our work: evaluating all~\glspl{smto} on a fair ground.
Figure~\ref{fig:cityscapes} shows test results for two metrics per task, and the distribution of the training time per epoch. 
As with Multi-MNIST and CelebA, no training algorithm clearly outperforms unitary scalarization (significant overlaps across confidence intervals exist), which is again the least expensive method. 
In contrast with a popular hypothesis~\citep{Kendall2017,Chen2018,Liu2021}, this holds in spite of relatively large loss imbalances. In fact, the loss for the depth task is roughly $10$ times smaller than that of the segmentation task: see figures \ref{fig:cityscapes-plot-deploss}-\ref{fig:cityscapes-plot-segloss}.
Unlike CelebA (see Figure~\ref{fig:celeba-time}), IMTL, MGDA and GradDrop are significantly slower than unitary scalarization (factors from $1.6$ to $2.3$), due to the relatively (compared to the parameter space) large size of $\zb$ in the employed architecture.
PCGrad, instead, appears to be less expensive ($30\%$ more than the baseline), demonstrating the benefits of working on $\nabla_{\thetab} \mathcal{L}_i$ on this model.

\subsection{Reinforcement Learning}
\label{sec:rl-experiments}

For~\gls{rl} experiments, we use Meta-World~\citep{Yu2019} and the Soft Actor-Critic~\citep{DBLP:conf/icml/HaarnojaZAL18} implementation from~\citep{Sodhani2021}.
Unlike $\S$\ref{sec:sl-experiments}, the employed network architecture (see appendix \ref{sec:sl-setup}) is fully shared across tasks. Therefore, all \gls{smto} implementations for these experiments rely on per-task gradients with respect to network parameters $\nabla_{\thetab} \mathcal{L}_i$ (see $\S$\ref{sec:critical-analysis}).
Among the \glspl{smto} we consider, PCGrad is the only one developed with the~\gls{rl} setting in mind. 
For fairness and completeness, we add all the other~\glspl{smto} from the supervised learning experiments, and are the first to test these optimizers in the~\gls{rl} setting.
To stabilize learning, we increase the replay buffer size, a 
well known technique in single-task~\gls{rl}, add actor $l_2$ regularization, and modify the reward normalization employed by~\citet{Sodhani2021}. 
The unitary scalarization performance reported by \citet{Yu2020} is considerably lower than that of~\citet{Sodhani2021}, which we believe is due to the lack of reward normalization in the former.
\citet{Sodhani2021} keep a moving average of rewards in the environment, with a hyperparameter controlling the speed of the moving average.
As we show in Figure~\ref{fig:rewnorm-sensitivity}, the learning algorithm is sensitive to that hyperparameter.
Moreover, such normalization might make similar transitions have drastically different rewards stored in the replay buffer.
To alleviate these issues, we store the raw rewards in the buffer, and normalize only when a mini-batch is sampled.

\begin{figure*}[t!]
	\begin{subfigure}{0.49\textwidth}
		\centering
		\includegraphics[width=\textwidth]{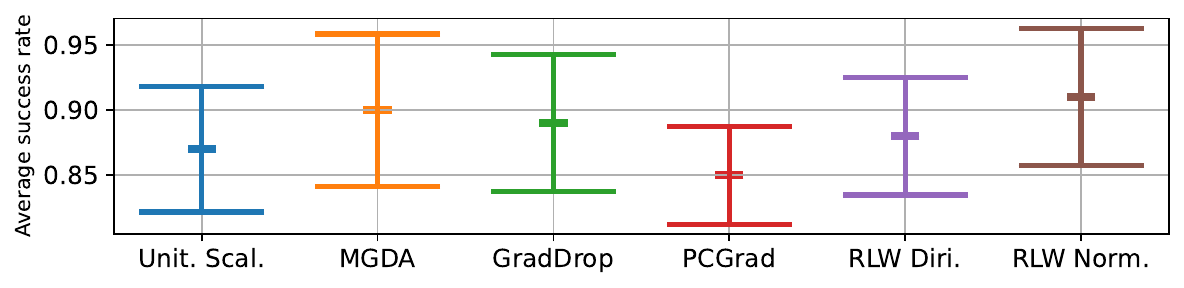}
		\vspace{-18pt}
		\caption{\label{fig:mt10-success} MT10 (10 repetitions).}
	\end{subfigure}\hspace{5pt}
	\begin{subfigure}{0.49\textwidth}
		\centering
		\includegraphics[width=\columnwidth]{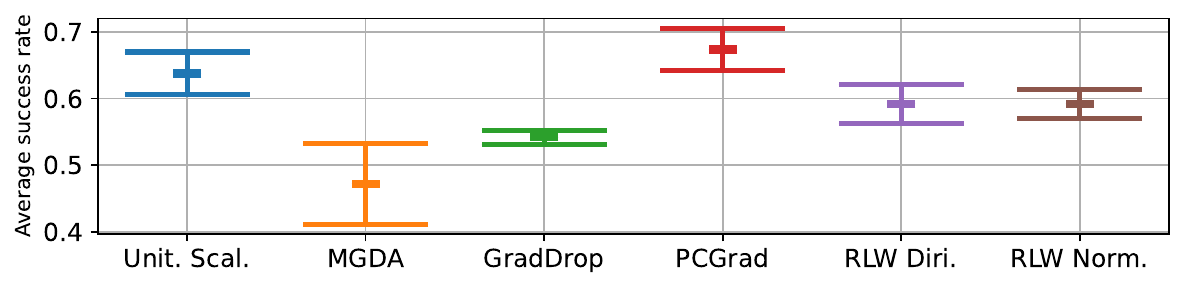}
		\vspace{-18pt}
		\caption{\label{fig:mt50-success} MT50 (10 repetitions).}
	\end{subfigure}
	\\
	\begin{subfigure}{0.49\textwidth}
		\centering
		\includegraphics[width=\textwidth]{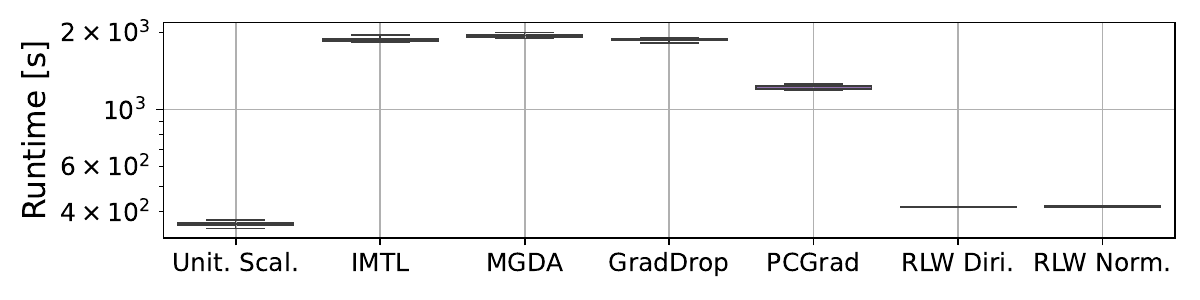}
		\vspace{-18pt}
		\caption{\label{fig:mt10-runtime} MT10 (10 repetitions).}
	\end{subfigure}\hspace{5pt}
	\begin{subfigure}{0.49\textwidth}
		\centering
		\includegraphics[width=\textwidth]{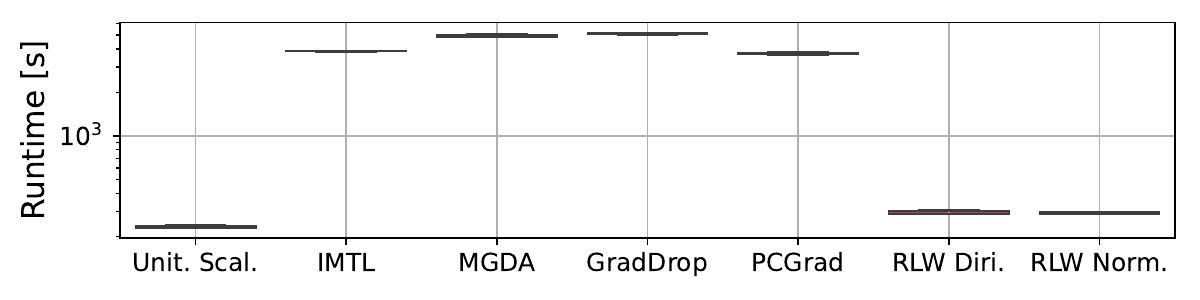}
		\vspace{-18pt}
		\caption{\label{fig:mt50-runtime} MT50 (10 repetitions).}
	\end{subfigure}
	\vspace{-5pt}
	\caption{\label{fig:mt} 
		On Metaworld, none of the \glspl{smto} significantly outperforms Unit. Scal., which is the least expensive method.
		Subfigures \subref{fig:mt10-success}-\subref{fig:mt50-success} report mean and 95$\%$ CI for the best (over the updates) average success rate. Subfigures \subref{fig:mt10-runtime}-\subref{fig:mt50-runtime} show box plots for the training time of 10,000 updates.  }
	\vspace{-10pt}
\end{figure*}

Figure~\ref{fig:mt} reports the best average success rate across the updates and the runtime for 10,000 updates. 
In addition to these summary statistics, reported for consistency with $\S$\ref{sec:sl-experiments}, the learning curves are shown in appendix~\ref{sec:supp-rl-experiments}.
Our MT10 (10 tasks) results in Figure~\ref{fig:mt10-success} show that by stabilizing the baseline using standard~\gls{rl} techniques, unitary scalarization performs on par with other~\glspl{smto}, mirroring our findings in~$\S$\ref{sec:sl-experiments}.
This is in contrast with the previous literature, which reported that PCGrad outperforms unitary scalarization~\citep{Yu2020, Sodhani2021}.
Figure~\ref{fig:mt50-success} presents results on MT50 (50 tasks): similarly to MT10, none of the~\glspl{smto} significantly outperforms unitary scalarization, with PCGrad's average being slightly above unitary scalarization.
We speculate that the stochastic loss rescaling performed by PCGrad (see Proposition \ref{prop:pcgrad}) reduces the differences in task return scales, and expect that methods like PopArt~\citep{van2016learning} would have a similar effect without requiring access to per-task gradients.
While we did not tune hyperparameters for MT50 (we employed those found for MT10), it would be much easier to do that for unitary scalarization due to its lower runtime. In fact, Figure~\ref{fig:mt50-runtime} shows that a single unitary scalarization run takes roughly 15 hours, whereas PCGrad, MGDA and GradDrop require more than a week.
Similarly to MT10, actor regularization pushes the average performance of unitary scalarization higher (see in appendix~\ref{sec:rl-ablations}).
Overall, as in the supervised learning setting, unitary scalarization performs comparably to~\glspl{smto} despite being simpler and less demanding in both memory and compute.
IMTL was unstable on this \gls{rl} benchmark and all of the runs crashed due to numerical overflow. We hence omit IMTL results from the main body of the paper and show its results in Figure~\ref{fig:mt-with-imtl} in appendix~\ref{sec:supp-rl-experiments}, which also describes a possible explanation.
We hypothesize that the instability of IMTL is due to lack of bounds on scaling coefficients. 
See appendix~\ref{ref:rl-hyperparams} for hyperparameter settings and ablation studies.

\section{Regularization in Specialized Multi-Task Optimizers}
\label{sec:critical-analysis}

The empirical results presented in $\S$\ref{sec:experiments} motivate the need to carefully analyze existing \glspl{smto}. We make an initial attempt in this direction by viewing their effects through the lens of regularization.
Let us define a regularizer as a technique to reduce overfitting~\citep{Dietterich1995}.
We first show that the \glspl{smto} considered in~$\S$\ref{sec:experiments} empirically act as regularizers via an ablation study ($\S$\ref{sec:reg-ablation}). We then take a closer look at their behavior, presenting technical results that support their alternative interpretation as regularizers ($\S$\ref{sec:technical-reg}). Finally, $\S$\ref{sec:norm-sum-grad} provides additional empirical backing for some of the technical results.
Unless otherwise stated, we assume that \gls{mtl} methods apply only to $\thetashb$ and that standard gradient-based updates are employed for tasks-specific parameters $\thetaspb$. 
We furthermore adopt the following shorthands: $\mathcal{L}_i (\thetab)$ for $\mathcal{L}_i (f(\thetab, X, i), Y)$, and $\nabla_{\thetab} \mathcal{L}_i$ for $\nabla_{\thetab}\mathcal{L}_i (f(\thetab, X, i), Y)$.

\subsection{Ablation Study} \label{sec:reg-ablation}

\begin{figure}[!b]
	\begin{minipage}{.49\textwidth}
		\centering
		\includegraphics[width=\columnwidth]{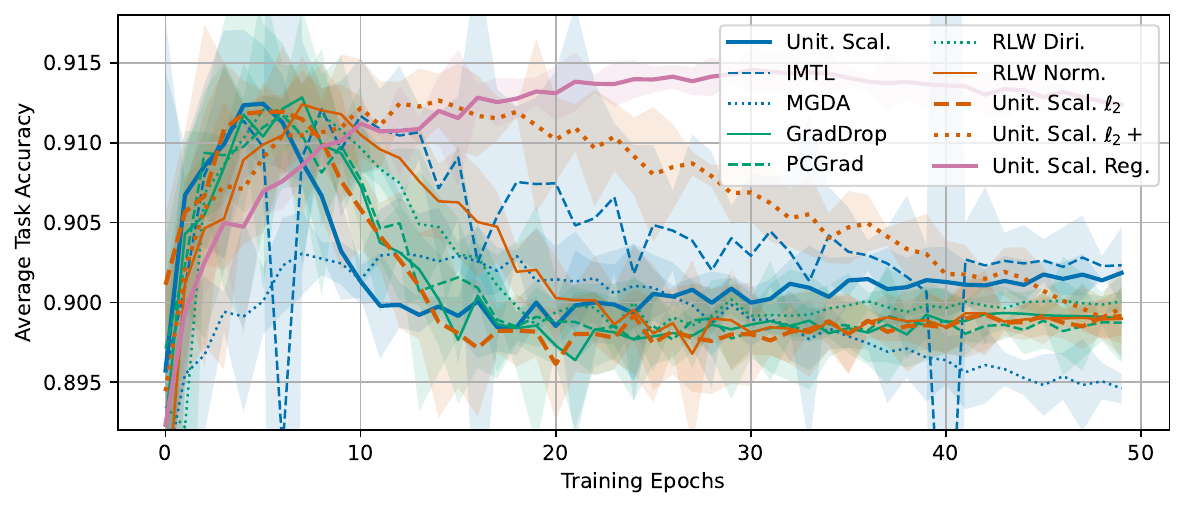}
		\vspace{-8pt} 
		\caption{Mean and 95$\%$ CI (3 runs) avg. task validation accuracy over epochs on CelebA. 
			\glspl{smto} postpone the onset of overfitting, mirroring the effect of $\ell_2$ regularization on unitary~scalarization. 
		}
		\label{fig:celeba-unreg-validation}
		\vspace{-10pt}
	\end{minipage} \hspace{5pt}
	\begin{minipage}{.49\textwidth}
		\centering
		\includegraphics[width=\columnwidth]{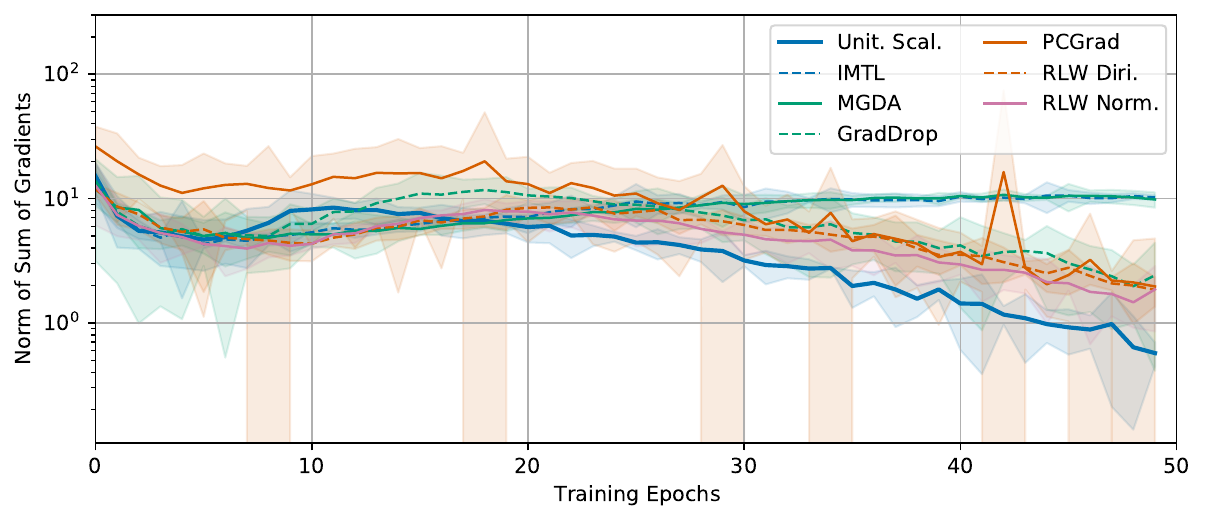}
		\vspace{-8pt} 
		\caption{
			 Mean and $95\%$ CI (3 runs) for $\left\lVert\sum_{i \in \mathcal{T}}  \nabla_{\thetashb} \mathcal{L}_i\right\rVert_2$ on CelebA.
			 \gls{mgda} and \gls{imtl}  converge away from stationary points of unitary scalarization, indicating under-optimization.
			\label{fig:norm-sum-grad}}
		\vspace{-10pt}
	\end{minipage}
\end{figure}

We repeat the experiment from $\S$\ref{sec:celeba-experiments} and remove explicit regularization: no dropout layers are added to the encoder-decoder architecture, and $\lambda=0$ for all optimizers. 
In addition, we examine the behavior of two different $\ell_2$-regularized instances of unitary scalarization:
$\lambda=10^{-4}$ for ``Unit.\ Scal.\ $\ell_2$'', $\lambda=2 \times 10^{-3}$ for ``Unit.\ Scal.\ $\ell_2+$''. 
Figure~\ref{fig:celeba-unreg-validation} shows that \glspl{smto} behave similarly to an $\ell_2$-penalized unitary scalarization. 
Importantly, \glspl{smto} delay overfitting, requiring less early stopping compared to unitary scalarization to obtain comparable performance.
In other words, early stopping is sufficient  for unitary scalarization to perform on par with \glspl{smto}.
Moreover, overfitting is further reduced by ``Unit.\ Scal.\ Reg.'', which plots the regularized unitary scalarization from $\S$\ref{sec:celeba-experiments}, with dropout layers and a weight decay of $\lambda=10^{-3}$.
Finally, Figure~\ref{fig:celeba-unreg-loss} shows that unregularized unitary scalarization and most \glspl{smto} rapidly drive the training loss of each task towards its global optimum. This suggests that the main difficulty of \gls{mtl} is not associated with the optimization of its training objective, but rather to incorporating adequate~regularization.
Additional results are presented in appendix~\ref{sec:unreg}.

\subsection{Technical Results} \label{sec:technical-reg}

\looseness=-1
All the methods considered in $\S$\ref{sec:reg-ablation} regularize more than unitary scalarization. 
While RLW was shown to reduce overfitting by the original authors~\citep[theorem 2]{rlw}, we now provide a collection of novel and existing technical results that potentially explain the regularizing behavior of each of the other algorithms, complementing the presentation from $\S$\ref{sec:setting}.
In particular, we show that \gls{mgda}, \gls{imtl} and PCGrad have a larger convergence set than unitary scalarization, reducing the chances to land on sharp local minima~\citep{Dietterich1995}. Furthermore, GradDrop and PCGrad introduce significant stochasticity, which is often linked to the same effect~\citep{Keskar2017,Kleinberg2018}. We hope these observations will steer further research. 

\paragraph{MGDA}
Let us denote the convex hull of a set $\mathcal{A}$ by $\conv(\mathcal{A})$. We now recall a well-known property of MGDA~\citep{Desideri2012} and relate it to the behavior of unitary scalarization. 
\begin{restatable}{proposition}{mgda}
	\looseness=-1
	The \gls{mgda}~\gls{smto}~\citep{Sener2018} converges to a superset of the convergence points of unitary scalarization. More specifically, it converges to any point $\thetashb^*$ such that: $\mbf{0} \in \conv(\{ \nabla_{\thetashb^*} \mathcal{L}_i \ |\ i \in \mathcal{T}  \})$.
	\label{prop:mgda}
\end{restatable}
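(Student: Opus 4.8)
The plan is to unpack the two containments hidden in the statement: \textbf{(i)} every limit point of unitary scalarization satisfies $\mathbf{0} \in \conv(\{\nabla_{\thetashb^*} \mathcal{L}_i \mid i \in \mathcal{T}\})$, and \textbf{(ii)} every point obeying this condition is a fixed point of \gls{mgda} to which its iterates provably converge. Part \textbf{(ii)} is essentially the classical characterization of Pareto-stationarity underlying \gls{mgda}~\citep{Fliege2000,Desideri2012}, so the real work is to phrase both optimizers in a common language and then apply a short convex-combination observation.

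First I would recall that unitary scalarization is plain gradient descent on $\mathcal{L}^{\text{MT}} = \sum_{i \in \mathcal{T}}\mathcal{L}_i$, so — under the standard assumptions making ``convergence'' meaningful for gradient descent (e.g. bounded sublevel sets, Lipschitz gradients, a suitable step size) — its limit points $\thetashb^*$ are stationary: $\sum_{i \in \mathcal{T}}\nabla_{\thetashb^*}\mathcal{L}_i = \mathbf{0}$. Dividing by $m = |\mathcal{T}|$ rewrites $\mathbf{0}$ as the uniform convex combination $\sum_i \tfrac{1}{m}\nabla_{\thetashb^*}\mathcal{L}_i$, hence $\mathbf{0} \in \conv(\{\nabla_{\thetashb^*}\mathcal{L}_i \mid i \in \mathcal{T}\})$. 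This shows the convergence set of unitary scalarization sits inside the set named in the proposition.

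Next I would analyze \gls{mgda} through its dual \cref{eq:dual-mgda}: the step direction is $\gb = -\sum_i \alpha_i \nabla_{\thetashb}\mathcal{L}_i$, where $(\alpha_i)$ minimizes $\lVert\sum_i \alpha_i \nabla_{\thetashb}\mathcal{L}_i\rVert_2^2$ over the simplex, i.e. $-\gb$ is the minimum-norm element of $\conv(\{\nabla_{\thetashb}\mathcal{L}_i\})$. \gls{mgda} is at a fixed point exactly when $\gb = \mathbf{0}$, which happens precisely when that minimum-norm element vanishes, i.e. when $\mathbf{0} \in \conv(\{\nabla_{\thetashb}\mathcal{L}_i\})$ — the standard definition of Pareto-stationarity, to which \citet{Desideri2012} show the iterates converge. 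Combined with the previous paragraph, this gives the claimed superset relation; I would also note the inclusion is generally strict (for two tasks, a point with $\nabla_{\thetashb^*}\mathcal{L}_1 = -2\,\nabla_{\thetashb^*}\mathcal{L}_2 \neq \mathbf{0}$ is Pareto-stationary but not a stationary point of the sum).

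The one bookkeeping wrinkle I expect to be the main (but minor) obstacle is the gradient rescaling used in the authors' implementation, $\nabla_{\thetashb}\mathcal{L}_i \leftarrow \nabla_{\thetashb}\mathcal{L}_i / (\lVert\nabla_{\thetashb}\mathcal{L}_i\rVert\,\mathcal{L}_i)$: since this multiplies each generator of the convex hull by a strictly positive scalar, and $\mathbf{0} \in \conv(\{v_i\})$ iff $\mathbf{0} \in \conv(\{c_i v_i\})$ for any $c_i > 0$ (rescale the convex weights by $1/c_i$ and renormalize), the stopping set is unchanged, and \citet{Desideri2012} guarantee convergence is preserved. Apart from this and the need to state the regularity assumptions under which ``converges'' is well-defined for both methods, the argument reduces to the short observation that $\sum_i \nabla_{\thetashb^*}\mathcal{L}_i = \mathbf{0}$ is a special, equal-weighted case of $\mathbf{0} \in \conv(\{\nabla_{\thetashb^*}\mathcal{L}_i\})$.
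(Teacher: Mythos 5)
Your argument is correct and follows essentially the same route as the paper's proof: both identify the MGDA fixed points via the dual \cref{eq:dual-mgda} as the points where the minimum-norm element of $\conv(\{\nabla_{\thetashb}\mathcal{L}_i\})$ vanishes, and both observe that a stationary point of $\mathcal{L}^{\text{MT}}$ is the special case of a uniform convex combination equal to $\mbf{0}$. Your additional remarks on the gradient rescaling and the strictness of the inclusion are sound but go beyond what the paper's proof records (the paper handles the normalization separately in the main text by citing \citet{Desideri2012}).
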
 \vspace{-7pt}
See appendix~\ref{sec:supp-mgda} for a simple proof. As a consequence of Proposition~\ref{prop:mgda}, \gls{mgda} does not necessarily reach a stationary point for $\mathcal{L}^{\text{MT}}$ (that is, a point for which $\sum_{i \in \mathcal{T}} \nabla_{\thetashb} \mathcal{L}_i = \mbf{0} $) or for any of the losses $\mathcal{L}_i$ ($\nabla_{\thetashb} \mathcal{L}_i = \mbf{0} $). For example, any point $\thetashb$ for which two per-task gradients point in opposite directions is Pareto stationary. %
On account of the well-known~\citep{Dietterich1995} relationship between under-optimizing (e.g., early stopping~\citep{Caruana2000,Li2020}) and overfitting, proposition~\ref{prop:mgda} supports the interpretation of \gls{mgda} %
 as a regularizer for equation \cref{eq:unit-scalarization}. 
Empirical evidence that MGDA under-optimizes is provided in $\S$\ref{sec:norm-sum-grad}, Figure~\ref{fig:celeba-unreg-loss}, and Figure~\ref{fig:celeba-unreg-validation}, which shows over-regularization.
Proposition~\ref{prop:mgda} can be trivially extended to the recent Nash-MTL, which shares the same convergence set \citep[Theorem 5.4]{Navon2022}.

\paragraph{IMTL}
We now show that aggregating per-task gradients so that their cosine similarity is the same (equation \cref{eq:imtlg-goal}) yields a constrained steepest-descent algorithm (Proposition \ref{prop:imtl}). This view on the update step of \gls{imtl} leads to a novel analysis of its convergence points (corollary \ref{cor:imtl}). Proofs can be found in appendix~\ref{sec:supp-imtl}. We will denote by $\aff(\mathcal{A})$ the affine hull of a set $\mathcal{A}$.
\begin{restatable}{proposition}{imtl}
	IMTL by \citet{Liu2021} updates $\thetashb$ by taking a step in the \emph{steepest descent direction whose} cosine similarity with per-task gradients is the same across tasks.
	\label{prop:imtl}
\end{restatable}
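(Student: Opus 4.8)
The plan is to show that the closed-form update direction $\gb$ defined by the system \cref{eq:imtlg-goal} coincides with the solution of a constrained steepest-descent problem over the subspace (or affine set) spanned by the per-task gradients. Concretely, the steepest descent direction at $\thetashb$ is the unit-norm (or fixed-norm) direction minimizing the worst-case directional derivative; restricting the feasible directions to lie in $\mathrm{span}\{\nabla_{\thetashb}\mathcal{L}_i\}$ — equivalently, requiring $\gb = -\sum_i \alpha_i \nabla_{\thetashb}\mathcal{L}_i$ — and then imposing that the directional derivative along each normalized task gradient be equal gives exactly the IMTL-G conditions. So first I would write down the steepest-descent program: minimize $\max_{i\in\mathcal{T}} \left( \nabla_{\thetashb}\mathcal{L}_i^T \gb / \lVert \nabla_{\thetashb}\mathcal{L}_i\rVert \right)$ over $\gb$ with $\lVert\gb\rVert$ fixed and $\gb \in \mathrm{span}\{\nabla_{\thetashb}\mathcal{L}_i : i\in\mathcal{T}\}$.

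Second, I would argue that at the optimum all the normalized directional derivatives $\nabla_{\thetashb}\mathcal{L}_i^T\gb / \lVert\nabla_{\thetashb}\mathcal{L}_i\rVert$ are equal. This is the usual minimax equalization argument: if some task's normalized directional derivative were strictly larger (less negative) than another's, one can perturb $\gb$ within the span — adding a small multiple of the difference of the two corresponding unit gradients, then renormalizing — to strictly decrease the max, contradicting optimality. One must be slightly careful that the perturbation stays in the span and that renormalization does not undo the gain to first order; both are routine since renormalization is a second-order effect. The equalization condition together with $\gb = -\sum_i\alpha_i\nabla_{\thetashb}\mathcal{L}_i$ is precisely \cref{eq:imtlg-goal} once we also pin down the scaling; the constraint $\sum_i\alpha_i = 1$ in \cref{eq:imtlg-goal} is the normalization that makes the solution for the \emph{direction} unique (the magnitude of $\gb$ being irrelevant to ``direction''), matching \citet{Liu2021}'s choice to remove the underdetermination.

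Third, I would confirm the converse: any $\gb$ satisfying the equalization condition with coefficients summing to one is the steepest-descent direction in the span (up to positive scaling), so the two characterizations genuinely agree rather than one merely implying the other. Here I would invoke that the feasible set is a finite-dimensional subspace and the objective is a pointwise max of linear functions, hence convex, so first-order/minimax optimality conditions are also sufficient; the equalization condition is exactly the statement that the zero subgradient / saddle condition holds.

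The main obstacle I anticipate is pinning down the precise sense in which ``steepest descent'' is meant here — in particular whether the minimization is over directions in the gradient span, over all of $\mathbb{R}^S$, or over some normalized set — because \citet{Liu2021} present IMTL-G only via the algebraic equalization condition \cref{eq:imtlg-goal}, not as an optimization problem, so the proposition's content is really the claim that this condition \emph{is} a (constrained) steepest-descent optimality condition. Once the right formulation is fixed (I believe it is: steepest descent among directions expressible as convex-affine combinations $-\sum_i\alpha_i\nabla_{\thetashb}\mathcal{L}_i$ with $\sum_i\alpha_i=1$), the rest is the standard minimax equalization lemma plus a degenerate-case check (e.g., when some $\nabla_{\thetashb}\mathcal{L}_i = \mbf{0}$ or the gradients are linearly dependent, where ``$\gb$'' may need to be read as the minimum-norm solution of the linear system); I would handle these by remarking that they reduce to the affine-hull description, which is why the corollary is stated in terms of $\aff(\cdot)$.
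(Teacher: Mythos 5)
There is a genuine gap: your central step, the minimax equalization argument, derives the wrong optimizer. Minimizing $\max_{i}\,\nabla_{\thetashb}\mathcal{L}_i^T\gb/\lVert\nabla_{\thetashb}\mathcal{L}_i\rVert$ over a norm ball (with or without the span restriction) is precisely the MGDA-type steepest common descent problem, whose dual is the projection of $\mbf{0}$ onto the \emph{convex} hull of the normalized gradients; its optimality conditions equalize only the \emph{active} directional derivatives and force the weights $\alpha_i$ to be non-negative, with $\alpha_i=0$ for inactive tasks. For $|\mathcal{T}|>2$ the minimax optimum therefore need not have all cosine similarities equal (take three unit gradients in the plane, two of them nearly antipodal: the closest point of their convex hull to the origin lies on an edge, the third task is inactive, and its normalized directional derivative is strictly smaller). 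IMTL is different precisely because the equal-cosine-similarity condition is an \emph{equality constraint} of the steepest-descent program, as in \cref{eq:primal-imtlg}, not a consequence of optimality of an unconstrained minimax; this is why its dual \cref{eq:dual-imtlg} projects onto the \emph{affine} hull and the weights may be negative, which is the whole point of Corollary~\ref{cor:imtl}. Your perturbation argument, if carried out, would prove Proposition~\ref{prop:mgda} with normalized gradients, not Proposition~\ref{prop:imtl}.

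You also skip a step the paper needs: the system \cref{eq:imtlg-goal} expresses $\gb$ as an affine combination of the \emph{unnormalized} gradients, while the cosine-similarity condition involves the \emph{normalized} ones, so one must show that a point of $\aff(\{\nabla_{\thetashb}\mathcal{L}_i\})$ orthogonal to $\aff(\{\nabla_{\thetashb}\mathcal{L}_i/\lVert\nabla_{\thetashb}\mathcal{L}_i\rVert\})$ is collinear with the projection of $\mbf{0}$ onto the latter set (the paper's rescaling by $\gamma=1/\sum_i\alpha_i\lVert\nabla_{\thetashb}\mathcal{L}_i\rVert$). The correct route, which the paper takes, is: (i) identify \cref{eq:imtlg-goal} as that orthogonality condition; (ii) establish the collinearity with the affine-hull projection \cref{eq:dual-imtlg}; (iii) recognize \cref{eq:dual-imtlg} as the dual of \cref{eq:primal-imtlg}, the equality-constrained analogue of \cref{eq:primal-mgda}, which is what the proposition means by ``steepest descent direction \emph{whose} cosine similarity with per-task gradients is the same.'' You correctly sensed that the ambiguity lies in what ``steepest descent'' means here, but resolved it in the direction that collapses IMTL onto MGDA.
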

\begin{restatable}{corollary}{imtlcor}
	\gls{imtl} by \citet{Liu2021} converges to a superset of the Pareto-stationary points for $\thetashb$ (and hence of the convergence points of the unitary scalarization). 
	More specifically, it converges to any point $\thetashb^*$ such that: $\begin{array}{l}	
	\mbf{0} \in \aff\left(\left\{ \nicefrac{\nabla_{\thetashb^*} \mathcal{L}_i}{\left\lVert\nabla_{\thetashb^*} \mathcal{L}_i\right\rVert} \ |\ i \in \mathcal{T}  \right\}\right).\end{array}$
	\label{cor:imtl}
\end{restatable}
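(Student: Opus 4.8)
The plan is to prove Corollary~\ref{cor:imtl} from the steepest-descent characterization of Proposition~\ref{prop:imtl}, rather than from the closed-form IMTL-G direction: the latter is obtained by inverting a matrix built from (normalized) per-task gradient differences, which is singular precisely when two per-task gradients are parallel — a configuration the corollary must accommodate, since some such points are Pareto-stationary. By Proposition~\ref{prop:imtl}, at a point $\thetashb$ IMTL moves along the steepest descent direction among all $\gb$ whose cosine similarity with every $\nabla_{\thetashb}\mathcal{L}_i$ is the same. As in the reading of Pareto stationarity as the lack of a shared descent direction that underlies Proposition~\ref{prop:mgda}, $\thetashb^*$ is then a convergence point of IMTL exactly when no such direction is a descent direction; that is, when there is no $\gb \neq \mbf{0}$ with $\cos(\gb, \nabla_{\thetashb^*}\mathcal{L}_i)$ equal across all $i \in \mathcal{T}$ and strictly negative.

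I would first recast this as a subspace condition. Writing $\tilde{g}_i := \nicefrac{\nabla_{\thetashb^*}\mathcal{L}_i}{\lVert \nabla_{\thetashb^*}\mathcal{L}_i \rVert}$ (all per-task gradients being nonzero, as implicit in the statement) and $W := \operatorname{span}\{\tilde{g}_i - \tilde{g}_j : i,j \in \mathcal{T}\} = \operatorname{span}\{\tilde{g}_1 - \tilde{g}_i : i \in \mathcal{T}\setminus\{1\}\}$, and using $\cos(\gb, \nabla_{\thetashb^*}\mathcal{L}_i) = \nicefrac{\tilde{g}_i^T\gb}{\lVert\gb\rVert}$, the nonzero directions with equal cosine similarities to all gradients are exactly those in $W^\perp$, along which the common value $\tilde{g}_i^T\gb$ equals $\tilde{g}_1^T\gb$. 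Hence a fair descent direction exists iff the orthogonal projection of $\tilde{g}_1$ onto $W^\perp$ is nonzero (in which case its negative is such a direction), i.e.\ iff $\tilde{g}_1 \notin W$. A short computation gives $\tilde{g}_1 \in W \iff \mbf{0} \in \aff(\{\tilde{g}_i : i \in \mathcal{T}\})$: if $\tilde{g}_1 = \sum_{i\neq 1} c_i(\tilde{g}_1 - \tilde{g}_i)$ then $(1 - \sum_{i\neq1}c_i)\tilde{g}_1 + \sum_{i\neq1}c_i\tilde{g}_i = \mbf{0}$ is a vanishing affine combination, and conversely any $\sum_i \gamma_i \tilde{g}_i = \mbf{0}$ with $\sum_i \gamma_i = 1$ rearranges to $\tilde{g}_1 = \sum_{i\neq1}\gamma_i(\tilde{g}_1 - \tilde{g}_i) \in W$. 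Combining the two, $\thetashb^*$ is a convergence point of IMTL iff $\mbf{0} \in \aff\bigl(\bigl\{ \nicefrac{\nabla_{\thetashb^*}\mathcal{L}_i}{\lVert\nabla_{\thetashb^*}\mathcal{L}_i\rVert} : i \in \mathcal{T}\bigr\}\bigr)$.

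Next I would check the ``superset'' assertions. If $\thetashb^*$ is Pareto-stationary then $\mbf{0} = \sum_i \lambda_i \nabla_{\thetashb^*}\mathcal{L}_i$ for some $\lambda_i \geq 0$ with $\sum_i \lambda_i = 1$; dividing by the positive scalar $\sum_k \lambda_k \lVert\nabla_{\thetashb^*}\mathcal{L}_k\rVert$ gives $\mbf{0} = \sum_i \mu_i \tilde{g}_i$ with $\mu_i \geq 0$ and $\sum_i \mu_i = 1$, so $\mbf{0} \in \conv(\{\tilde{g}_i\}) \subseteq \aff(\{\tilde{g}_i\})$ and, by the previous paragraph, $\thetashb^*$ is a convergence point of IMTL. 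This in particular covers the stationary points of the unitary scalarization, where $\sum_i \nabla_{\thetashb^*}\mathcal{L}_i = \mbf{0}$ and hence $\tfrac{1}{m}\sum_i \nabla_{\thetashb^*}\mathcal{L}_i \in \conv(\{\nabla_{\thetashb^*}\mathcal{L}_i\})$, so that such points are Pareto-stationary.

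I expect the main obstacle to be the first step — fixing the precise sense of ``IMTL converges'' and arguing it coincides with the absence of a fair (equal-cosine) descent direction. Routing the argument through Proposition~\ref{prop:imtl} rather than the IMTL-G closed form is what keeps this clean: the closed form degenerates at exactly the parallel-gradient configurations, some of which are Pareto-stationary and must belong to the convergence set, whereas the steepest-descent viewpoint handles them uniformly. Everything after that is the routine subspace argument above plus a normalization trick in the same spirit as the one used for Proposition~\ref{prop:mgda}.
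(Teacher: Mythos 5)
Your proof is correct, and it rests on the same two pillars as the paper's: (i) the IMTL update vanishes exactly when $\mbf{0}$ lies in the affine hull of the normalized per-task gradients, and (ii) Pareto-stationary points satisfy this because a suitable convex combination is in particular an affine combination. The route differs, though. The paper reads (i) directly off the dual problem \cref{eq:dual-imtlg} constructed in the proof of proposition~\ref{prop:imtl}: the update is collinear to the projection of $\mbf{0}$ onto $\aff\left(\left\{ \nicefrac{\nabla_{\thetashb} \mathcal{L}_i}{\lVert\nabla_{\thetashb} \mathcal{L}_i\rVert} \ |\ i \in \mathcal{T} \right\}\right)$, which vanishes iff $\mbf{0}$ belongs to that set. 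You instead re-derive the vanishing condition from the statement of proposition~\ref{prop:imtl} alone, via the orthogonal complement of $W = \operatorname{span}\{\tilde g_1 - \tilde g_i\}$; this is essentially the same linear algebra the paper deploys inside the proof of proposition~\ref{prop:imtl}, so your argument is self-contained where the paper's is modular, at the cost of partly redoing that work. Your version buys two things. First, your treatment of (ii) --- rescaling the Pareto-stationarity certificate by the positive scalar $\sum_k \lambda_k \lVert\nabla_{\thetashb^*}\mathcal{L}_k\rVert$ to land in $\conv(\{\tilde g_i\}) \subseteq \aff(\{\tilde g_i\})$ --- is more careful than the paper's, which instead asserts the equivalence $\mbf{0}\in\aff\left(\left\{\nicefrac{\nabla_{\thetashb^*}\mathcal{L}_i}{\lVert\nabla_{\thetashb^*}\mathcal{L}_i\rVert}\right\}\right) \iff \mbf{0}\in\aff\left(\left\{\nabla_{\thetashb^*}\mathcal{L}_i\right\}\right)$; that equivalence can fail in degenerate configurations (e.g.\ two parallel, same-direction gradients of different norms), whereas your one-directional convex-combination argument sidesteps it entirely. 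Second, your opening observation about why one should argue from the steepest-descent characterization rather than the closed-form linear system (which is singular precisely at parallel-gradient configurations that the convergence set must contain) makes explicit a point the paper leaves implicit. Both proofs share the tacit assumption that every $\nabla_{\thetashb^*}\mathcal{L}_i$ is nonzero, which you at least flag.
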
 \vspace{-5pt}
As seen for \gls{mgda}, corollary~\ref{cor:imtl} implies that, even if the employed model~$f$ has the capacity to reach the minimal loss on $\mathcal{L}^{\text{MT}}$, \gls{imtl} may stop before reaching a stationary point. 
Recalling the relationship between under-optimizing and overfitting~\citep{Dietterich1995}, this supports the interpretation of \gls{imtl} as a regularizer for equation \cref{eq:unit-scalarization}.
This is empirically shown in $\S$\ref{sec:norm-sum-grad}, Figures~\ref{fig:celeba-unreg-validation}, \ref{fig:celeba-unreg-loss}. In particular, unitary scalarization reaches the same average performance of \gls{imtl} but requires earlier~stopping. 

\paragraph{PCGrad} 
We provide an alternative characterization of the PCGrad update rule, highlighting its stochasticity in the context of its interpretation as loss rescaling~\citep{Liu2021,rlw}.
See appendix~\ref{sec:supp-pcgrad} for~a~proof. \vspace{-5pt}
\begin{restatable}{proposition}{pcgrad} 
	PCGrad is equivalent to a dynamic, and possibly stochastic, loss rescaling for $\thetashb$. At each iteration, per-task gradients are rescaled as follows:
	$$\begin{array}{l}
		\nabla_{\thetashb} \mathcal{L}_i \leftarrow \left( 1 + \sum_{j \in \mathcal{T} \setminus \{i\}} d_{ji} \right) \nabla_{\thetashb} \mathcal{L}_i, \ d_{ji} \in \smash{\left[0, \frac{\left\lVert\nabla_{\thetashb} \mathcal{L}_j\right\rVert}{\left\lVert\nabla_{\thetashb} \mathcal{L}_i\right\rVert} \right]}.\end{array}$$ 
	Furthermore, if $|\mathcal{T}| > 2$, $d_{ji}$ is a random variable, and the above range contains its~support.
	\label{prop:pcgrad}
\end{restatable}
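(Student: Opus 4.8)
The plan is to unroll the inner loop of the PCGrad update~\cref{eq:pcgrad-update} one task at a time, bookkeep the coefficient with which each per-task gradient is added, and then re-index the outer sum $\gb = -\sum_{i\in\mathcal{T}}\gb_i$ to read off the effective per-task rescaling. Fix a task $j$ and write its inner loop as a sequence of updates $\gb_j \leftarrow \gb_j + \alpha_k\,\nabla_{\thetashb}\mathcal{L}_k$, one for each $k\in\mathcal{T}\setminus\{j\}$ taken in the sampled order, with $\alpha_k = \bigl[-(\gb_j)^{T}\nabla_{\thetashb}\mathcal{L}_k/\lVert\nabla_{\thetashb}\mathcal{L}_k\rVert^2\bigr]_+$ evaluated at the current iterate. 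Since each index $k$ is visited exactly once and its update adds only a multiple of $\nabla_{\thetashb}\mathcal{L}_k$, the output has the form $\gb_j = \nabla_{\thetashb}\mathcal{L}_j + \sum_{k\in\mathcal{T}\setminus\{j\}} c_{jk}\,\nabla_{\thetashb}\mathcal{L}_k$, where $c_{jk} := \alpha_k \ge 0$ is the coefficient produced when $j$'s running gradient is de-conflicted with $\nabla_{\thetashb}\mathcal{L}_k$ (this ``literal'' decomposition is well defined even when the per-task gradients are linearly dependent, so no uniqueness argument is needed). Substituting into $\gb = -\sum_{i}\gb_i$ and collecting the coefficient of $\nabla_{\thetashb}\mathcal{L}_i$ — a $1$ from the base term of $\gb_i$, plus $c_{ji}$ from the $j$-th summand for each $j\ne i$ — gives $\gb = -\sum_{i\in\mathcal{T}}\bigl(1 + \sum_{j\in\mathcal{T}\setminus\{i\}} c_{ji}\bigr)\nabla_{\thetashb}\mathcal{L}_i$, which is precisely the claimed rescaling with $d_{ji} := c_{ji}$; nonnegativity of $d_{ji}$ is immediate.

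For the upper bound I would use that a PCGrad inner step is norm-nonincreasing: if $(\gb_j)^{T}\nabla_{\thetashb}\mathcal{L}_k \ge 0$ the step leaves $\gb_j$ unchanged, and otherwise the new iterate is exactly the component of $\gb_j$ orthogonal to $\nabla_{\thetashb}\mathcal{L}_k$, so $\lVert\gb_j^{\mathrm{new}}\rVert^2 = \lVert\gb_j\rVert^2 - (\gb_j^{T}\nabla_{\thetashb}\mathcal{L}_k)^2/\lVert\nabla_{\thetashb}\mathcal{L}_k\rVert^2 \le \lVert\gb_j\rVert^2$. Hence $\lVert\gb_j\rVert \le \lVert\nabla_{\thetashb}\mathcal{L}_j\rVert$ throughout $j$'s pass. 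Writing $\gb_j^{\mathrm{pre}}$ for the iterate just before the step that de-conflicts with $i$, Cauchy--Schwarz gives $d_{ji} = \bigl[-(\gb_j^{\mathrm{pre}})^{T}\nabla_{\thetashb}\mathcal{L}_i/\lVert\nabla_{\thetashb}\mathcal{L}_i\rVert^2\bigr]_+ \le \lVert\gb_j^{\mathrm{pre}}\rVert/\lVert\nabla_{\thetashb}\mathcal{L}_i\rVert \le \lVert\nabla_{\thetashb}\mathcal{L}_j\rVert/\lVert\nabla_{\thetashb}\mathcal{L}_i\rVert$, which is the stated interval.

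For the last claim, if $|\mathcal{T}| = 2$ there is a single other task, the inner ``order'' is trivial, and each $d_{ji}$ is a deterministic function of the two gradients. If $|\mathcal{T}| > 2$, then $\gb_j^{\mathrm{pre}}$ — and hence $d_{ji}$ — depends on which tasks precede $i$ in the sampled order of $j$'s pass and on the coefficients picked there, so $d_{ji}$ is a genuine random variable whose support, by the previous paragraph, is contained in $[0,\lVert\nabla_{\thetashb}\mathcal{L}_j\rVert/\lVert\nabla_{\thetashb}\mathcal{L}_i\rVert]$; I would finish by exhibiting an explicit three-task configuration in which two orderings yield different values of some $d_{ji}$. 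The main obstacle I anticipate is not analytic but combinatorial: carefully matching ``the coefficient added at a given inner step'' to ``the coefficient of $\nabla_{\thetashb}\mathcal{L}_i$ after re-indexing the outer sum'', and keeping that accounting valid when the per-task gradients fail to be linearly independent; once it is in place, the norm-monotonicity and Cauchy--Schwarz steps are routine.
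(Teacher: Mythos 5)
Your proposal is correct and follows essentially the same route as the paper's proof: the same re-indexing of the double sum $-\gb=\sum_i\gb_i$ to collect the coefficient $1+\sum_{j\neq i}d_{ji}$ of each $\nabla_{\thetashb}\mathcal{L}_i$, the same norm-nonincreasing property of the iterated projections combined with Cauchy--Schwarz (the paper phrases it via the cosine) to get $d_{ji}\in[0,\nicefrac{\lVert\nabla_{\thetashb}\mathcal{L}_j\rVert}{\lVert\nabla_{\thetashb}\mathcal{L}_i\rVert}]$, and the same observation that the random projection ordering makes $d_{ji}$ stochastic when $|\mathcal{T}|>2$. Your explicit Pythagorean justification of the norm decrease is slightly more detailed than the paper's one-line assertion, but the argument is otherwise identical.
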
  \vspace{-5pt}
\looseness=-1
The results from proposition \ref{prop:pcgrad} can be easily extended to GradVac~\citep{Wang2021}, which generalizes PCGrad's projection onto the normal vector to arbitrary target cosine similarities between per-task gradients.
When $|\mathcal{T}| > 2$, PCGrad corresponds to a stochastic loss re-weighting. 
As such, PCGrad bears many similarities with Random Loss Weighting (RLW)~\citep{rlw}. RLW proposes to sample scalarization weights from standard probability distributions at each iteration, and proves that this leads the better generalization~\citep[theorem 2]{rlw}. 
Indeed, it is well-known that adding noise to stochastic gradient estimations leads the optimization towards flatter minima, and that such minima may reduce overfitting~\citep{Keskar2017,Kleinberg2018}.
In line with the main technical results by \citet{Yu2020}, we now restrict our focus to two-task problems, which allow for an easy description of PCGrad's convergence points. The result is largely based on~\citep[theorem~1]{Yu2020}: we relax some of the assumptions and provide a proof in appendix~\ref{sec:supp-pcgrad}.
\begin{restatable}{corollary}{pcgradcor}
	If $|\mathcal{T}| = 2$, PCGrad will stop at any point where $\cos(\nabla_{\thetashb} \mathcal{L}_1, \nabla_{\thetashb} \mathcal{L}_2) = -1$.
	Furthermore, if $\mathcal{L}_1$ and $\mathcal{L}_2$ are differentiable, and $\nabla_{\thetashb} \mathcal{L}^{\text{MT}}$ is L-Lipschitz with $L > 0$, PCGrad with step size $t < \frac{1}{L}$ converges to a superset of the convergence points of the unitary scalarization. 
	\label{cor:pcgrad}
\end{restatable}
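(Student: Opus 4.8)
The plan is to prove the two assertions separately, both resting on an explicit form of the PCGrad direction when $|\mathcal{T}| = 2$. Write $\nabla_i := \nabla_{\thetashb}\mathcal{L}_i$ and $c := \nabla_1^T \nabla_2$, and let $\gb = -(\gb_1 + \gb_2)$ be the update direction of equation \cref{eq:pcgrad-update}. With only two tasks each $\gb_i$ is projected against a single gradient, so the random ordering in \cref{eq:pcgrad-update} is immaterial, and one can split into two regimes. If $c \ge 0$ no projection occurs, so $\gb_i = \nabla_i$ and $\gb = -(\nabla_1 + \nabla_2) = -\nabla_{\thetashb}\mathcal{L}^{\text{MT}}$: PCGrad coincides exactly with unitary scalarization. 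If $c < 0$ (which forces $\nabla_1, \nabla_2 \neq \mbf{0}$), then $\gb_i = \nabla_i - (c/\lVert\nabla_j\rVert^2)\nabla_j$, and summing,
\[ v := \gb_1 + \gb_2 = \Bigl(1 - \tfrac{c}{\lVert\nabla_1\rVert^2}\Bigr)\nabla_1 + \Bigl(1 - \tfrac{c}{\lVert\nabla_2\rVert^2}\Bigr)\nabla_2, \]
a strictly positive combination of the two gradients since $c<0$ makes both coefficients exceed $1$. For the first assertion, if $\cos(\nabla_1,\nabla_2) = -1$ then $\nabla_2 = -(\lVert\nabla_2\rVert/\lVert\nabla_1\rVert)\nabla_1$ and $c = -\lVert\nabla_1\rVert\lVert\nabla_2\rVert < 0$; substituting into the display, the two terms cancel identically, so $v = \mbf{0}$ and $\gb = \mbf{0}$: PCGrad takes a null step and halts.

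For the second assertion I would show that the PCGrad step $\thetashb \mapsto \thetashb - t v$ is a genuine descent step for $\mathcal{L}^{\text{MT}}$ with the classical step size, via the inequality
\[ (\nabla_{\thetashb}\mathcal{L}^{\text{MT}})^T v \;\ge\; \tfrac12\lVert v\rVert^2 \;\ge\; 0. \]
In the non-conflicting regime $v = \nabla_{\thetashb}\mathcal{L}^{\text{MT}}$ and this is immediate. In the conflicting regime I would set $p := \lVert\nabla_1\rVert$, $q := \lVert\nabla_2\rVert$, $s := |\cos(\nabla_1,\nabla_2)| \in (0,1]$ so that $c = -pqs$, and expand both sides: a direct computation gives $(\nabla_{\thetashb}\mathcal{L}^{\text{MT}})^T v = (p^2+q^2)(1-s^2)$ and $\lVert v\rVert^2 = (1-s^2)(p^2 + 2pqs + q^2)$, whose ratio is $1 + 2pqs/(p^2+q^2) \le 2$ since $2pq \le p^2+q^2$ and $s \le 1$; this yields the claimed inequality (and recovers $v = \mbf{0} \iff s = 1$, i.e.\ $\cos(\nabla_1,\nabla_2)=-1$).

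To close, feed this into the descent lemma for the $L$-Lipschitz gradient $\nabla_{\thetashb}\mathcal{L}^{\text{MT}}$: for any $t < 1/L$,
\[ \mathcal{L}^{\text{MT}}(\thetashb - t v) \;\le\; \mathcal{L}^{\text{MT}}(\thetashb) - t(1 - Lt)\,(\nabla_{\thetashb}\mathcal{L}^{\text{MT}})^T v, \]
which (using the inequality above) is a strict decrease whenever $v \neq \mbf{0}$, so — assuming, as usual, that $\mathcal{L}^{\text{MT}}$ is bounded below — PCGrad is a valid descent algorithm whose fixed points are exactly the zeros of $v$. It remains to observe that every convergence point of unitary scalarization, i.e.\ every $\thetashb^*$ with $\nabla_{\thetashb^*}\mathcal{L}^{\text{MT}} = \mbf{0}$, is such a zero: in the non-conflicting case $v = \nabla_{\thetashb}\mathcal{L}^{\text{MT}}$, and in the conflicting case $\nabla_1 = -\nabla_2$ makes the displayed combination cancel. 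Hence PCGrad converges to a superset of the convergence points of unitary scalarization; the containment is generically strict, as any $\thetashb$ with $\cos(\nabla_1,\nabla_2) = -1$ but $\lVert\nabla_1\rVert \neq \lVert\nabla_2\rVert$ is a zero of $v$ without being stationary for $\mathcal{L}^{\text{MT}}$.

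The main obstacle is the conflicting-case inequality $(\nabla_{\thetashb}\mathcal{L}^{\text{MT}})^T v \ge \tfrac12\lVert v\rVert^2$: it is exactly what lets the descent lemma run with the same $1/L$ step size used by unitary scalarization, and it is the point at which the convexity hypothesis of \citet{Yu2020}'s Theorem~1 is traded for a short geometric computation. Secondary care is needed for the degenerate configurations where some $\nabla_i = \mbf{0}$ while the other does not vanish (the projection denominator degenerates and $v$ fails to be continuous there), and for making explicit the boundedness-below assumption under which ``converges'' is meaningful; neither affects the convergence points of interest, since those always have $\nabla_1 + \nabla_2 = \mbf{0}$ or $\cos(\nabla_1,\nabla_2) = -1$ with both gradients nonzero.
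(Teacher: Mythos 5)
Your proposal is correct, and it splits naturally into a half that mirrors the paper and a half that is genuinely more self-contained. For the first assertion you do exactly what the paper's proof does: instantiate Proposition~\ref{prop:pcgrad} at $|\mathcal{T}|=2$, write the update as a positive rescaling of $\nabla_{\thetashb}\mathcal{L}_1$ and $\nabla_{\thetashb}\mathcal{L}_2$, and observe the exact cancellation at $\cos(\nabla_{\thetashb}\mathcal{L}_1,\nabla_{\thetashb}\mathcal{L}_2)=-1$; in fact you are slightly more careful than the paper, which asserts $\nabla_{\thetashb}\mathcal{L}_1=-\nabla_{\thetashb}\mathcal{L}_2$ from anti-parallelism alone, whereas your substitution $\nabla_2=-(\lVert\nabla_2\rVert/\lVert\nabla_1\rVert)\nabla_1$ handles unequal norms and still yields $\gb=\mbf{0}$. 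For the second assertion the paper simply defers to \citep[proposition 1]{Yu2020}, while you prove it from scratch via the key inequality $(\nabla_{\thetashb}\mathcal{L}^{\text{MT}})^T v\ge\tfrac12\lVert v\rVert^2$ (your algebra checks out: $(\nabla_{\thetashb}\mathcal{L}^{\text{MT}})^Tv=(p^2+q^2)(1-s^2)$ and $\lVert v\rVert^2=(1-s^2)(p^2+q^2+2pqs)$, so the ratio is at most $2$) fed into the descent lemma. What this buys is a proof that does not inherit the convexity hypothesis of Yu et al.'s result and makes the boundedness-below assumption explicit, which is in the spirit of the paper's own remark about relaxing those assumptions; what it costs is length, plus the need to be explicit that ``converges to a superset'' is established by identifying the fixed points of the update (zeros of $v$) and checking that every stationary point of $\mathcal{L}^{\text{MT}}$ is among them, which you do. No gaps.
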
 \vspace{-5pt}
Corollary~\ref{cor:pcgrad} implies that, when $|\mathcal{T}| = 2$, PCGrad may under-optimize equation \cref{eq:unit-scalarization} as MGDA and \gls{imtl}. In particular, if $\cos(\nabla_{\thetashb} \mathcal{L}_1, \nabla_{\thetashb} \mathcal{L}_2) = -1$, then $\mbf{0} \in \conv(\{ \nabla_{\thetashb} \mathcal{L}_1, \nabla_{\thetashb} \mathcal{L}_2\})$ (see proposition \ref{prop:mgda}).
We believe that PCGrad's stochasticity and enlarged convergence set potentially explain its regularizing effect. 

\paragraph{GradDrop}
\looseness=-1
While the motivation behind GradDrop is to avoid entry-wise gradient conflicts across tasks, the main property of the method is to drive the optimization towards ``joint minima": points that are stationary for all the individual tasks at once~\citep[proposition 1]{Chen2020}. 
In other words: $\nabla_{\thetashb} \mathcal{L}_i = \mbf{0} \ \forall \ i \in \mathcal{T}$.
While this property is desirable, we show that it holds beyond GradDrop, and independently of the gradient directions. Under strong assumptions 
on the model capacity, the above property would trivially hold for unitary scalarization (proposition \ref{prop:us-graddrop}, appendix~\ref{sec:supp-graddrop}).
Proposition \ref{prop:random-graddrop} shows that it holds for a simple randomized version of unitary~scalarization, which we name Random Grad Drop~(RGD). 
\begin{restatable}{proposition}{randomgraddrop}
	Let $\mathcal{L}^{\text{RGD}}(\thetashb) := \sum_{i \in \mathcal{T}} u_i \mathcal{L}_i(\thetashb)$, where $u_i \sim \text{Bernoulli}(p) \ \forall i \in \mathcal{T}$ and $p \in (0, 1]$. The gradient $\nabla_{\thetashb} \mathcal{L}^{\text{RGD}}$ is always zero if and only if $\nabla_{\thetashb} \mathcal{L}_i = \mbf{0} \ \forall i \in \mathcal{T}$. In other words, the result from \citep[proposition 1]{Chen2020} can be obtained without any information on the sign of per-task~gradients.
	\label{prop:random-graddrop}
\end{restatable}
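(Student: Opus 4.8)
The plan is to read ``$\nabla_{\thetashb}\mathcal{L}^{\text{RGD}}$ is always zero'' as ``$\nabla_{\thetashb}\mathcal{L}^{\text{RGD}}(\thetashb) = \mbf{0}$ for every realization of the mask vector $\mbf{u} := (u_1,\dots,u_m)$'', and then to exploit linearity of the gradient. Since, for a fixed $\mbf{u}$, the objective $\mathcal{L}^{\text{RGD}}(\thetashb) = \sum_{i\in\mathcal{T}} u_i \mathcal{L}_i(\thetashb)$ is a fixed linear combination of the task losses, we have $\nabla_{\thetashb}\mathcal{L}^{\text{RGD}} = \sum_{i\in\mathcal{T}} u_i \nabla_{\thetashb}\mathcal{L}_i$. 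The claim therefore reduces to the elementary fact that this vector vanishes for all admissible $\mbf{u}$ if and only if every coefficient vector $\nabla_{\thetashb}\mathcal{L}_i$ vanishes.

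The ``if'' direction is immediate: if $\nabla_{\thetashb}\mathcal{L}_i = \mbf{0}$ for all $i\in\mathcal{T}$, then $\sum_{i\in\mathcal{T}} u_i\nabla_{\thetashb}\mathcal{L}_i = \mbf{0}$ regardless of $\mbf{u}$.

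For the ``only if'' direction I would specialize $\mbf{u}$ to the $m$ one-hot patterns. Fix $k\in\mathcal{T}$ and take $u_k = 1$ and $u_i = 0$ for $i\neq k$; since the $u_i$ are independent $\mathrm{Bernoulli}(p)$ with $p\in(0,1)$, this realization has positive probability $p(1-p)^{m-1}$ and hence lies in the support (at $p=1$, where RGD coincides with unitary scalarization, one reads ``always'' as ``for every $\mbf{u}\in\{0,1\}^m$'', the natural convention for masking-type results). Evaluating the masked gradient at this $\mbf{u}$ gives $\nabla_{\thetashb}\mathcal{L}^{\text{RGD}} = \nabla_{\thetashb}\mathcal{L}_k$, which by hypothesis is $\mbf{0}$. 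As $k$ was arbitrary, $\nabla_{\thetashb}\mathcal{L}_i = \mbf{0}$ for all $i\in\mathcal{T}$. The closing assertion of the proposition — that the conclusion of \citep[proposition 1]{Chen2020} is recovered ``without sign information'' — then holds because this argument never inspects the signs, magnitudes or directions of the $\nabla_{\thetashb}\mathcal{L}_i$: only their role as coefficients that must cancel across all binary mask patterns is used.

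There is no real obstacle here; the ``main step'' is just the observation that evaluating the randomized objective at one-hot masks isolates the individual per-task gradients. The one point requiring care is the precise meaning of ``always'' together with the stated range $p\in(0,1]$: the hypothesis $p>0$ is exactly what excludes the degenerate case $p=0$, in which $\mathcal{L}^{\text{RGD}}\equiv 0$ identically and the equivalence fails (the zero function has zero gradient even when the $\mathcal{L}_i$ are far from stationary). I would state this explicitly so that the role of $p\in(0,1]$ in the hypothesis is transparent, and (if independence of the $u_i$ is not to be assumed) phrase the result over all $\mbf{u}\in\{0,1\}^m$ rather than over the support of the mask distribution.
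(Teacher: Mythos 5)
Your proof is correct and follows essentially the same route as the paper's: the paper also argues the ``if'' direction trivially and establishes the ``only if'' direction by observing that the one-hot mask realization, which isolates $\nabla_{\thetashb}\mathcal{L}_j$, occurs with probability $p(1-p)^{m-1}>0$. Your side remark about $p=1$ is a genuine catch the paper glosses over --- there $p(1-p)^{m-1}=0$ for $m\ge 2$, so the paper's own strict inequality fails and the statement only survives under the deterministic ``for every $\mbf{u}\in\{0,1\}^m$'' reading you propose.
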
  \vspace{-5pt}
Proposition~\ref{prop:random-graddrop} (see appendix~\ref{sec:supp-graddrop} for a simple proof) shows that an inexpensive sign-independent stochastic scalarization shares GradDrop's main reported property.
$\mathcal{L}^{\text{RGD}}$ can be directly cast an instance of RLW, and hence as a regularization method~\citep{Keskar2017,Kleinberg2018}.
Furthermore, Figure~\ref{fig:celeba-graddrop} in appendix \ref{sec:random-graddrop} shows that the empirical results of GradDrop on CelebA~\citep{liu2015faceattributes} are closely matched by a sign-agnostic gradient masking, partly undermining the conflicting gradients assumption.
We believe that the above results, along with the authors' original experiments showing that GradDrop delays overfitting on CelebA~\citep[figure 3]{Chen2020}, suggest that GradDrop behaves as a regularizer.

\subsection{Under-Optimization: Empirical Study} \label{sec:norm-sum-grad}

As seen in $\S$\ref{sec:technical-reg}, MGDA and IMTL might under-optimize equation~\cref{eq:unit-scalarization} compared to unitary scalarization due to their larger convergence sets. 
In order to assess whether this is empirically the case, we estimate $\left\lVert\sum_{i \in \mathcal{T}}  \nabla_{\thetashb} \mathcal{L}_i\right\rVert_2$, the norm of the unitary scalarization update on shared parameters~$\thetashb$, for all optimizers throughout the unregularized CelebA experiment from $\S$\ref{sec:reg-ablation}. 
Large magnitudes for $\left\lVert\sum_{i \in \mathcal{T}}  \nabla_{\thetashb} \mathcal{L}_i\right\rVert_2$ towards convergence would indicate that \glspl{smto} steer optimization far from stationary points of unitary scalarization, resulting in under-optimization.
We compute the update norm on the mini-batch loss every $100$ updates, and report the per-epoch average in Figure~\ref{fig:norm-sum-grad}.
Compared with unitary scalarization, most \glspl{smto} have smaller or comparable update magnitude in the first $15$ epochs.
However, towards convergence, \glspl{smto} display larger $\left\lVert\sum_{i \in \mathcal{T}}  \nabla_{\thetashb} \mathcal{L}_i\right\rVert_2$ compared to unitary scalarization. 
In particular, \gls{imtl} and \gls{mgda} have the largest norm, denoting significant empirical under-optimization. The additional stochasticity of RLW, PCGrad, and GradDrop also appears to lead to larger norm values than unitary scalarization, yet to a lesser degree.
Given that \gls{mgda} and \gls{imtl} incur a larger loss than unitary scalarization in later epochs (see Figure~\ref{fig:celeba-unreg-loss} in appendix~\ref{sec:unreg}), we can conclude that they guide optimization towards regions of the parameter space that under-optimize equation~\cref{eq:unit-scalarization}, providing empirical support for our analysis.

\section{Conclusions} \label{sec:conclusion}

This paper made two main contributions.  
First, we evaluated popular~\glspl{smto} using a single experimental pipeline, including previously unpublished results of MGDA, IMTL, RLW, and GradDrop in the~\gls{rl} setting.
Surprisingly, our evaluation showed that none of the~\glspl{smto} consistently outperform unitary scalarization, the simplest and least expensive method.
Second, in order to explain our surprising results, we postulate that~\glspl{smto} act as regularizers and present an analysis that supports our hypothesis.
We believe our work calls for further reevaluation of progress in developing principled and efficient \gls{mtl}~algorithms.

We conclude by addressing the limitations of our work.
While we covered a wide range of popular benchmarks, we do not exclude the existence of settings where unitary scalarization underperforms: discovering them is an interesting direction for future work.
Furthermore, our experimental results were obtained via grid searches under limited compute resources: some of the methods might benefit from further fine-tuning. Nevertheless, we remark that fine-tuning will be easier for unitary scalarization due to its shorter runtimes.
Finally, we presented the regularization hypothesis only as a partial explanation of our results: we hope it will steer further analysis and consequently improve the understanding of \gls{mtl}.

\section*{Acknowledgements}

VK was funded by Samsung R\&D Institute UK through the EPSRC Centre for Doctoral Training (CDT) in Autonomous Intelligent Machines and Systems (AIMS) at the University of Oxford .
ADP was funded by EPSRC for the AIMS CDT, grant EP/L015987/1, and by an IBM PhD fellowship. 
SW has received funding from the European Research Council under the European Union’s Horizon 2020 research and innovation programme (grant agreement number 637713).
The experiments were made possible by a generous equipment grant from NVIDIA.
We would like to thank~\citet{Sodhani2021,rlw} and~\citet{Sener2018} for publicly releasing their code. %
The authors thank Kristian Hartikainen for helpful comments on the~\gls{rl} experiments, and Gabriel Gama for spotting a bug in the logging of training statistics for supervised learning.
VK thanks Ryota Tomioka for useful discussions on multitask optimization.

\bibliography{mtlopt}
\bibliographystyle{abbrvnat}
\section*{Checklist}

\begin{enumerate}

\item For all authors...
\begin{enumerate}
  \item Do the main claims made in the abstract and introduction accurately reflect the paper's contributions and scope?
    \answerYes{}
  \item Did you describe the limitations of your work?
    \answerYes{} see $\S$\ref{sec:conclusion}.
  \item Did you discuss any potential negative societal impacts of your work?
    \answerYes{} due to space constraints, we provide a discussion in appendix~\ref{sec:impact}.
  \item Have you read the ethics review guidelines and ensured that your paper conforms to them?
    \answerYes{}
\end{enumerate}

\item If you are including theoretical results...
\begin{enumerate}
  \item Did you state the full set of assumptions of all theoretical results?
    \answerYes{}
        \item Did you include complete proofs of all theoretical results?
    \answerYes{} we provide full proofs in the Appendix, and refer to them in the main body of the paper.
\end{enumerate}

\item If you ran experiments...
\begin{enumerate}
  \item Did you include the code, data, and instructions needed to reproduce the main experimental results (either in the supplemental material or as a URL)?
    \answerYes{} we provide the code and the instructions in the supplemental material.
  \item Did you specify all the training details (e.g., data splits, hyperparameters, how they were chosen)?
    \answerYes{}
        \item Did you report error bars (e.g., with respect to the random seed after running experiments multiple times)?
    \answerYes{}
        \item Did you include the total amount of compute and the type of resources used (e.g., type of GPUs, internal cluster, or cloud provider)?
    \answerYes{} see appendix~\ref{sec:sl-setup}.
\end{enumerate}

\item If you are using existing assets (e.g., code, data, models) or curating/releasing new assets...
\begin{enumerate}
  \item If your work uses existing assets, did you cite the creators?
    \answerYes{}
  \item Did you mention the license of the assets?
    \answerYes{} appendix~\ref{sec:licenses} describes licenses of all benchmarks and implementations we used for our work.
  \item Did you include any new assets either in the supplemental material or as a URL?
    \answerYes{} we include the code and the instructions on how to replicate the experiments into the supplemental material.
  \item Did you discuss whether and how consent was obtained from people whose data you're using/curating?
    \answerNA{}
  \item Did you discuss whether the data you are using/curating contains personally identifiable information or offensive content?
    \answerNA{}
\end{enumerate}

\item If you used crowdsourcing or conducted research with human subjects...
\begin{enumerate}
  \item Did you include the full text of instructions given to participants and screenshots, if applicable?
    \answerNA{}
  \item Did you describe any potential participant risks, with links to Institutional Review Board (IRB) approvals, if applicable?
    \answerNA{}
  \item Did you include the estimated hourly wage paid to participants and the total amount spent on participant compensation?
    \answerNA{}
\end{enumerate}

\end{enumerate}

\iftoggle{appendix}{
\clearpage
\appendix
\section{Societal Impact}
\label{sec:impact}

Due to the object of its study, our work does not have a direct societal impact.
However, as any machine learning paper, it can potentially negatively effect the society through automation and loss of jobs.
While it is hard to anticipate any particular risk, as any technology, if not regulated properly, it might lead to growing social and economic inequality.

On the positive side, our work might have a positive environmental impact since it advocates for simpler and more economical methods which will reduce energy consumption in data centers. 
Finally, simpler methods are usually easier to understand, which is beneficial in terms of explainability, an important factor for real-life applications.

\section{Supplement to the Overview of Multi-Task Optimizers} \label{sec:supp-overview}

This section presents the proofs and the technical results omitted from section~\ref{sec:critical-analysis}, along with a description of the use of per-task gradients with respect to the last shared activation for encoder-decoder architectures (usually less expensive than per-task gradients with respect to shared parameters).

\subsection{MGDA} \label{sec:supp-mgda}
\mgda*
\begin{proof}
	As shown by \citet{Desideri2012}, equation \cref{eq:dual-mgda} is a simplex-constrained norm-minimization problem. In other words, the argument of the minimum is the projection of $\mbf{0}$ onto the feasible set. Therefore: $$\gb = \mbf{0} \iff \mbf{0} \in \conv(\{ \nabla_{\thetashb} \mathcal{L}_i \ |\ i \in \mathcal{T}  \}).$$
	It then suffices to point out that $\sum_{i \in \mathcal{T}} \nabla_{\thetashb} \mathcal{L}_i = \mbf{0} \iff \sum_{i \in \mathcal{T}} \frac{1}{|\mathcal{T}|} \nabla_{\thetashb} \mathcal{L}_i = \mbf{0} \Rightarrow \mbf{0} \in \conv(\{ \nabla_{\thetashb} \mathcal{L}_i \ |\ i \in \mathcal{T}  \})$ to conclude the proof.
\end{proof}

Due to the cost of computing per-task gradients, \citet{Sener2018} propose MGDA-UB, which replaces the gradients wrt the parameters $\nabla_{\thetashb} \mathcal{L}_i$ with the gradients wrt the shared activation $\nabla_{\zb} \mathcal{L}_i$ in the computation of the coefficients of $\gb = -\sum_i \alpha_i \nabla_{\thetashb} \mathcal{L}_i$. 
This yields an upper bound on the objective of equation \cref{eq:dual-mgda}, thus restricting the set of points the algorithm convergences to.
Rather than directly relying on $\nabla_{\thetashb} \mathcal{L}_i$, $\gb$ can then be obtained by computing the gradient of $\sum_{i \in \mathcal{T}} \alpha_i \mathcal{L}_i$ via reverse-mode differentiation, hence saving memory and compute. 

\begin{corollary}
	The MGDA-UB~\gls{smto} by \citet{Sener2018} converges to any point such that: $\mbf{0} \in \conv(\{ \nabla_{\zb} \mathcal{L}_i \ |\ i \in \mathcal{T}  \})$.
	Furthermore, if $\frac{\partial \zb}{\partial \thetashb}$ is non-singular, it converges to a superset of the convergence points of the unitary scalarization. 
	\label{cor:mgda-ub}
\end{corollary}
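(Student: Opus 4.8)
The plan is to mirror the proof of Proposition~\ref{prop:mgda}, running the argument in the space of the shared representation $\zb$ rather than the shared parameters $\thetashb$, and then pulling the conclusion back through the chain rule. First I would recall the definition of MGDA-UB: the coefficients $\alphab$ are obtained by solving the analogue of equation~\cref{eq:dual-mgda} with every $\nabla_{\thetashb}\mathcal{L}_i$ replaced by $\nabla_{\zb}\mathcal{L}_i$, after which the parameter-space update is $\gb = -\sum_{i} \alpha_i \nabla_{\thetashb}\mathcal{L}_i$. Writing $\gb_{\zb} := -\sum_{i} \alpha_i \nabla_{\zb}\mathcal{L}_i$ and applying the chain rule $\nabla_{\thetashb}\mathcal{L}_i = \left(\frac{\partial \zb}{\partial \thetashb}\right)^{\!T}\nabla_{\zb}\mathcal{L}_i$ termwise, one obtains the identity $\gb = \left(\frac{\partial \zb}{\partial \thetashb}\right)^{\!T}\gb_{\zb}$, which is the bridge between the two spaces.

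For the first claim I would reuse, verbatim, the observation from the proof of Proposition~\ref{prop:mgda}: the modified dual is still a simplex-constrained norm-minimization problem, so $\gb_{\zb}$ is (up to sign) the projection of $\mbf{0}$ onto $\conv(\{ \nabla_{\zb}\mathcal{L}_i \ |\ i \in \mathcal{T}\})$, and hence $\gb_{\zb} = \mbf{0}$ if and only if $\mbf{0} \in \conv(\{ \nabla_{\zb}\mathcal{L}_i \ |\ i \in \mathcal{T}\})$. Since MGDA-UB halts exactly when the combined representation gradient $\gb_{\zb}$ vanishes, which by the bridge identity forces the parameter-space step $\gb$ to vanish as well, any $\thetashb^*$ with $\mbf{0} \in \conv(\{ \nabla_{\zb^*}\mathcal{L}_i \ |\ i \in \mathcal{T}\})$ is a convergence point, giving the first statement.

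For the second claim I would invoke the hypothesis that $\frac{\partial \zb}{\partial \thetashb}$ is non-singular, hence so is its transpose. At any convergence point of the unitary scalarization, $\sum_{i \in \mathcal{T}}\nabla_{\thetashb}\mathcal{L}_i = \mbf{0}$, i.e.\ $\left(\frac{\partial \zb}{\partial \thetashb}\right)^{\!T}\sum_{i \in \mathcal{T}}\nabla_{\zb}\mathcal{L}_i = \mbf{0}$; invertibility forces $\sum_{i \in \mathcal{T}}\nabla_{\zb}\mathcal{L}_i = \mbf{0}$, and therefore $\mbf{0} = \frac{1}{|\mathcal{T}|}\sum_{i \in \mathcal{T}}\nabla_{\zb}\mathcal{L}_i \in \conv(\{ \nabla_{\zb}\mathcal{L}_i \ |\ i \in \mathcal{T}\})$. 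By the first claim this point is a convergence point of MGDA-UB, so the convergence set of MGDA-UB contains that of the unitary scalarization.

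I do not expect a genuine obstacle: the result is a change of variables layered on top of Proposition~\ref{prop:mgda}. The only points needing care are (i) stating the termwise chain-rule identity $\gb = \left(\frac{\partial \zb}{\partial \thetashb}\right)^{\!T}\gb_{\zb}$ cleanly, including the convention for the Jacobian of $\zb = g(\thetashb,X)$, and (ii) being explicit that non-singularity is used only to pass from $\sum_{i}\nabla_{\thetashb}\mathcal{L}_i = \mbf{0}$ back to $\sum_{i}\nabla_{\zb}\mathcal{L}_i = \mbf{0}$, and not in the first claim. It is also worth noting in passing (as the surrounding text already does) that substituting $\nabla_{\zb}\mathcal{L}_i$ for $\nabla_{\thetashb}\mathcal{L}_i$ only shrinks the attainable objective values of equation~\cref{eq:dual-mgda}, so nothing in these inclusions is lost by working with the upper bound.
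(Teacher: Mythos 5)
Your proposal is correct and follows essentially the same route as the paper: the first claim by rerunning the simplex-constrained norm-minimization argument of Proposition~\ref{prop:mgda} with $\nabla_{\zb}\mathcal{L}_i$ in place of $\nabla_{\thetashb}\mathcal{L}_i$, and the second by using the chain rule together with non-singularity of $\frac{\partial \zb}{\partial \thetashb}$ to pass from $\sum_{i}\nabla_{\thetashb}\mathcal{L}_i = \mbf{0}$ to $\sum_{i}\nabla_{\zb}\mathcal{L}_i = \mbf{0}$ and hence to membership of $\mbf{0}$ in the convex hull. Your explicit statement of the bridge identity $\gb = \left(\frac{\partial \zb}{\partial \thetashb}\right)^{T}\gb_{\zb}$ is a minor presentational addition, not a different argument.
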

\begin{proof}
	The first part of the proof proceeds as the proof of proposition \ref{prop:mgda}, noting that the MGDA-UB update is associated to the following problem:
	\begin{equation*}
		\begin{aligned}
			\max_{\alphab} \qquad & -\frac{1}{2} \left\lVert \gb \right \rVert_2^2 \\
			\text{s.t. } \qquad & \sum_i \alpha_i \nabla_{\zb} \mathcal{L}_i  = -\gb, \quad \sum_{i \in \mathcal{T}} \alpha_i =  1, \\
			& \alpha_i \geq 0 \qquad \forall \ i \in \mathcal{T}.
		\end{aligned}	
	\end{equation*}
	In order to show that a stationary point of the unitary scalarization satisfies $\mbf{0} \in \conv(\{ \nabla_{\zb^*} \mathcal{L}_i \ |\ i \in \mathcal{T}  \})$, we will assume $\frac{\partial \zb}{\partial \thetashb}$ is non-singular, as done by \citet[theorem 1]{Sener2018}. Then, relying on the chain rule, the result follows from: 
	\begin{equation*}
		\begin{aligned}
			\sum_{i \in \mathcal{T}} \nabla_{\thetashb} \mathcal{L}_i = \mbf{0} &\iff \sum_{i \in \mathcal{T}} \frac{1}{|\mathcal{T}|} \nabla_{\thetashb} \mathcal{L}_i = \mbf{0} \\
			& \iff \sum_{i \in \mathcal{T}} \frac{\frac{\partial \zb}{\partial \thetashb}}{|\mathcal{T}|} \nabla_{\zb} \mathcal{L}_i = \mbf{0} \\
			& \iff \left(\frac{\partial \zb}{\partial \thetashb}\right)^{-1} \frac{\partial \zb}{\partial \thetashb} \sum_{i \in \mathcal{T}} \frac{1}{|\mathcal{T}|} \nabla_{\zb} \mathcal{L}_i = \mbf{0} \\
			& \iff \sum_{i \in \mathcal{T}} \frac{1}{|\mathcal{T}|} \nabla_{\zb} \mathcal{L}_i = \mbf{0} \\
			&\Rightarrow \mbf{0} \in \conv(\{ \nabla_{\zb} \mathcal{L}_i \ |\ i \in \mathcal{T}  \})
		\end{aligned}	
	\end{equation*}
\end{proof}

\subsection{IMTL} \label{sec:supp-imtl}
\imtl*
\begin{proof}
	First, equation \cref{eq:imtlg-goal} solves the linear system in $\alphab := [\alpha_1, \dots, \alpha_m]$ given by:
	\begin{equation*}
	\begin{aligned}
	& \gb^T \left( \frac{ \nabla_{\thetashb} \mathcal{L}_1}{\left\lVert\nabla_{\thetashb} \mathcal{L}_1\right\rVert} - \frac{\nabla_{\thetashb} \mathcal{L}_i}{\left\lVert\nabla_{\thetashb} \mathcal{L}_i\right\rVert}\right) = \mbf{0} \qquad  \forall \ i \in \mathcal{T} \setminus \{1\}, \\[3pt]
	& \gb = -\sum_i \alpha_i \nabla_{\thetashb} \mathcal{L}_i, \quad \sum_{i \in \mathcal{T}} \alpha_i = 1,
	\end{aligned}
	\end{equation*}
	which corresponds to finding a point of $\mathcal{A}' := \aff(\left\{ \nabla_{\thetashb} \mathcal{L}_i |\ i \in \mathcal{T} \right\})$ which is orthogonal to $\mathcal{A} := \aff\left(\left\{ \frac{\nabla_{\thetashb} \mathcal{L}_i}{\left\lVert\nabla_{\thetashb} \mathcal{L}_i \right\rVert} |\ i \in \mathcal{T} \right\}\right)$. 
	To see this, it suffices to point out that any point orthogonal to $\mathcal{A}$ is also orthogonal to the vector subspace spanned by differences of vectors belonging to $\mathcal{A}$. As this subspace  has $m-1$ dimensions, any vector orthogonal to $\left(\frac{\nabla_{\thetashb} \mathcal{L}_1}{\left\lVert\nabla_{\thetashb} \mathcal{L}_1\right\rVert} -  \frac{\nabla_{\thetashb} \mathcal{L}_i}{\left\lVert\nabla_{\thetashb} \mathcal{L}_i\right\rVert} \right)$ for each $i \in \mathcal{T} \setminus \{1\}$ is orthogonal to the entire subspace.
	
	Second, consider the problem of finding a point in $\mathcal{A}$ that is orthogonal to the linear subspace spanned by differences of vectors in $\mathcal{A}$. In other words, we seek the projection of $\mbf{0}$ onto $\mathcal{A}$. Recalling the definition of $\mathcal{A}$, we can write:
	\begin{equation}
	\begin{aligned}
	\max_{\alphab} \qquad & -\frac{1}{2} \left\lVert \gb' \right \rVert_2^2 \\
	\text{s.t. } \qquad & \sum_i \alpha_i \frac{\nabla_{\thetashb} \mathcal{L}_i}{\left\lVert\nabla_{\thetashb} \mathcal{L}_i\right\rVert}   = -\gb', \quad \sum_i \alpha_i =  1.
	\label{eq:dual-imtlg}
	\end{aligned}	
	\end{equation}
	The solution of equation \cref{eq:dual-imtlg} is always collinear to the solution of equation \cref{eq:imtlg-goal}. In fact, if a vector $\gb \in \mathcal{A}'$ is orthogonal to the affine subspace $\mathcal{A}$ (or to the linear subspace spanned by differences of its members), then $\gamma \gb =  \left( - \gamma \sum_i \left(\alpha_i \left\lVert\nabla_{\thetashb} \mathcal{L}_i\right\rVert \right) \frac{\nabla_{\thetashb} \mathcal{L}_i}{\left\lVert\nabla_{\thetashb} \mathcal{L}_i\right\rVert} \right)$ is orthogonal to $\mathcal{A}$ as well, and $\gamma = \frac{1}{\sum_i \left(\alpha_i \left\lVert\nabla_{\thetashb} \mathcal{L}_i\right\rVert \right)} \implies \gamma \gb \in \mathcal{A}$.
	
	Finally, equation \cref{eq:dual-imtlg} differs from equation \cref{eq:dual-mgda} in two aspects: $\alphab$ is not constrained to be non-negative (hence the convex hull is replaced by the affine hull), and the task vectors are normalized. Therefore, equation \cref{eq:dual-imtlg} is the dual of:
	\begin{equation}
	\begin{aligned}
	\min_{\gb, \epsilon} \qquad & \epsilon + \frac{1}{2} \left\lVert \gb \right \rVert_2^2 \\
	\text{s.t. } \qquad & \frac{\nabla_{\thetashb} \mathcal{L}_i^T}{\left\lVert\nabla_{\thetashb} \mathcal{L}_i\right\rVert} \gb = \epsilon \qquad \forall \ i \in \left\{1, \dots,  m \right\}.
	\label{eq:primal-imtlg}
	\end{aligned}	
	\end{equation}
	The proposition then follows by comparing equation \cref{eq:primal-imtlg} with  equation \cref{eq:primal-mgda}, and recalling that IMTL-L only adds a scaling factor to the chosen update direction.
\end{proof}

\imtlcor*
\begin{proof}
	Inspecting equation \cref{eq:primal-imtlg}, which yields a collinear point to the \gls{imtl} update, reveals that \gls{imtl} might converge to non Pareto-stationary points: due to the restrictive equality constraints, the minimizer of equation \cref{eq:primal-imtlg} might be $\mbf{0}$ even if a descent direction exists. 
	Furthermore, its dual, equation \cref{eq:dual-imtlg}, implies that: 
	\begin{equation*}
	\begin{aligned}
	\gb = \mbf{0} &\iff \mbf{0} \in \aff\left(\left\{ \frac{\nabla_{\thetashb} \mathcal{L}_i}{\left\lVert\nabla_{\thetashb} \mathcal{L}_i\right\rVert} \ |\ i \in \mathcal{T}  \right\}\right)\\
	&\iff \mbf{0} \in \aff\left(\left\{ \nabla_{\thetashb} \mathcal{L}_i\ |\ i \in \mathcal{T}  \right\}\right),
	\end{aligned}
	\end{equation*}
	which, noting that $\conv(\mathcal{A}) \subseteq \aff(\mathcal{A})$ for any $\mathcal{A}$, concludes the proof.
\end{proof}

Similarly to MGDA-UB, \citet{Liu2021} advocate using $\nabla_{\zb} \mathcal{L}_i$ in place of $\nabla_{\thetashb} \mathcal{L}_i$ while solving equation \cref{eq:imtlg-goal}, typically reducing the cost of computing the coefficients of $\gb = -\sum_i \alpha_i \nabla_{\thetashb} \mathcal{L}_i$. 

\begin{corollary}
	When employing the approximation of problem \cref{eq:imtlg-goal} that relies on $\nabla_{\zb} \mathcal{L}_i$, IMTL by \citet{Liu2021} converges to $\mbf{0} \in \aff\left(\left\{ \frac{\nabla_{\zb} \mathcal{L}_i}{\left\lVert\nabla_{\zb} \mathcal{L}_i\right\rVert} \ |\ i \in \mathcal{T}  \right\}\right)$.
	If $\frac{\partial \zb}{\partial \thetashb}$ is non-singular, this is a superset of of the convergence points of the unitary scalarization.
	\label{cor:imtl-ub}
\end{corollary}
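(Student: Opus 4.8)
The plan is to transplant the reasoning behind Proposition~\ref{prop:imtl}, Corollary~\ref{cor:imtl}, and Corollary~\ref{cor:mgda-ub} to the activation-space setting, since none of those arguments uses any property of $\thetashb$ beyond the chain rule. First I would observe that the closed-form IMTL-G system~\cref{eq:imtlg-goal} and its dual~\cref{eq:dual-imtlg} are structurally unchanged when every $\nabla_{\thetashb} \mathcal{L}_i$ is replaced by the activation gradient $\nabla_{\zb} \mathcal{L}_i$: the resulting direction $\gb$ (up to the positive IMTL-L rescaling, which cannot change whether $\gb = \mbf{0}$) is still the projection of $\mbf{0}$ onto the affine hull of the normalized activation gradients. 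Repeating verbatim the normalization argument from the proof of Corollary~\ref{cor:imtl} — which only rescales each generator by its positive norm and therefore does not affect whether $\mbf{0}$ lies in the affine hull — yields the first claim:
\begin{equation*}
	\gb = \mbf{0} \iff \mbf{0} \in \aff\!\left(\left\{ \frac{\nabla_{\zb} \mathcal{L}_i}{\lVert \nabla_{\zb} \mathcal{L}_i \rVert} \ \Big|\ i \in \mathcal{T} \right\}\right) \iff \mbf{0} \in \aff\!\left(\left\{ \nabla_{\zb} \mathcal{L}_i \ \Big|\ i \in \mathcal{T} \right\}\right).
\end{equation*}

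For the second claim I would invoke the chain rule exactly as in the proof of Corollary~\ref{cor:mgda-ub}, writing $\nabla_{\thetashb} \mathcal{L}_i = \frac{\partial \zb}{\partial \thetashb} \nabla_{\zb} \mathcal{L}_i$. If $\frac{\partial \zb}{\partial \thetashb}$ is non-singular, left-multiplying by its inverse gives
\begin{equation*}
	\sum_{i \in \mathcal{T}} \nabla_{\thetashb} \mathcal{L}_i = \mbf{0}
	\iff \sum_{i \in \mathcal{T}} \nabla_{\zb} \mathcal{L}_i = \mbf{0}
	\iff \sum_{i \in \mathcal{T}} \frac{1}{|\mathcal{T}|} \nabla_{\zb} \mathcal{L}_i = \mbf{0},
\end{equation*}
and the last equality exhibits $\mbf{0}$ as a convex (hence affine) combination of the $\nabla_{\zb} \mathcal{L}_i$; since $\conv(\mathcal{A}) \subseteq \aff(\mathcal{A})$ for any $\mathcal{A}$, this shows $\mbf{0} \in \aff(\{\nabla_{\zb} \mathcal{L}_i \ |\ i \in \mathcal{T}\})$. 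Combined with the first claim, every stationary point of the unitary scalarization belongs to the convergence set of this variant of IMTL, so the latter is a superset of the former.

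I expect essentially no genuine difficulty here — the statement is to Corollary~\ref{cor:imtl} what Corollary~\ref{cor:mgda-ub} is to Proposition~\ref{prop:mgda}. The only mildly delicate step, inherited unchanged from the proof of Corollary~\ref{cor:imtl}, is the passage between the affine hull of the normalized gradients and that of the raw gradients, which relies on each $\lVert \nabla_{\zb} \mathcal{L}_i \rVert$ being nonzero (the standing assumption of the IMTL-G construction, under which the normalization in~\cref{eq:imtlg-goal} is well defined). The main ``obstacle'' is thus expository: stating the non-singularity hypothesis on $\frac{\partial \zb}{\partial \thetashb}$ where it is actually used, just as \citet{Sener2018} and \citet{Liu2021} do for the corresponding results.
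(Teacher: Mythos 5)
Your proposal is correct and follows essentially the same route as the paper: the first claim is obtained by rewriting the $\nabla_{\zb}$-approximate update as the projection of $\mbf{0}$ onto the affine hull of the normalized activation gradients (mirroring Proposition~\ref{prop:imtl} and Corollary~\ref{cor:imtl}), and the second by the chain-rule argument of Corollary~\ref{cor:mgda-ub}, exhibiting $\mbf{0}$ as a convex combination with positive coefficients. The only cosmetic difference is that the paper rescales the coefficients by $\left\lVert\nabla_{\zb}\mathcal{L}_i\right\rVert$ to land directly in the convex hull of the normalized gradients, whereas you pass through $\aff\left(\left\{\nabla_{\zb}\mathcal{L}_i\right\}\right)$ first; both are valid here since the coefficients involved are strictly positive.
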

\begin{proof}
	Following the proof of proposition \ref{prop:imtl}, the following problem yields a collinear point to the $\nabla_{\zb} \mathcal{L}_i$-approximate IMTL update:
	\begin{equation*}
		\begin{aligned}
			\max_{\alphab} \qquad & -\frac{1}{2} \left\lVert \gb' \right \rVert_2^2 \\
			\text{s.t. } \qquad & \sum_i \alpha_i \frac{\nabla_{\zb} \mathcal{L}_i}{\left\lVert\nabla_{\zb} \mathcal{L}_i\right\rVert}   = -\gb', \quad \sum_i \alpha_i =  1.
		\end{aligned}	
	\end{equation*}
	Therefore: 
	\begin{equation*}
		\gb = \mbf{0} \iff \mbf{0} \in \aff\left(\left\{ \frac{\nabla_{\zb} \mathcal{L}_i}{\left\lVert\nabla_{\zb} \mathcal{L}_i\right\rVert} \ |\ i \in \mathcal{T}  \right\}\right).
	\end{equation*}
	Finally, assuming $\frac{\partial \zb}{\partial \thetashb}$ is non-singular, we can replicate the procedure in the proof of corollary \ref{cor:mgda-ub} to get: 
	\begin{equation*}
		\begin{aligned}
			&\sum_{i \in \mathcal{T}} \nabla_{\thetashb} \mathcal{L}_i = \mbf{0} \iff \sum_{i \in \mathcal{T}} \frac{1}{|\mathcal{T}|} \nabla_{\zb} \mathcal{L}_i = \mbf{0} \\
			& \iff \sum_{i \in \mathcal{T}} \frac{\left\lVert\nabla_{\zb} \mathcal{L}_i\right\rVert}{|\mathcal{T}|} \frac{\nabla_{\zb} \mathcal{L}_i}{\left\lVert\nabla_{\zb} \mathcal{L}_i\right\rVert} = \mbf{0} \\
			& \iff \left( \frac{|\mathcal{T}|}{\sum_{i \in \mathcal{T}} \left( \left\lVert\nabla_{\zb} \mathcal{L}_i\right\rVert \right)} \right) \sum_{i \in \mathcal{T}} \frac{\left\lVert\nabla_{\zb} \mathcal{L}_i\right\rVert}{|\mathcal{T}|} \frac{\nabla_{\zb} \mathcal{L}_i}{\left\lVert\nabla_{\zb} \mathcal{L}_i\right\rVert} = \mbf{0} \\
			&\Rightarrow \mbf{0} \in \conv\left(\left\{ \frac{\nabla_{\zb} \mathcal{L}_i}{\left\lVert\nabla_{\zb} \mathcal{L}_i\right\rVert} \ |\ i \in \mathcal{T}  \right\}\right)\\
			&\Rightarrow \mbf{0} \in \aff\left(\left\{ \frac{\nabla_{\zb} \mathcal{L}_i}{\left\lVert\nabla_{\zb} \mathcal{L}_i\right\rVert} \ |\ i \in \mathcal{T}  \right\}\right),
		\end{aligned}	
	\end{equation*}
	which shows that $\aff\left(\left\{ \frac{\nabla_{\zb} \mathcal{L}_i}{\left\lVert\nabla_{\zb} \mathcal{L}_i\right\rVert} \ |\ i \in \mathcal{T}  \right\}\right)$ contains the convergence points of the unitary scalarization.
\end{proof}

\subsection{PCGrad} \label{sec:supp-pcgrad}

\pcgrad*
\begin{proof}
	We start by pointing out that:
	\begin{equation*}
	\begin{aligned}
	\left[\frac{-\gb_i^T \nabla_{\thetashb} \mathcal{L}_j (\xb)}{\left\lVert \nabla_{\thetashb} \mathcal{L}_j \right\rVert^2}\right]_+ 
	&= \left[\frac{-\gb_i^T \nabla_{\thetashb} \mathcal{L}_j (\xb)}{\left\lVert \nabla_{\thetashb} \mathcal{L}_j \right\rVert}\right]_+ \frac{1}{{\left\lVert \nabla_{\thetashb} \mathcal{L}_j \right\rVert}} \\
	&=\left[ - \cos(\gb_i, \nabla_{\thetashb} \mathcal{L}_j) \left\lVert \gb_i \right\rVert \right]_+ \frac{1}{{\left\lVert \nabla_{\thetashb} \mathcal{L}_j \right\rVert}} \\
	& \in \left[0, \frac{\left\lVert \gb_i \right\rVert}{\left\lVert \nabla_{\thetashb} \mathcal{L}_j \right\rVert}\right].
	\end{aligned}
	\end{equation*}
	As $\gb_i$ is obtained by iterative projections of $\nabla_{\thetashb} \mathcal{L}_i$ onto the normals of $\nabla_{\thetashb} \mathcal{L}_j\ \forall j \in \mathcal{T} \setminus \{i\}$, and the norm of a vector can only decrease or remain unvaried after projections, we can write the coefficient of each $\gb_i$ update as:
	\begin{equation*}
	d_{ij} := \left[\frac{-\gb_i^T \nabla_{\thetashb} \mathcal{L}_j (\xb)}{\left\lVert \nabla_{\thetashb} \mathcal{L}_j \right\rVert^2}\right]_+ \in \left[0, \frac{\left\lVert \nabla_{\thetashb} \mathcal{L}_i \right\rVert}{\left\lVert \nabla_{\thetashb} \mathcal{L}_j \right\rVert}\right], \enskip \forall i \neq j.
	\end{equation*}
	Furthermore, if $|\mathcal{T}| > 2$ the contraction factor $\frac{\left\lVert \gb_i \right\rVert}{\left\lVert \nabla_{\thetashb} \mathcal{L}_i \right\rVert}$ for the norm of $g_i$ depends on the ordering of the projections, which is stochastic by design~\citep{Yu2020}. Therefore, $d_{ij}$ a random variable whose support is contained in $\left[0, \frac{\left\lVert \nabla_{\thetashb} \mathcal{L}_i \right\rVert}{\left\lVert \nabla_{\thetashb} \mathcal{L}_j \right\rVert}\right]$.
	Finally, exploiting the definition of $d_{ij}$,  we can re-write equation \cref{eq:pcgrad-update} as:
	\begin{equation*}
	\begin{aligned}
	-\gb 
	&= \sum_{i \in \mathcal{T}} \nabla_{\thetashb} \mathcal{L}_i + \sum_{i \in \mathcal{T}} \sum_{j \in \mathcal{T} \setminus \{i\}} d_{ij} \nabla_{\thetashb} \mathcal{L}_j  = \sum_{i \in \mathcal{T}} \nabla_{\thetashb} \mathcal{L}_i + \sum_{j \in \mathcal{T}} \sum_{i \in \mathcal{T} \setminus \{j\}} d_{ji} \nabla_{\thetashb} \mathcal{L}_i \\
	&= \sum_{j \in \mathcal{T}} \nabla_{\thetashb} \mathcal{L}_j + \sum_{j \in \mathcal{T}} \sum_{i \in \mathcal{T} \setminus \{j\}} d_{ji} \nabla_{\thetashb} \mathcal{L}_i = \sum_{j \in \mathcal{T}}  \left(  \sum_{i \in \mathcal{T} \setminus \{j\}} d_{ji} \nabla_{\thetashb} \mathcal{L}_i + \nabla_{\thetashb} \mathcal{L}_j  \right).
	\end{aligned}
	\end{equation*}
	Introducing (and then removing, using their definition) dummy variables $d_{jj}=1$:
	\begin{equation*}
	\begin{aligned}
	-\gb 
	&= \sum_{j \in \mathcal{T}}  \left(  \sum_{i \in \mathcal{T} \setminus \{j\}} d_{ji} \nabla_{\thetashb} \mathcal{L}_i + d_{jj} \nabla_{\thetashb} \mathcal{L}_j  \right) = \sum_{j \in \mathcal{T}}  \left(  \sum_{i \in \mathcal{T}} d_{ji} \nabla_{\thetashb} \mathcal{L}_i \right) = \sum_{i \in \mathcal{T}}  \left(  \sum_{j \in \mathcal{T}} d_{ji} \nabla_{\thetashb} \mathcal{L}_i \right) \\
	&=  \sum_{i \in \mathcal{T}} \nabla_{\thetashb} \mathcal{L}_i  \left(  \sum_{j \in \mathcal{T}} d_{ji} \right) = \sum_{i \in \mathcal{T}} \nabla_{\thetashb} \mathcal{L}_i  \left( 1 + \sum_{j \in \mathcal{T} \setminus \{i\}} d_{ji} \right),
	\end{aligned}
	\end{equation*}
	
	from which the result trivially follows.
\end{proof}

\pcgradcor*
\begin{proof}
	Let us start from the first statement, which does not require any assumption on the loss landscape. From proposition \ref{prop:pcgrad}, we get:
	\begin{equation*} 
	\hspace{-10pt}
	\begin{aligned}
	-\gb 
	&= \nabla_{\thetashb} \mathcal{L}_1  \left( 1 + d_{21} \right) + \nabla_{\thetashb} \mathcal{L}_2  \left( 1 + d_{12} \right)\\
	&=  \left( 1 + \left[\frac{- \cos(\nabla_{\thetashb} \mathcal{L}_1, \nabla_{\thetashb} \mathcal{L}_2) \left\lVert \nabla_{\thetashb} \mathcal{L}_2 \right\rVert}{\left\lVert \nabla_{\thetashb} \mathcal{L}_1 \right\rVert}\right]_+ \right) \nabla_{\thetashb} \mathcal{L}_1  \\
	& +\left( 1 + \left[\frac{- \cos(\nabla_{\thetashb} \mathcal{L}_1, \nabla_{\thetashb} \mathcal{L}_2) \left\lVert \nabla_{\thetashb} \mathcal{L}_1 \right\rVert}{\left\lVert \nabla_{\thetashb} \mathcal{L}_2 \right\rVert}\right]_+ \right) \nabla_{\thetashb} \mathcal{L}_2,
	\end{aligned}
	\end{equation*}	
	which shows that, in case of conflicting gradient directions, gradient norms are rebalanced proportionally to the angle between them.
	For $\cos(\nabla_{\thetashb} \mathcal{L}_1, \nabla_{\thetashb} \mathcal{L}_2) = -1$, the above evaluates to:
	\begin{equation*}
	-\gb 
	=  \left( \frac{\left\lVert \nabla_{\thetashb} \mathcal{L}_1 \right\rVert+\left\lVert \nabla_{\thetashb} \mathcal{L}_2 \right\rVert}{\left\lVert \nabla_{\thetashb} \mathcal{L}_1 \right\rVert} \right) \nabla_{\thetashb} \mathcal{L}_1  
	+\left( \frac{\left\lVert \nabla_{\thetashb} \mathcal{L}_1 \right\rVert+\left\lVert \nabla_{\thetashb} \mathcal{L}_2 \right\rVert}{\left\lVert \nabla_{\thetashb} \mathcal{L}_2 \right\rVert} \right) \nabla_{\thetashb} \mathcal{L}_2.
	\end{equation*}	
	The first part of the result then follows by pointing out that, if $\cos(\nabla_{\thetashb} \mathcal{L}_1, \nabla_{\thetashb} \mathcal{L}_2) = -1$, then $\nabla_{\thetashb} \mathcal{L}_1 = -\nabla_{\thetashb} \mathcal{L}_2$, and hence $\gb = \mbf{0}$. 
	We remark that a similar proof appears in \citep[theorem 1 and proposition 1]{Yu2020}. However, our derivation relaxes the author's assumptions on $\mathcal{L}^{\text{MT}}$ and is therefore applicable to the training of neural networks.
	
	Finally, given the assumptions on differentiability and smoothness, we need to prove that PCGrad converges to the stationary points of the unitary scalarization: this directly follows from~\citep[proposition 1]{Yu2020}.
\end{proof}

\subsection{GradDrop} \label{sec:supp-graddrop}

\begin{proposition}
	Let us assume, as often demonstrated in the single-task case~\citep{Ma2018,Allen-Zhou2019}, that the multi-task network has the capacity to interpolate the data on all tasks at once: $\min_{\thetab} \mathcal{L}^{\text{MT}}= \sum_{i \in \mathcal{T}} \min_{\thetab} \mathcal{L}_i$, and that its training by gradient descent attains such global minimum.  Then, if $\inf_{\thetab} \mathcal{L}_i > -\infty\ \forall \ i \in \mathcal{T}$, unitary scalarization converges to a joint~minimum.
	\label{prop:us-graddrop}
\end{proposition}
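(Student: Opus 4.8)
The plan is to show that the point $\thetab^*$ returned by gradient descent, which by hypothesis is a global minimizer of $\mathcal{L}^{\text{MT}}$, in fact minimizes each per-task loss separately, and then to apply the first-order optimality condition. First I would write $\mathcal{L}_i^{*} := \inf_{\thetab} \mathcal{L}_i$, which is finite for every $i \in \mathcal{T}$ by assumption, so that the capacity hypothesis reads $\mathcal{L}^{\text{MT}}(\thetab^*) = \min_{\thetab} \mathcal{L}^{\text{MT}} = \sum_{i \in \mathcal{T}} \mathcal{L}_i^{*}$. Since $\mathcal{L}^{\text{MT}}(\thetab^*) = \sum_{i \in \mathcal{T}} \mathcal{L}_i(\thetab^*)$ and each gap $\mathcal{L}_i(\thetab^*) - \mathcal{L}_i^{*}$ is nonnegative, a sum of nonnegative terms that equals zero forces every term to vanish: $\mathcal{L}_i(\thetab^*) = \mathcal{L}_i^{*}$ for all $i \in \mathcal{T}$. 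Thus $\thetab^*$ is simultaneously a global minimizer of each $\mathcal{L}_i$ over $\mathbb{R}^{S}$.

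Next I would invoke differentiability of the losses (implicit throughout the paper whenever per-task gradients are used) together with the fact that the optimization is unconstrained, so that a global minimizer is an interior stationary point by Fermat's rule: $\nabla_{\thetab} \mathcal{L}_i(\thetab^*) = \mbf{0}$ for every $i \in \mathcal{T}$. Restricting to the shared-parameter block gives $\nabla_{\thetashb} \mathcal{L}_i(\thetab^*) = \mbf{0}$ for all $i$, which is precisely the joint-minimum condition recalled in the GradDrop paragraph of $\S$\ref{sec:technical-reg}. This finishes the proof.

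There is no genuine obstacle here; the argument is a short chain of inequalities plus Fermat's rule. The only points that deserve a sentence of care are: (i) passing from $\min$ to $\inf$ and using the hypothesis $\mathcal{L}_i^{*} > -\infty$ so that $\sum_{i \in \mathcal{T}} \mathcal{L}_i^{*}$ is well defined and the termwise-equality step is legitimate; and (ii) noting that vanishing of the full gradient $\nabla_{\thetab} \mathcal{L}_i$ entails vanishing of its $\thetashb$-component, so that the standing convention of applying \gls{mtl} updates only to $\thetashb$ plays no role in this particular statement. The same reasoning goes through verbatim if one merely assumes that gradient descent converges to a point attaining the value $\sum_{i \in \mathcal{T}} \mathcal{L}_i^{*}$, rather than literally reaching a minimizer.
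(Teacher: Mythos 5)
Your argument is correct and follows essentially the same route as the paper's proof: the sum-of-nonnegative-gaps observation forces $\mathcal{L}_i(\thetab^*)=\min_{\thetab}\mathcal{L}_i$ for every task, and first-order optimality then gives $\nabla_{\thetab^*}\mathcal{L}_i=\mbf{0}$ for all $i$. You merely spell out more explicitly the steps the paper compresses into ``the globally optimal loss is attained for all tasks,'' including the role of the finiteness hypothesis.
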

\begin{proof}
	It suffices to point out that if $\mathcal{L}^{\text{MT}}(\thetab^*)=\sum_{i \in \mathcal{T}} \min_{\thetab} \mathcal{L}_i$, then the globally optimal loss is attained for all tasks. In other words $\mathcal{L}_i(\thetab^*) = \min_{\thetab} \mathcal{L}_i \ \forall i \in \mathcal{T}$, and hence $\nabla_{\thetab^*} \mathcal{L}_i = \mbf{0} \ \forall \ i \in \mathcal{T}$ (joint minimum). Furthermore, running gradient descent on $\min_{\thetab}\mathcal{L}^{\text{MT}}$ corresponds to the unitary scalarization ($\S$\ref{sec:setting}), which concludes the proof. %
\end{proof}

\randomgraddrop*

Proposition~\ref{prop:random-graddrop} can be proved by adapting the proof from \citet[proposition 1]{Chen2020}: it suffices to replace $f(\mathcal{P})$ with the Bernoulli parameter $p$, which is non-negative by definition. 
In our opinion, this seriously undermines the conflicting gradient hypothesis that motivated GradDrop.
For the reader's convenience, we now provide a straightforward and self-contained proof.

\begin{proof}
	Let us start from the statement on $\nabla_{\thetashb} \mathcal{L}^{\text{RGD}}$. If $\nabla_{\thetashb} \mathcal{L}_i = \mbf{0} \ \forall i \in \mathcal{T}$, then $\nabla_{\thetashb} \mathcal{L}^{\text{RGD}} = \mbf{0}$ with probability one. On the other hand, if $\exists j: \nabla_{\thetashb} \mathcal{L}_j \neq \mbf{0}$, then:
	\begin{equation*}
	\begin{aligned}
	\mathds{P}\left[ \nabla_{\thetashb} \mathcal{L}^{\text{RGD}} \neq \mbf{0} \right] &\geq \mathds{P}\left[ \nabla_{\thetashb} \mathcal{L}^{\text{RGD}} = \nabla_{\thetashb} \mathcal{L}_j \right] \\
	&= p (1 - p)^{m-1} > 0,
	\end{aligned}
	\end{equation*}
	where the first inequality comes from the fact that $\nabla_{\thetashb} \mathcal{L}^{\text{RGD}} = \nabla_{\thetashb} \mathcal{L}_j$ is only one of the many instances of a non-null $\nabla_{\thetashb} \mathcal{L}^{\text{RGD}}$.
\end{proof}

Let $\text{sign}(\xb)$ stand for the element-wise sign operator applied on $\xb$.
On encoder-decoder architectures, similarly to \gls{mgda} and \gls{imtl} (see appendices~\ref{sec:supp-mgda} and \ref{sec:supp-imtl}), the authors do not apply GradDrop on $\nabla_{\thetashb} \mathcal{L}_i$, but rather on a the usually less expensive $\nabla_{\zb} \mathcal{L}_i$. 
In more detail, they compute the GradDrop sign purity scores $\mbf{p}$ from equation \cref{eq:graddrop} on $\sum_{i=1}^{n}\left(\text{sign}(\zb) \odot \nabla_{\zb} \mathcal{L}_i\right)[i] \in \mathbb{R}^r$, and then apply equation \cref{eq:graddrop} on the $\nabla_{\zb} \mathcal{L}_i$ gradients, %
yielding a vector $\gb_{z} \in \mathbb{R}^{n \times r}$. 
Then, relying on reverse-mode differentiation,
the update direction in the space of the parameters $\thetashb$ is obtained via a Jacobian-vector product: $\gb = -\left( \frac{\partial \zb}{\partial \thetashb} \right)^T \gb_{z}$. Such a computation replaces the similar $\nabla_{\thetashb} \mathcal{L}^{\text{MT}} = \left(\frac{\partial \zb}{\partial \thetashb}\right)^T \nabla_{\zb} \mathcal{L}^{\text{MT}}$ from the unitary scalarization.
\newpage

\section{Experimental Setting, Reproducibility}

We now present details concerning the experimental settings from $\S$\ref{sec:experiments}, including details on the employed open-source software, dataset information, hardware specifications, and hyper-parameters.

\subsection{Supervised Learning} \label{sec:sl-setup}

All the experiments were run under Ubuntu 18.04 LTS, on a single GPU per run (using two 8-GPU machines in total). Timing experiments were all run on Nvidia GeForce GTX 1080 Ti GPUs, with an Intel Xeon E5-2650 CPU.
The remaining experiments were run on either Nvidia GeForce RTX 2080 Ti GPUs or Nvidia GeForce GTX 1080 Ti GPUs, respectively using an Intel Xeon Gold 6230 CPU or an Intel Xeon E5-2650 CPU.

\subsubsection{MultiMNIST}

Multi-MNIST, originally introduced by \citet{Sabour2017} and as modified by \citet{Sener2018}, is a simple two-task supervised learning benchmark dataset constructed by uniformly sampling MNIST~\citep{LeCun1998} images, and placing one in the top-left corner, the other in the bottom-right corner. Each of the two overlaid images corresponds to a 10-class classification task. 
Using the above procedure, we generate the Multi-MNIST training set from the first $50000$ MNIST training images, the validation set from the last $10000$ training images, and the test set from the original MNIST test set.
For consistency with the experimental setup of \citet{Sener2018}, we employ a modified encoder-decoder version of the LeNet architecture~\citep{LeCun1998}. Specifically, the last layer is omitted from the encoder, and two fully-connected layers are employed as task-specific predictive heads. The cross-entropy loss is used for both tasks.
All methods are trained for $100$ epochs using Adam~\citep{Kingma2015} in the stochastic gradient setting, with an initial learning rate of $\eta = 10^{-2}$ (tuned in $\eta \in \{10^{-3}, 10^{-2}, 10^{-1}\}$ and yielding the best validation results for all considered algorithms), exponentially decayed by $0.95$ after each epoch, and a mini-batch size of $256$.

\subsubsection{CelebA}

The CelebA~\citep{liu2015faceattributes} dataset consists of $200,000$ headshots (with standard training, validation and test splits) associated with the presence or absence of $40$ attributes. In the \gls{mtl} literature, is commonly treated as a $40$-task classification problem, each task being a binary classification problem for an attribute.
As commonly done in previous work~\citep{Sener2018,Yu2020,Liu2021}, we employ an encoder-decoder architecture where the encoder is a ResNet-18~\citep{He2016} (without the final layer) with batch normalization layers~\citep{Ioffe15}, and the per-task decoders are linear classifiers. The cross-entropy loss is used for all tasks.
The training is performed from scratch for $50$ epochs using Adam, with a mini-batch size of $128$ and a per-epoch exponential decay factor of $0.95$. As common on this network-dataset combination~\citep{rlw,Chen2020}, the initial learning rate is $\eta = 10^{-3}$ for all methods except for MGDA and IMTL, for which $\eta = 5 \times 10^{-4}$ yielded a better validation performance.
As done by the respective authors, for PCGrad, RLW and GradDrop we use the same learning rate as the unitary scalarization~\citep{Yu2020,rlw,Chen2020}. 

\subsubsection{Cityscapes}

We rely on the version of the dataset pre-processed by \citet{Liu2019}, which consists of $2,975$ training and $500$ test images and presents two tasks: semantic segmentation on $7$ classes, and depth estimation. We further split the original training set into a validation set of $595$ images, employed to tune hyper-parameters, and a training set of $2380$ images.
Consistently with recent work~\citep{rlw}, we rely on a dilated ResNet-50 architecture pre-trained on ImageNet~\citep{Yu2017} for the encoder, and on the Atrous Spatial Pyramid Pooling~\citep{ASPP}, which internally uses batch normalization, as decoders. 
While more powerful encoders might lead to better performance on Cityscapes, like the SegNet~\citep{SegNet} used in~\citep{javaloy2022rotograd,liu2021conflict,Navon2022}, we aim to provide a fair comparison of \gls{mtl} optimizers, rather than maximize overall task performance.
Cross-entropy loss is employed on the semantic segmentation task, whereas the $\ell_1$ loss is used for the depth estimation.
The training is performed by using Adam with a mini-batch size of $32$ for $100$ epochs, with an initial step size $\eta = 5 \times 10^{-4}$ resulting in the best validation performance for all algorithms, exponentially decayed by $0.95$ at each epoch.

\subsection{Reinforcement Learning}
\label{ref:rl-hyperparams}

Similarly to the supervised learning experiments, we ran all the experiments under Ubuntu 18.04 LTS using one GPU per run (using six 8-GPU machines in total).
Timing experiments were all run using NVIDIA GeForce RTX 2080 Ti GPUs, with an Intel Xeon Gold 6230 CPU.
The main bulk of the remaining experiments was run on Nvidia GeForce RTX 2080 Ti GPUs with either Intel Xeon Gold 6230 or Intel Xeon Silver 4216.
We utilised NVIDIA GeForce RTX 3080 GPUs with Intel Xeon Gold 6230 CPUs for a small fraction of experiments.

We use Meta-World's MT10/MT50 for our experiment.
The benchmark consists of ten/fifty tasks in which a simulated robot manipulator has to perform various actions, e.g., pressing a button, opening a door, or pushing the block.
We use~\citet{Sodhani2021} for most of the hyperparameters and list them in Table~\ref{tab:hyperparameters}.
We use bold font where we use a hyperparameter different from~\citet{Sodhani2021}.
Similarly to~\citet{Sodhani2021}, we use the \textsc{v1} version of Metaworld for our experiments\footnote{\url{https://github.com/rlworkgroup/metaworld.git@af8417bfc82a3e249b4b02156518d775f29eb289}}.
\citet{Sodhani2021} use a shared entropy loss weight $\alpha$ for PCGrad and separate $\alpha$ for unitary scalarization\footnote{\url{https://mtrl.readthedocs.io/en/latest/pages/tutorials/baseline.html}}.
In our experiments, use shared $\alpha$ for all of the methods for fairness.
Since it is a single number (rather than a vector), we used unitary scalarization to update $\alpha$ for all~\glspl{smto} apart from PCGrad which was already implemented in~\citep{Sodhani2021}.

We use the same network architecture as in~\citet{Sodhani2021}, i.e. a three-layered feedforward fully-connected network with 400 hidden units per layer for both, the actor and the critic. 
The actor is shared across all tasks as well as the critic.

To normalize rewards, we keep track of first and second moments in the buffer and normalise the rewards by their standard deviation: $r'_i = \nicefrac{r_i}{\hat{\sigma}_i},$ where $\hat{\sigma}_i$ is the sample standard deviation of the rewards for environment $i$.

\citet{Sodhani2021} average the gradient for unitary scalarisation and pcgrad, whereas our~\gls{smto} implementations sum the gradients, i.e. effectively using larger learning rates (apart from MGDA that assures that all the aggregation weights sum to 1). 
We tried reducing the learning rate for~\glsplural{smto} that sum (RLW Norm., RLW Diri., and GradDrop) both for MT10 and MT50, but it worked worse for these methods and we kept the default learning rate for them as well.
We had to use a smaller learning rate for IMTL, because with the default one it crashed at the beginning of training due to numerical overflow.
Smaller learning rate did not prevent it from crashing, but this happened much later.

\begin{wraptable}{L}{0.5\textwidth}
	\centering
	\caption{Hyperparameters of the~\gls{rl} experiments. Hyperparameters different from~\citet{Sodhani2021} are in bold.}
	\label{tab:hyperparameters}
	\begin{tabular}{ll}
		\textbf{Hyperparameter}&\textbf{Value}\\
		\midrule
		All methods &\\
		\midrule
		-- training steps & 2,000,000\\
		-- batch size & 1280\\
		-- \textbf{Replay buffer size} & \textbf{4,000,000}\\
		-- actor learning rate & 0.0003\\
		-- critic learning rate & 0.0003\\
		-- entropy $\alpha$ learning rate & 0.0003\\
		-- shared entropy $\alpha$ & True\\
		-- runs & 10\\
		-- discounting $\gamma$ & 0.99\\
		\midrule
		Unit. Scal.&\\
		\midrule
		-- \textbf{actor} $l_2$ \textbf{coeff.}& \textbf{0.0003}\\
		\midrule
		PCGrad&\\
		\midrule
		-- \textbf{actor} $l_2$ \textbf{coeff.}& \textbf{0.0001}\\
		\midrule
		RLW Norm.&\\
		\midrule
		-- normal mean & 0\\
		-- normal std & 1\\
		-- actor $l_2$ coeff.& 0.0003\\
		\midrule
		RLW Diri.&\\
		\midrule
		-- $\alpha$&1\\
		-- actor $l_2$ coeff.& 0.0003\\
		\midrule
		GradDrop&\\
		\midrule
		-- $$k$$&1\\
		-- $$p$$&0.5\\
		-- actor $l_2$ coeff.& 0.0001\\
		\midrule
		MGDA&\\
		\midrule
		-- gradient normalization & $L_2$\\
		-- actor $l_2$ coeff.& 0.0\\
		\midrule
		IMTL&\\
		-- actor learning rate & 0.00003\\
		-- critic learning rate & 0.00003\\
		-- entropy $\alpha$ learning rate & 0.00003\\
		-- actor $l_2$ coeff.& 0.0\\
		\midrule
	\end{tabular}
\vspace{-30pt}
\end{wraptable}

We tried $10^{6}$, $2\times10^{6}$, and $4\times10^{6}$ for the replay buffer size with the last being superior in terms of stability.
Additionally, for $l_2$ actor regularization, we tried $0.0001$ and $0.0003$ with the latter being slightly superior for the baseline.
We tried the same options for other~\glspl{smto}, and picked the best option for each of the method.
For MGDA, no regularisation works best, most likely due to a strong regularization effect of the method itself, which is mirrored by our supervised learning results.
PCGrad and Graddrop work best with the regularization coefficient of $0.0001$.
Both RLW variants use the same coefficient as the baseline ($0.0003$).

For MT50, we took the best MT10 hyperparameters, and we believe one could obtain even better results for unitary scalarisation since it is much faster to tune compared to other~\glspl{smto} (e.g. 15 hours for unitary scalarisation vs 9 days for PCGrad).

\subsection{Software Acknowledgments and Licenses} \label{sec:licenses}

Our codebase is built upon several prior works: \citep{Sener2018}, \cite{Liu2019}, \citep{rlw} and \citep{Sodhani2021}: all of them were released under a MIT license. We also acknowledge \citet{Pytorch-PCGrad}, upon which we built some of our code.
Multi-MNIST is based on MNIST dataset that is released under Creative Commons Attribution-Share Alike 3.0 license.
The code for generating Multi-MNIST dataset was taken from~\citet{Sener2018} released under MIT license.
CelebA dataset has a custom license allowing non-commercial research purposes.
More details can be found on the project website:\url{http://mmlab.ie.cuhk.edu.hk/projects/CelebA.html}.
Cityscapes also has a custom license allowing non-commercial research purposes. 
The full text of the license can be found on the project website:\url{https://www.cityscapes-dataset.com/license/}.
Metaworld, used for~\gls{rl} experiments is released under MIT license.

\section{Supplementary Supervised Learning Experiments} \label{sec:supp-sl-experiments}

This section presents supervised learning results omitted from $\S$\ref{sec:sl-experiments}. In particular, we show additional plots for the experiments of $\S$\ref{sec:sl-experiments}, then present an analysis of the regularising effect of \glspl{smto} in the absence of single-task regularization ($\S$\ref{sec:unreg}), and conclude with an ablation study on GradDrop's dependency on the sign of per-task gradients~($\S$\ref{sec:random-graddrop}).

\subsection{Addendum}

This section complements the plots presented in $\S$\ref{sec:sl-experiments}. In particular, we show the test and runtime results in table form, along with the behavior of the validation metrics and of the training loss over the training epochs.
Plots for Multi-MNIST, CelebA, and Cityscapes are reported in Figures \ref{fig:mnist-supplementary}, \ref{fig:celeba-supplementary} and \ref{fig:cityscapes-supplementary}, respectively.

The behavior of the CelebA training loss demonstrates heavier regularization (compare with the unregularized plot in Figure \ref{fig:celeba-unreg-loss}). Except IMTL and MGDA, for which the tuned values of the weight decay prevent overfitting, the other optimizers display very similar validation and training curves, and start overfitting around epoch $30$. Considering that most \glspl{smto} required less regularization (see $\S$\ref{sec:celeba-experiments}), the results are consistent with our interpretation of \glspl{smto} as regularizers in $\S$\ref{sec:critical-analysis}.
The Cityscapes plots display a certain instability across training epochs, as demonstrated by the various peaks and valleys in the metrics. Nevertheless, in spite of a factor $10$ difference in scale, both training losses are similarly decreased by most optimizers.
\vspace{10pt}
\begin{figure}[h!]
	\begin{subfigure}{0.49\textwidth}
		\centering
		\includegraphics[width=\textwidth]{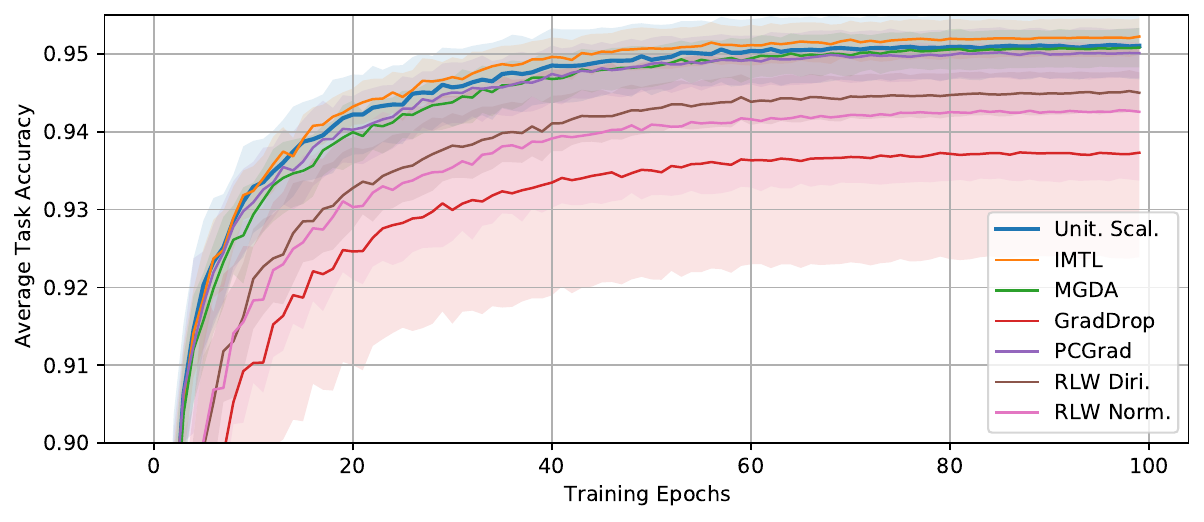}
		\caption{Mean (and 95$\%$ CI) average task validation accuracy per training~epoch.}
		\label{fig:mnist-plot}
	\end{subfigure}
	\begin{subfigure}{0.49\textwidth}
		\centering
		\includegraphics[width=\textwidth]{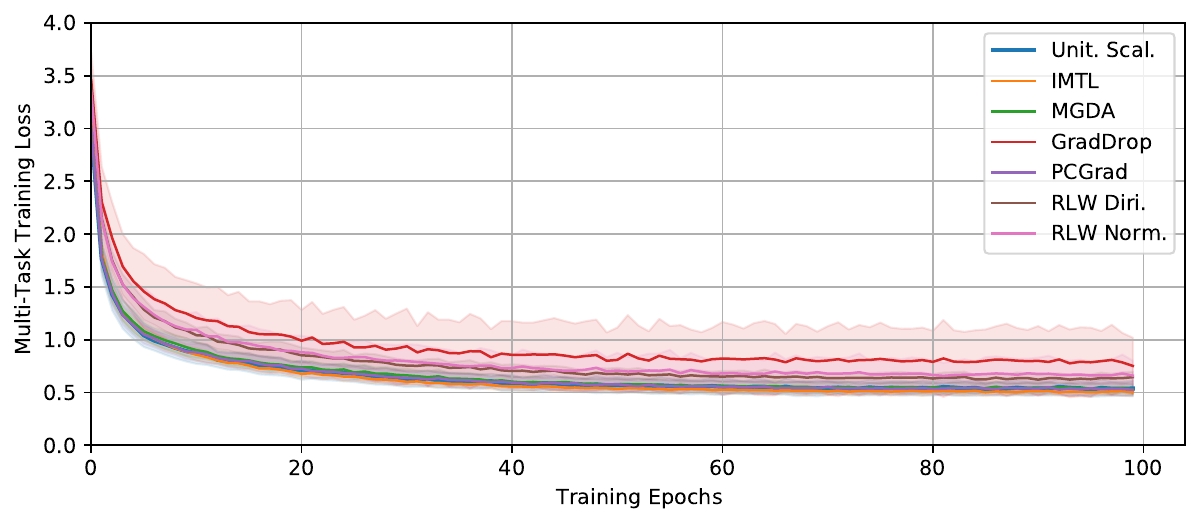}
		\caption{Mean (and 95$\%$ CI) training multi-task loss $\mathcal{L}^{\text{MT}}$ per epoch.}
		\label{fig:mnist-loss}
	\end{subfigure} 
	\sisetup{detect-weight=true,detect-inline-weight=math,detect-mode=true}
	\begin{subtable}{.49\textwidth}
		\centering
		\vspace{10pt}
		\footnotesize
		\begin{adjustbox}{max width=\textwidth, center}
			\begin{tabular}{lll}
				\toprule
				MTO &      Average Task Accuracy &       Epoch Runtime [s] \\
				\midrule
				Unit. Scal. &  9.476e-01 $\pm$ 4.368e-03 &  [3.510e+00, 3.617e+00] \\
				IMTL &  9.487e-01 $\pm$ 2.533e-03 &  [3.695e+00, 3.996e+00] \\
				MGDA &  9.478e-01 $\pm$ 1.977e-03 &  [3.491e+00, 3.617e+00] \\
				GradDrop &  9.347e-01 $\pm$ 1.282e-02 &  [3.508e+00, 3.589e+00] \\
				PCGrad &  9.479e-01 $\pm$ 3.578e-03 &  [3.807e+00, 3.928e+00] \\
				RLW Diri. &  9.430e-01 $\pm$ 2.973e-03 &  [3.790e+00, 4.005e+00] \\
				RLW Norm. &  9.399e-01 $\pm$ 8.929e-03 &  [3.894e+00, 4.225e+00] \\
				\bottomrule
			\end{tabular}
		\end{adjustbox}
		\subcaption{\label{table:mnist} \small Mean and 95$\%$ CI of the avg. task test accuracy across runs, and interquartile range for the training time per epoch.}
	\end{subtable} 	
	\hspace{10pt}
	\vspace{5pt}
	\begin{minipage}{.45\textwidth}
	\caption{\label{fig:mnist-supplementary} Additional figures for the comparison of various \gls{smto}s with the unitary scalarization on the MultiMNIST dataset~\citep{Sener2018}. }
	\end{minipage}
\end{figure}

\begin{figure}[h!]
	\begin{subfigure}{0.49\textwidth}
		\centering
		\includegraphics[width=\textwidth]{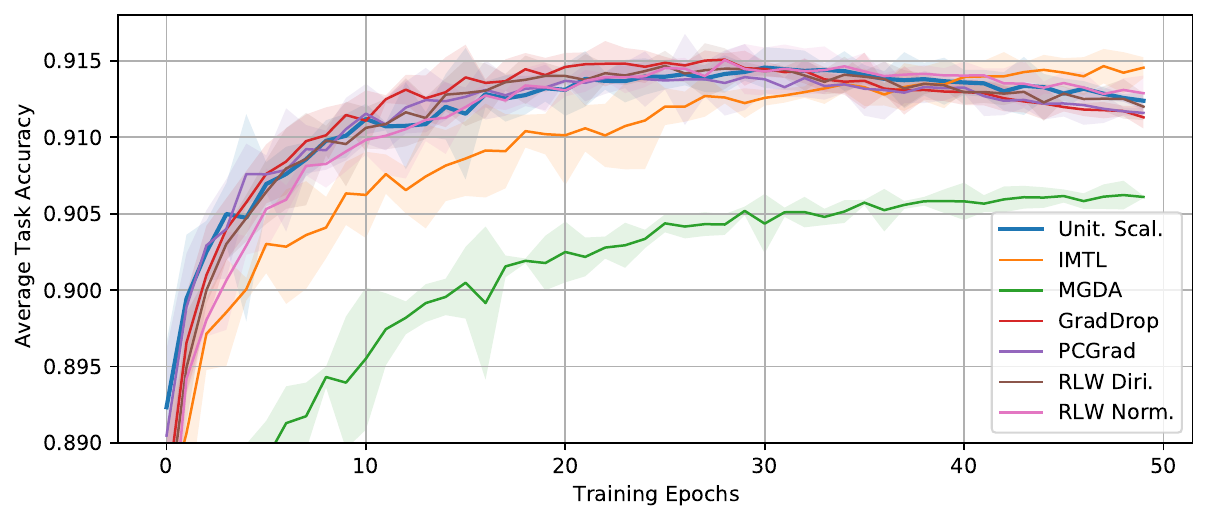}
		\caption{Mean (and 95$\%$ CI) average task validation accuracy per training epoch.}
		\label{fig:celeba-plot}
	\end{subfigure}
	\begin{subfigure}{0.49\textwidth}
		\centering
		\includegraphics[width=\textwidth]{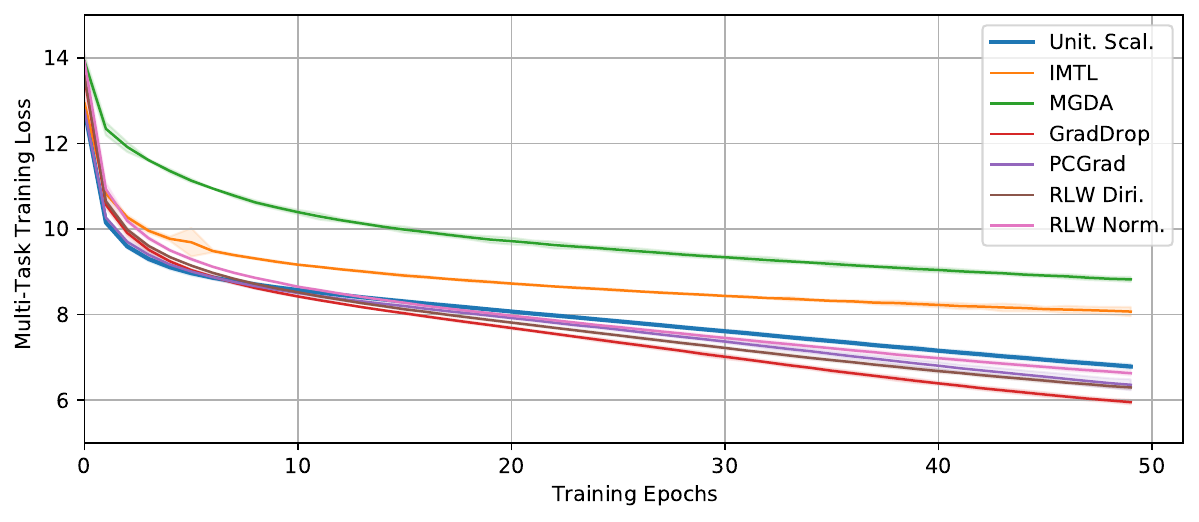}
		\caption{Mean (and 95$\%$ CI) training multi-task loss $\mathcal{L}^{\text{MT}}$ per epoch.}
		\label{fig:celeba-loss}
	\end{subfigure}
	\sisetup{detect-weight=true,detect-inline-weight=math,detect-mode=true}
	\begin{subtable}{.49\textwidth}
		\centering
		\vspace{10pt}
		\footnotesize
		\begin{adjustbox}{max width=\textwidth, center}
			\begin{tabular}{lll}
				\toprule
				MTO &     Average Task Accuracy &      Epoch Runtime [s] \\
				\midrule
				Unit. Scal. & 9.090e-01 $\pm$ 7.568e-04 & [2.869e+02, 2.878e+02] \\
				IMTL & 9.093e-01 $\pm$ 7.631e-04 & [3.600e+02, 3.621e+02] \\
				MGDA & 9.022e-01 $\pm$ 9.687e-04 & [6.859e+02, 7.194e+02] \\
				GradDrop & 9.098e-01 $\pm$ 3.383e-04 & [3.001e+02, 3.008e+02] \\
				PCGrad & 9.093e-01 $\pm$ 1.108e-03 & [1.015e+04, 1.016e+04] \\
				RLW Diri. & 9.099e-01 $\pm$ 7.845e-04 & [3.040e+02, 3.054e+02] \\
				RLW Norm. & 9.095e-01 $\pm$ 1.012e-03 & [3.028e+02, 3.037e+02] \\
				\bottomrule
			\end{tabular}
		\end{adjustbox}
		\subcaption{\label{table:celeba} \small Mean and 95$\%$ CI of the avg. task test accuracy across runs, and interquartile range for the training time per epoch.}
	\end{subtable} 
	\hspace{10pt}
	\vspace{5pt}
	\begin{minipage}{.45\textwidth}
	\caption{\label{fig:celeba-supplementary} Additional figures for the comparison of various \gls{smto}s with the unitary scalarization on the CelebA~\citep{liu2015faceattributes} dataset. }
	\end{minipage}

\end{figure}

\subsection{Unregularized Experiments} \label{sec:unreg}

Figures \ref{fig:celeba-unreg-validation}, \ref{fig:celeba-unreg-loss} and \ref{fig:celeba-unreg-val-loss} respectively report the average task validation accuracy, the multi-task training loss, and the multi-task validation loss at each training epoch.
The regularizing effect of \glspl{smto} compared to unitary scalarization is shown by: (i) the delay of the onset of overfitting on the validation data in figure~\ref{fig:celeba-unreg-validation}, (ii) the reduction of the convergence rate on the training loss in figure~\ref{fig:celeba-unreg-loss} (compare with figure~\ref{fig:celeba-loss}), and (iii) the fact that validation and training losses remain positively correlated for larger numbers of epochs.
In fact, the behavior of both the training and validation loss for the \glspl{smto} closely parallels that of $\ell_2$-regularized unitary scalarization, with differing degrees of regularization.
We further note that unregularized IMTL displays a certain instability (compare with the regularized version in figure~\ref{fig:celeba-plot}).

The addition of dropout layers further reduces overfitting, improves stability (reduced confidence intervals) and pushes the average validation curve upwards, motivating its use on all optimizers for the experiments of~$\S$\ref{sec:celeba-experiments}. Nevertheless, confidence intervals in Figure~\ref{fig:celeba-unreg-validation} still overlap due to the instability of the unregularized unitary scalarization. 
Figure~\ref{fig:celeba-unreg-violin} provides a more detailed comparison over $20$ repetitions, confirming that the combined use of dropout layers and $\ell_2$ regularization improves average performance and reduces the empirical variance for unitary scalarization.
Furthermore, Figure~\ref{fig:celeba-unreg-val-all} shows that regularization improves the peak average validation performance for all algorithms, demonstrating the need of tuning $\lambda$ also for \glspl{smto}.
We conclude by pointing out that even without regularization, when carefully tuned, the maximal performance over epochs of unitary scalarization is comparable to \glspl{smto} in Figure~\ref{fig:celeba-unreg-validation}.

\begin{figure}[!h]
	\begin{subfigure}{0.49\columnwidth}
		\centering
		\includegraphics[width=\columnwidth]{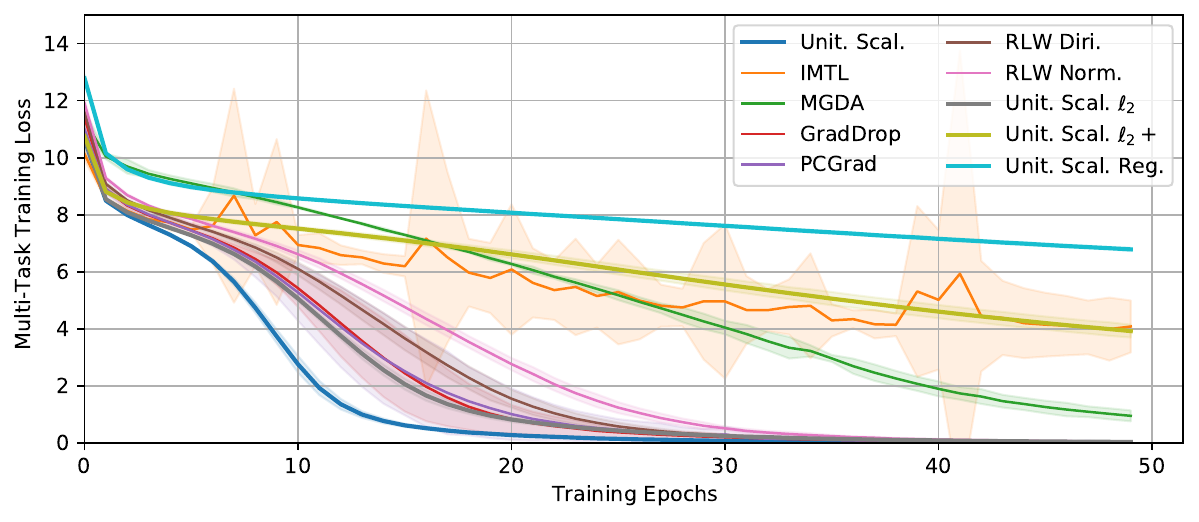}
		\vspace{-15pt}
		\caption{Mean and 95$\%$ CI (3 runs) multi-task training loss per epoch. }
		\label{fig:celeba-unreg-loss}
	\end{subfigure}
	\begin{subfigure}{0.49\columnwidth}
		\centering
		\includegraphics[width=\columnwidth]{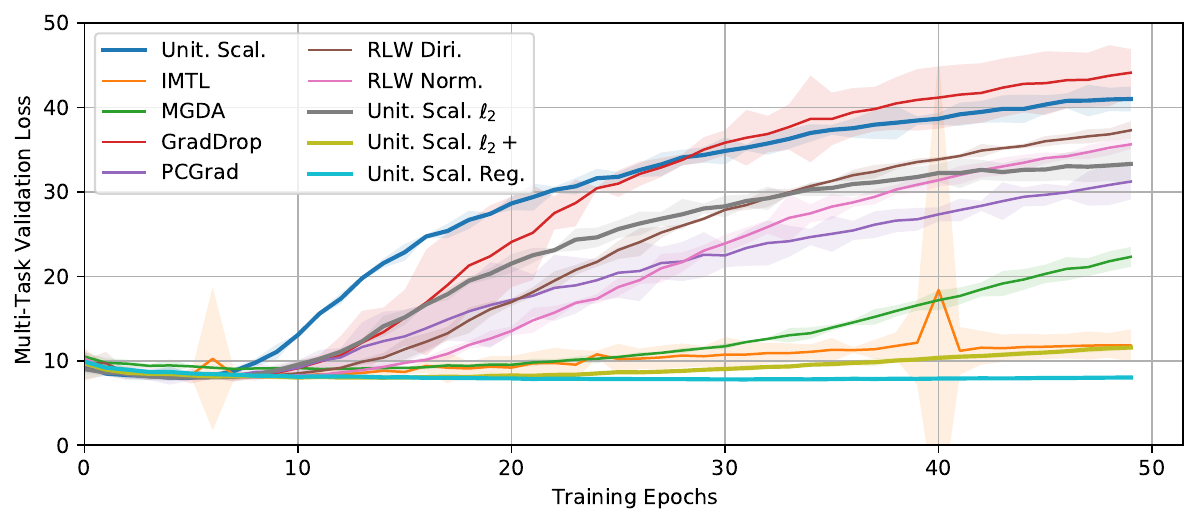}
		\vspace{-15pt}
		\caption{Mean and 95$\%$ CI (3 runs) multi-task validation loss per training~epoch. }
		\label{fig:celeba-unreg-val-loss}
	\end{subfigure}
	\caption{Additional figures for the unregularized comparison of various \gls{smto}s with the unitary scalarization on CelebA. \label{fig:celeba-unreg}
	\glspl{smto} provide varying degrees of regularization.}
\end{figure}
\begin{figure}[!h]
	\centering
	\includegraphics[width=0.49\columnwidth]{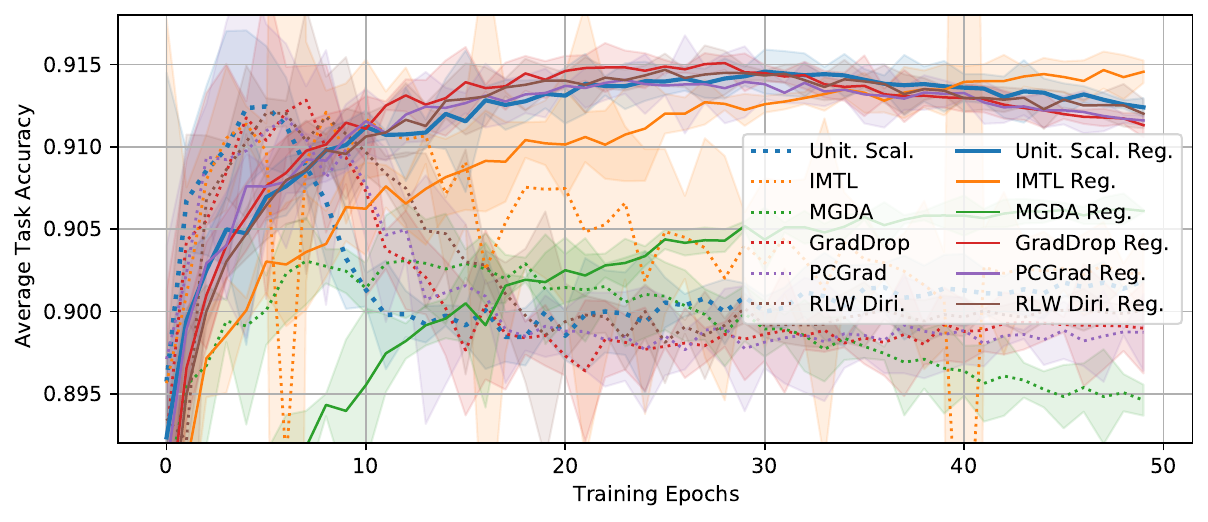}
	\caption{
		Effect of regularization (dropout layers and weight decay) on the average task validation accuracy for all considered optimizers on the CelebA dataset: 
		regularization improves the average performance of all algorithms.
		\label{fig:celeba-unreg-val-all}}
\end{figure}
\begin{figure}[!h]
		\centering
		\includegraphics[width=0.49\columnwidth]{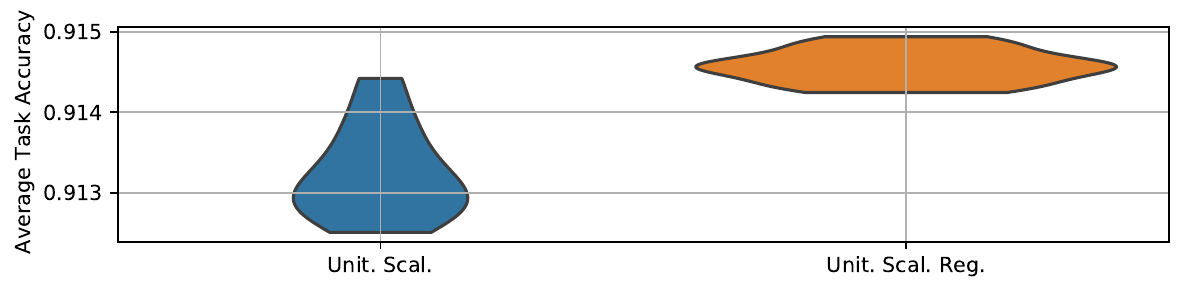}
	\caption{
		Effect of regularization (dropout layers and weight decay) on unitary scalarization on the CelebA dataset: 
		violin plots (20 runs) for the best avg. task validation accuracy over epochs. The width at a given value represents the proportion of runs yielding that result.
		\label{fig:celeba-unreg-violin}
		Regularization improves the average performance while decreasing its variability.}
\end{figure}

\subsection{Sign-Agnostic GradDrop} \label{sec:random-graddrop}

We will now present an ablation study on GradDrop, investigating the effect of the sign of per-task gradients on the \gls{smto}'s performance.
Specifically, we compare the performance of GradDrop with a sign-agnostic version of its stochastic gradient masking (which we refer to as ``Sign-Agnostic GradDrop"), whose update direction is defined as follows:
\begin{equation*}
	\gb = -\left( \frac{\partial \zb}{\partial \thetashb} \right)^T \left(\sum_{i \in \mathcal{T}} \mbf{u}_i \odot \nabla_{\zb} \mathcal{L}_i\right),
\end{equation*}
where $\mbf{u}_i, \nabla_{\zb} \mathcal{L}_i \in \mathbb{R}^{n \times r}$ and, for all $i \in \mathcal{T}$, $\mbf{u}_i$ is i.i.d. according to $\mbf{u}_i[j, k] \sim \text{Bernoulli}(p) \ \forall j \in \{1, \dots, n\}, k \in \{1, \dots, r\}$.
Differently from a similar study carried out by \citet{Chen2020}, we tuned the hyper-parameter of the sign-agnostic masking in the following range: $p\in\{0.1, 0.25, 0.5, 0.75, 0.9\}$.

The experimental setup complies with the one described in appendix \ref{sec:sl-setup}.
Figure~\ref{fig:celeba-graddrop}, plotting test and validation results for the CelebA dataset~\citep{liu2015faceattributes}, shows that the performance of Sign-Agnostic GradDrop closely matches the original algorithm. Therefore, sign conflicts across per-task gradients do not seem to play a significant role in GradDrop's performance.

\begin{figure}[h!]
	\begin{subfigure}{0.49\textwidth}
		\centering
		\includegraphics[width=\textwidth]{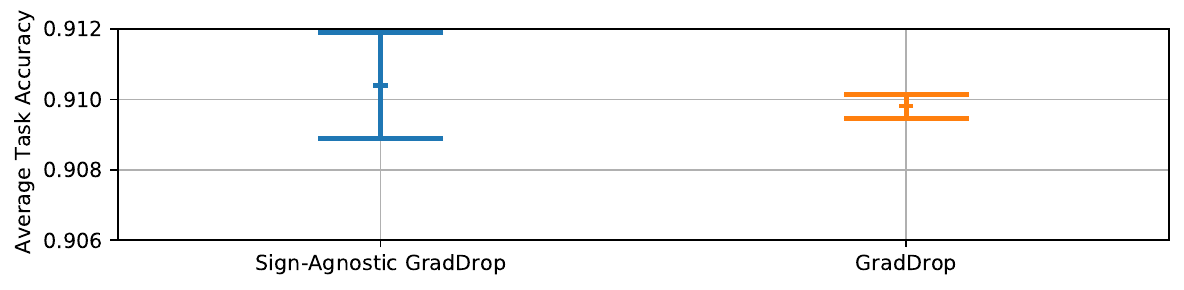}
		\caption{Mean and 95$\%$ CI (3 runs) avg. task test accuracy.}
		\label{fig:celeba-graddrop-best}
	\end{subfigure} %
	\begin{subfigure}{0.49\textwidth}
		\centering
		\includegraphics[width=\textwidth]{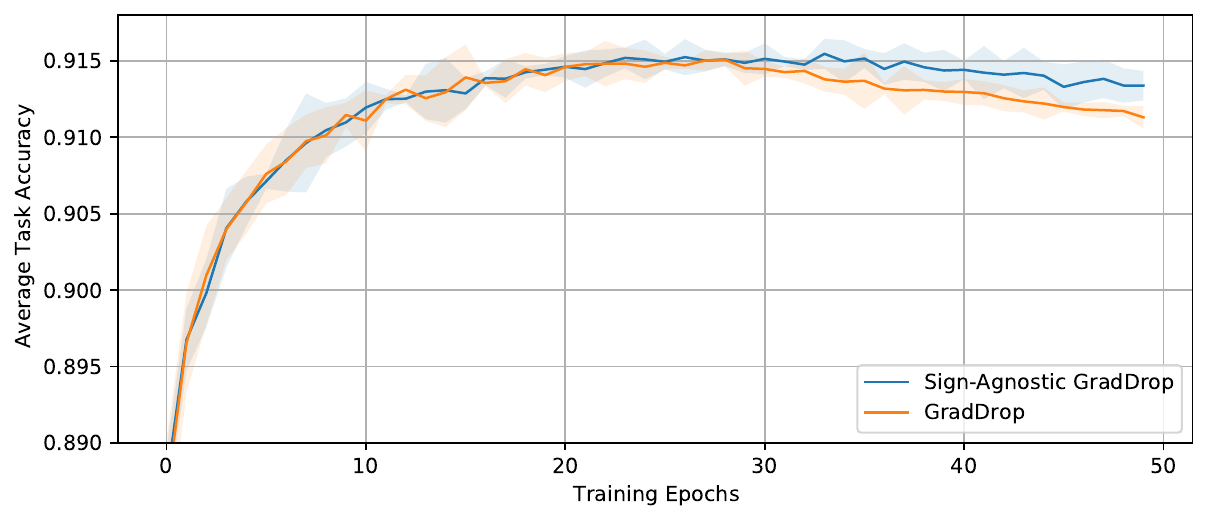}
		\caption{Mean and 95$\%$ CI (3 runs) avg. task validation accuracy per training epoch.}
		\label{fig:celeba-graddrop-epoch}
	\end{subfigure}
	\caption{\label{fig:celeba-graddrop} Comparison of GradDrop~\citep{Chen2020} with sign-agnostic masking of the shared-representation gradients on the CelebA dataset~\citep{liu2015faceattributes}. No statistically relevant difference between the two methods can be observed for the majority of the epochs.}
\end{figure}

\section{Supplementary Reinforcement Learning Experiments} \label{sec:supp-rl-experiments}

\subsection{Addendum}

This section presents additional plots for the \gls{rl} experiments in $\S$\ref{sec:rl-experiments}. Specifically, Figure~\ref{fig:mt-with-imtl} re-plots Figure~\ref{fig:mt10-success} and \ref{fig:mt50-success} with the omitted IMTL results, while Figure~\ref{fig:mt-curves} shows the learning curves omitted from $\S$\ref{sec:rl-experiments}. 
As pointed out in~$\S$\ref{sec:rl-experiments}, none of the IMTL runs successfully terminated due to numerical instability. 
Indeed, \citet{Liu2021} show that, in supervised settings, coefficients do not fluctuate much across epochs~\citep[Figure 4, appendix B]{Liu2021} and never become negative.  By contrast, up to 50\% of the scaling coefficients $\alpha$ are negative in our experiments, thus reversing subtask gradient directions.
MGDA, which constrains the weights, is more stable and is comparable to unitary scalarization.
In order to avoid incomplete curves and unfair calculations of the mean, Figure~\ref{fig:mt-curves} plots the highest value ever achieved by \emph{any} seed as a dashed horizontal line. The IMTL results in Figure~\ref{fig:mt-with-imtl}, instead, report the best average success rate of each seed until its termination.

\begin{figure*}[t!]
	\begin{subfigure}{0.49\textwidth}
		\centering
		\includegraphics[width=\textwidth]{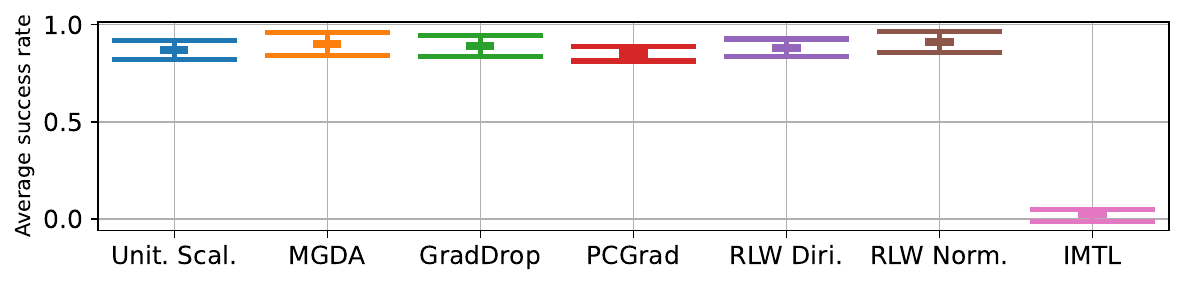}
		\vspace{-18pt}
		\caption{\label{fig:mt10-success-with-imtl} MT10 (10 runs per method).}
	\end{subfigure}\hspace{5pt}
	\begin{subfigure}{0.49\textwidth}
		\centering
		\includegraphics[width=\columnwidth]{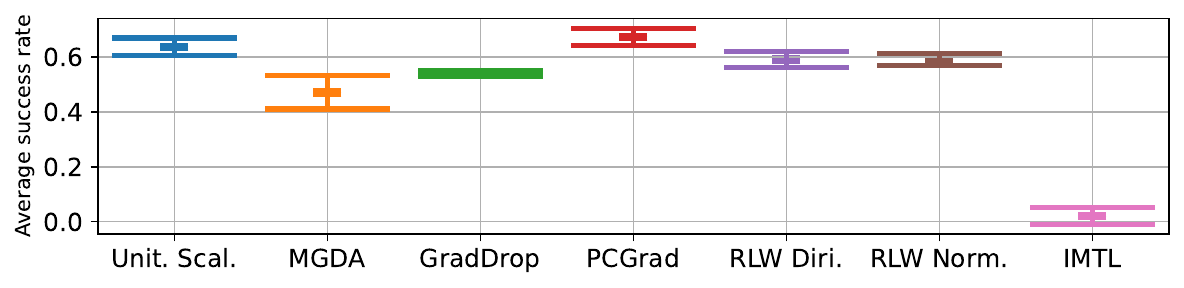}
		\vspace{-18pt}
		\caption{\label{fig:mt50-success-with-imtl} MT50 (10 runs per method).}
	\end{subfigure}
	\vspace{-5pt}
	\caption{\label{fig:mt-with-imtl} Mean and 95$\%$ CI for the best avg. success rate on Metaworld. None of the \glspl{smto} significantly outperforms unitary scalarization.}
\end{figure*}
\begin{figure*}[t!]
	\begin{subfigure}{0.49\textwidth}
		\centering
		\includegraphics[width=\textwidth]{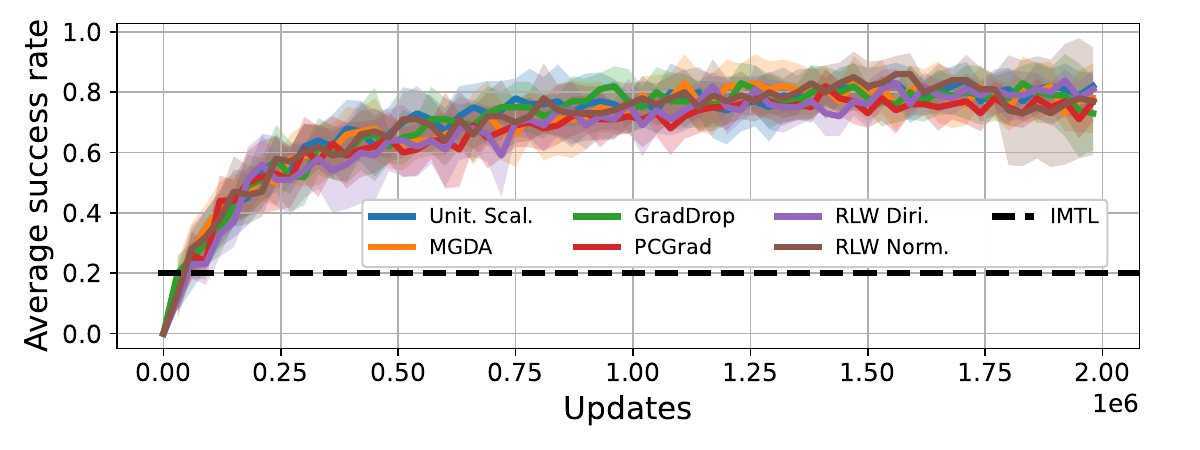}
		\vspace{-15pt}
		\caption{\label{fig:mt10-curves} MT10 (10 points per method).}
	\end{subfigure}\hspace{5pt}
	\begin{subfigure}{0.49\textwidth}
		\centering
		\includegraphics[width=\textwidth]{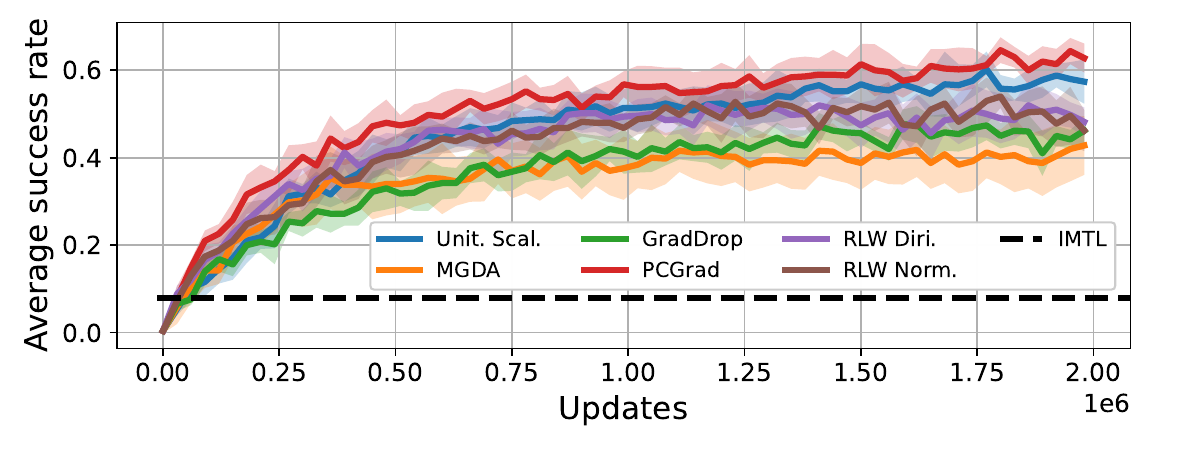}
		\vspace{-15pt}
		\caption{\label{fig:mt50-curve} MT50 (10 points per method).}
	\end{subfigure}
	\vspace{-5pt}
	\caption{\label{fig:mt-curves} Mean and 95$\%$ CI for the avg. success rate on Metaworld. None of the \glspl{smto} significantly outperforms unitary scalarization.}
\end{figure*}

\subsection{Ablation studies}
\label{sec:rl-ablations}

Figure~\ref{fig:mt10-ablations-shared-alpha} presents our ablations for MT10 experiments.
Due to computational constraints, we ran ablations on the unitary scalarization and PCGrad since these are the two methods previously tested in the~\gls{rl} setting.

Figure~\ref{fig:rl-reg} shows ablation studies on the effect of regularization on MT10 and MT50. In spite of CI overlaps, actor $l_2$ regularization pushes the average higher on both benchmarks, motivating our use of regularization for the experiments in $\S$\ref{sec:rl-experiments}. Furthermore, the gap between the averages tends to widen with the number of updates on MT50, suggesting improved stabilization.

\begin{figure}[t!]
	\begin{subfigure}{0.49\textwidth}
		\centering
		\includegraphics[width=\columnwidth]{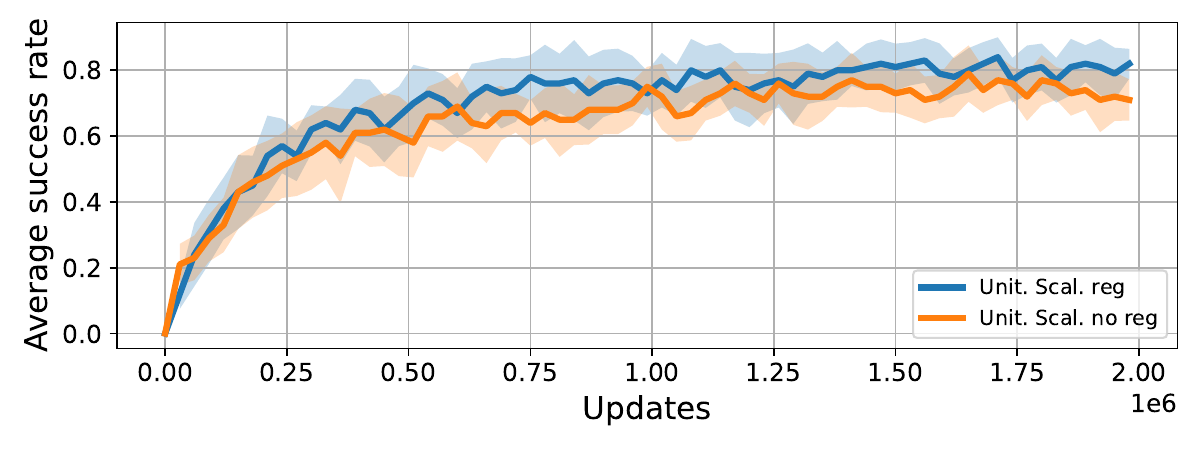}
		\vspace{-15pt}
		\caption{MT10 average performance (10 runs) and 95\% CI.}
	\end{subfigure}\hspace{5pt}
	\begin{subfigure}{0.49\textwidth}
		\centering
		\includegraphics[width=\columnwidth]{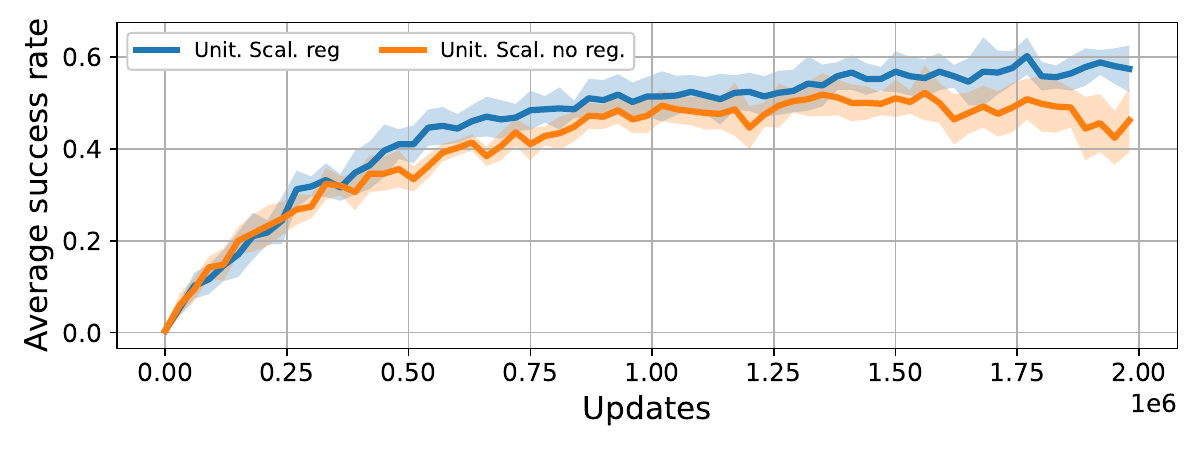}
		\vspace{-15pt}
		\caption{MT50 average performance (5 runs) and 95\% CI.}
	\end{subfigure}
	\vspace{-3pt}
	\caption{\label{fig:rl-reg} For both MT10 and MT50, actor $l_2$ regularization pushes the average higher for unitary scalarization.}
\end{figure}

\subsection{Sensitivity to Reward Normalization}

Figure~\ref{fig:rewnorm-sensitivity} shows that multitask agent performance is highly sensitive to the reward normalization moving average hyperparameter\footnote{\tiny \url{https://github.com/facebookresearch/mtenv/blob/4a6d9d6fdfb321f1b51f890ef36b5161359e972d/mtenv/envs/metaworld/wrappers/normalized\_env.py\#L69}} motivating our buffer normalization in Section~\ref{sec:rl-experiments}.

\begin{figure}[h!]
	\centering
	\includegraphics[width=.49\columnwidth]{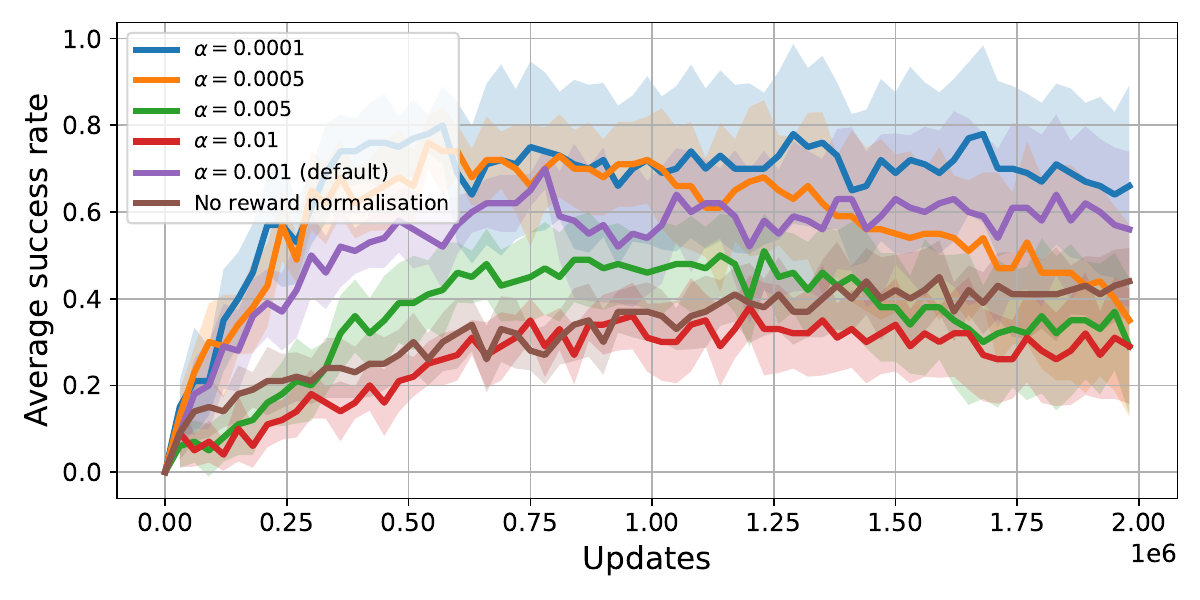}
	\caption{The learning outcomes of a Multitask SAC agent vary considerably depending on the reward normalisation hyperparameter. Each of the curves represents and average of 10 runs with shaded 95\% confidence interval.}
	\label{fig:rewnorm-sensitivity}
\end{figure}

\begin{figure*}[h!]
	
	\begin{subtable}{\textwidth}
		\label{table:cityscapes-supplementary}
		\caption{\small Mean and 95$\%$ CI of the test metrics across runs, and interquartile range for the training time per epoch.}
		\sisetup{detect-weight=true,detect-inline-weight=math,detect-mode=true}
		\centering
		\vspace{5pt}
		\footnotesize
		\begin{adjustbox}{max width=\textwidth, center}
			\begin{tabular}{llllll}
				\toprule
				MTO &      Absolute Depth Error &      Relative Depth Error &     Segmentation Accuracy &         Segmentation mIOU &      Epoch Runtime [s] \\
				\midrule
				Unit. Scal. & 1.301e-02 $\pm$ 2.342e-04 & 4.761e+01 $\pm$ 5.148e+00 & 9.196e-01 $\pm$ 2.913e-04 & 7.012e-01 $\pm$ 6.001e-04 & [3.228e+02, 3.241e+02] \\
				IMTL & 1.281e-02 $\pm$ 7.521e-04 & 4.389e+01 $\pm$ 6.984e-01 & 9.164e-01 $\pm$ 2.828e-03 & 6.967e-01 $\pm$ 4.785e-03 & [7.329e+02, 7.373e+02] \\
				MGDA & 1.418e-02 $\pm$ 2.331e-04 & 4.750e+01 $\pm$ 1.466e+01 & 9.189e-01 $\pm$ 2.636e-04 & 6.999e-01 $\pm$ 3.124e-03 & [7.251e+02, 7.269e+02] \\
				GradDrop & 1.293e-02 $\pm$ 2.757e-04 & 4.674e+01 $\pm$ 7.709e+00 & 9.193e-01 $\pm$ 1.282e-03 & 7.024e-01 $\pm$ 3.628e-03 & [5.196e+02, 5.215e+02] \\
				PCGrad & 1.294e-02 $\pm$ 2.284e-04 & 4.380e+01 $\pm$ 5.165e+00 & 9.198e-01 $\pm$ 9.119e-04 & 7.025e-01 $\pm$ 6.531e-04 & [4.202e+02, 4.212e+02] \\
				RLW Diri. & 1.305e-02 $\pm$ 4.155e-04 & 4.810e+01 $\pm$ 2.259e+00 & 9.199e-01 $\pm$ 1.247e-03 & 7.037e-01 $\pm$ 1.989e-03 & [3.161e+02, 3.164e+02] \\
				RLW Norm. & 1.301e-02 $\pm$ 5.528e-04 & 4.630e+01 $\pm$ 2.751e+00 & 9.192e-01 $\pm$ 4.962e-04 & 7.006e-01 $\pm$ 4.580e-03 & [3.194e+02, 3.210e+02] \\
				\bottomrule
			\end{tabular}
		\end{adjustbox}
	\end{subtable} 
	\vspace{10pt}
	
	\begin{subfigure}{0.49\textwidth}
		\centering
		\includegraphics[width=\textwidth]{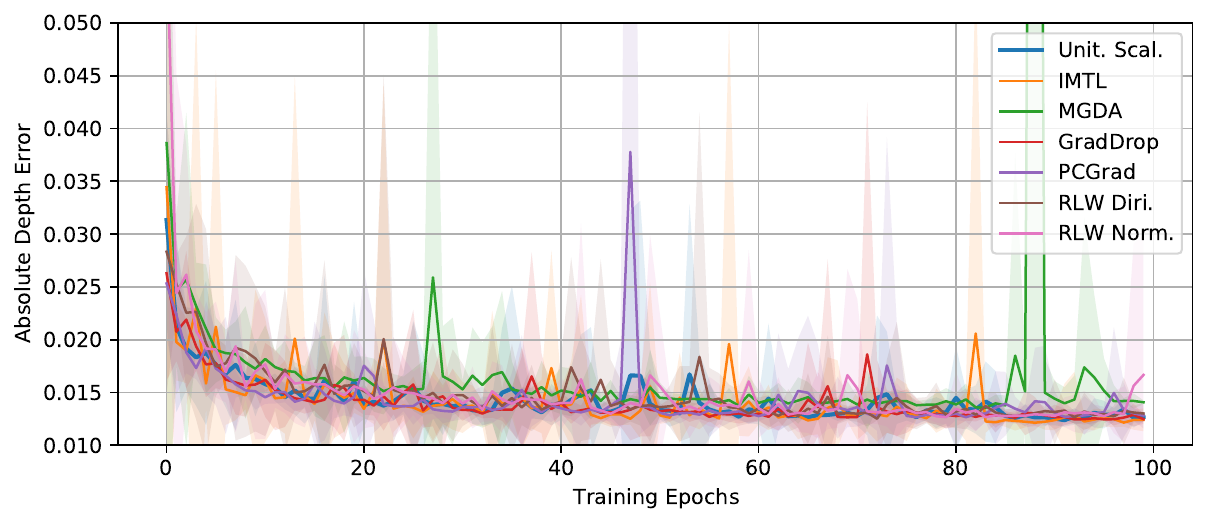}
		\caption{Mean (and 95$\%$ CI) absolute depth validation error per training epoch.}
		\label{fig:cityscapes-plot-absdepth}
	\end{subfigure} \hspace{3pt}
	\begin{subfigure}{0.49\textwidth}
		\centering
		\includegraphics[width=\textwidth]{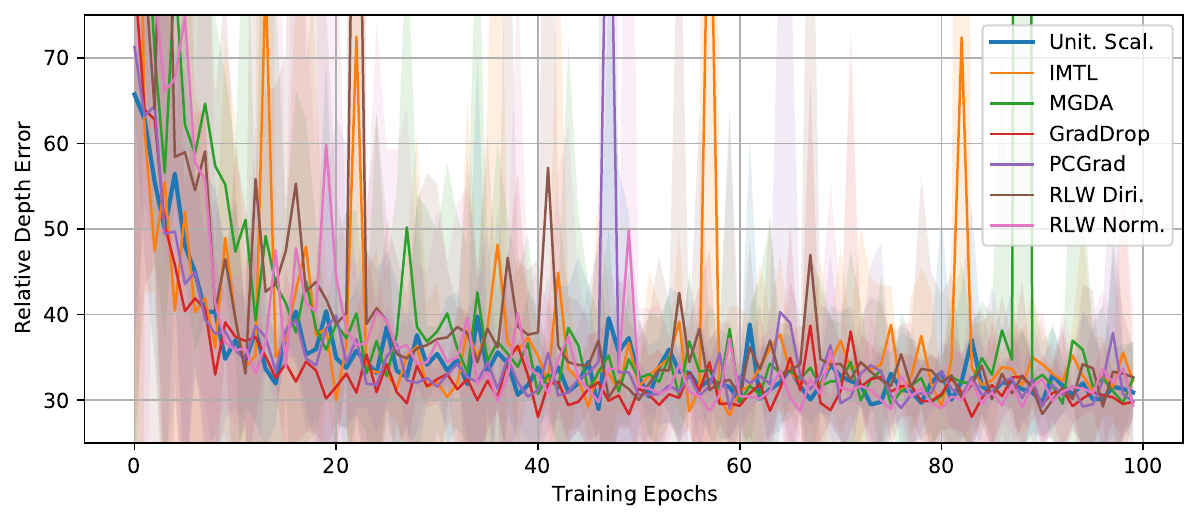}
		\caption{Mean (and 95$\%$ CI) relative depth validation error per training epoch.}
		\label{fig:cityscapes-plot-reldepth}
	\end{subfigure}
	\begin{subfigure}{0.49\textwidth}
		\centering
		\includegraphics[width=\textwidth]{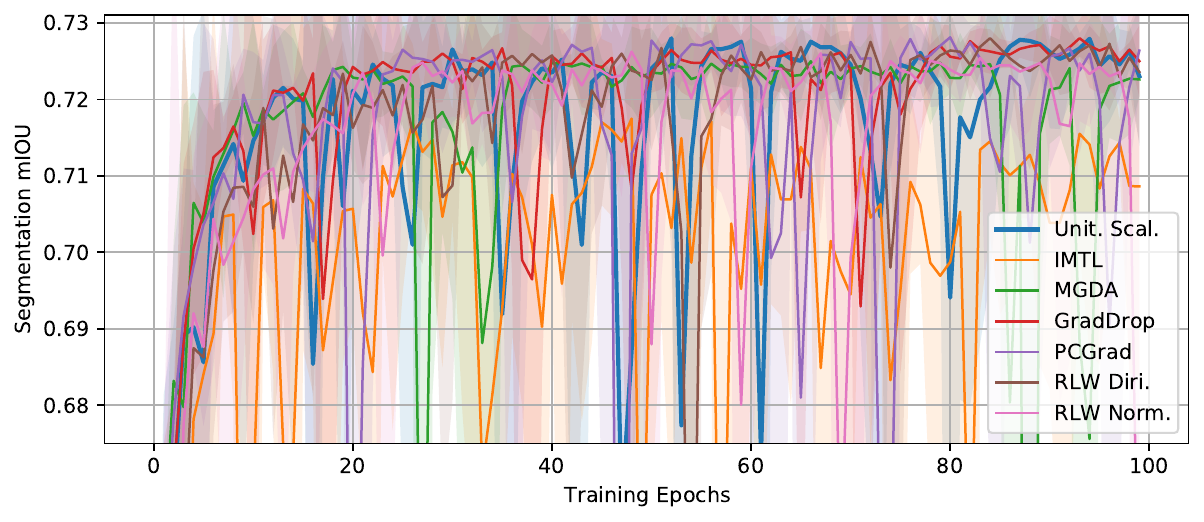}
		\caption{Mean (and 95$\%$ CI) validation segmentation mIOU per training epoch.}
		\label{fig:cityscapes-plot-segmIOU}
	\end{subfigure} \hspace{3pt}
	\begin{subfigure}{0.49\textwidth}
		\centering
		\includegraphics[width=\textwidth]{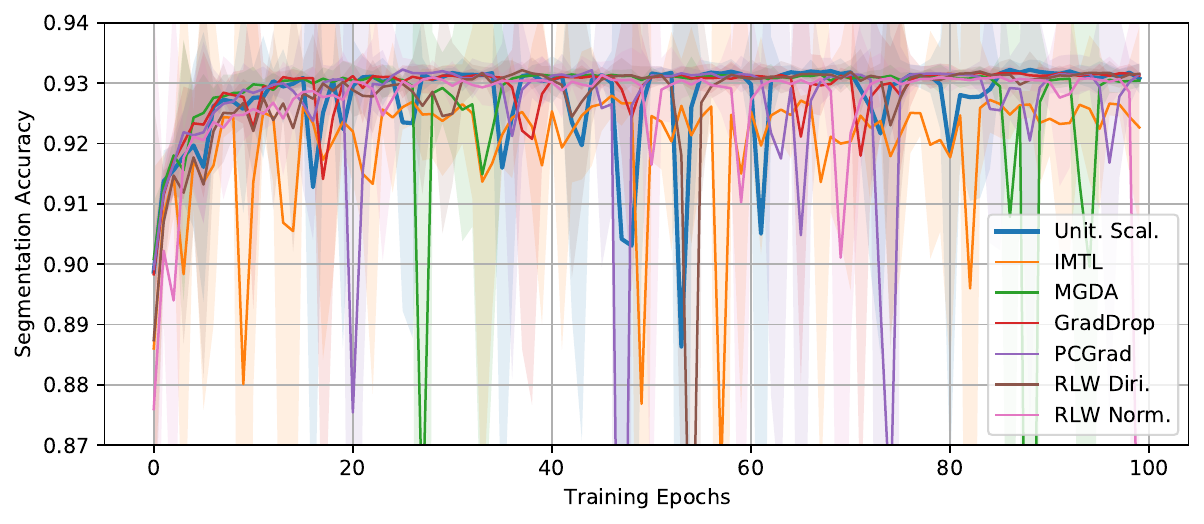}
		\caption{Mean (and 95$\%$ CI) validation segmentation accuracy per training epoch.}
		\label{fig:cityscapes-plot-segacc}
	\end{subfigure}
	\begin{subfigure}{0.49\textwidth}
		\centering
		\includegraphics[width=\textwidth]{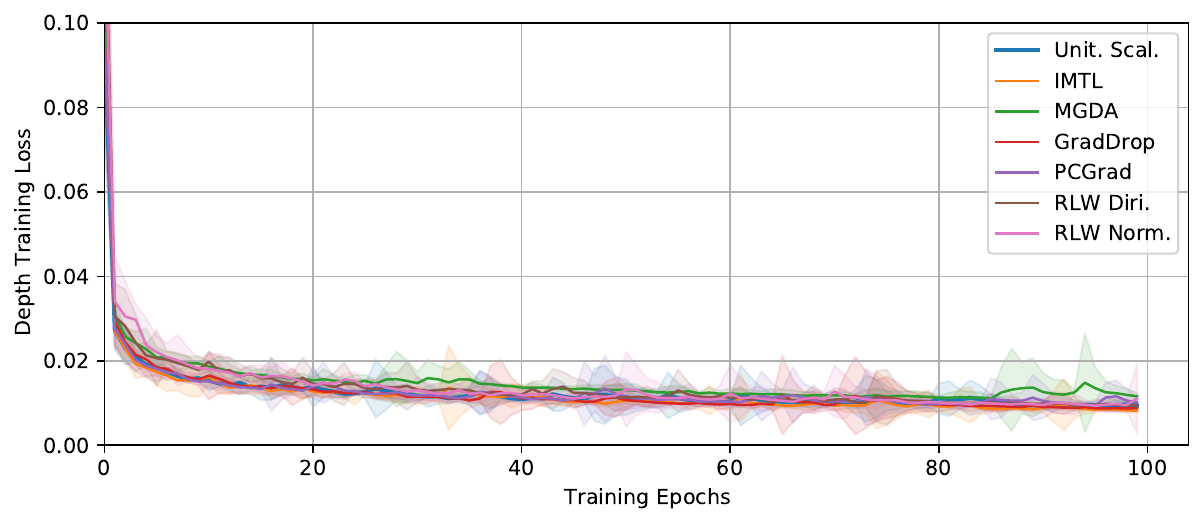}
		\caption{Mean (and 95$\%$ CI) training depth loss per epoch.}
		\label{fig:cityscapes-plot-deploss}
	\end{subfigure} \hspace{3pt}
	\begin{subfigure}{0.49\textwidth}
		\centering
		\includegraphics[width=\textwidth]{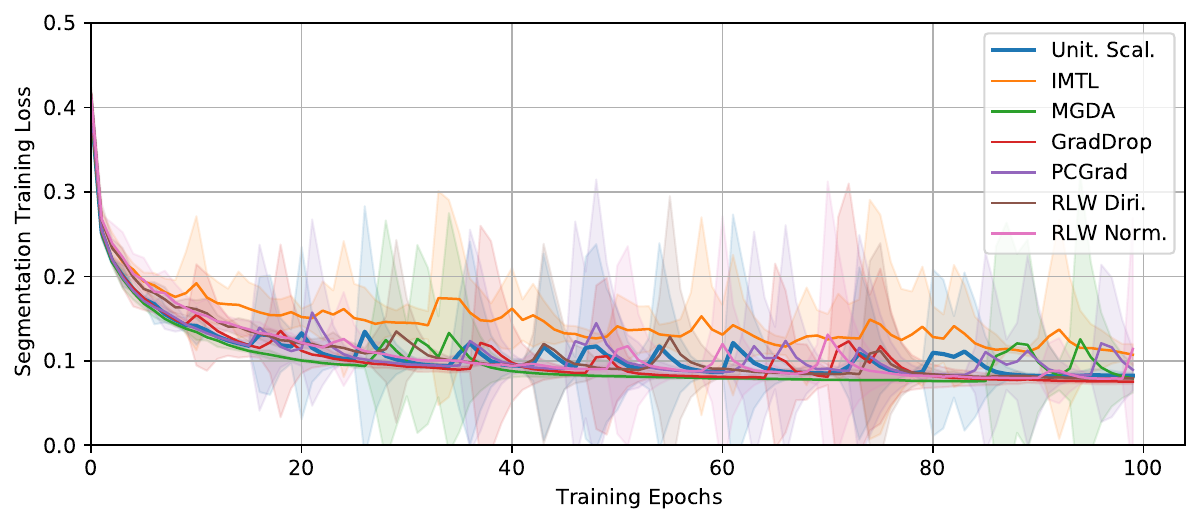}
		\caption{Mean (and 95$\%$ CI) training segmentation loss per epoch.}
		\label{fig:cityscapes-plot-segloss}
	\end{subfigure}
	\caption{\small Additional figures for the comparison of \gls{smto}s with the unitary scalarization on the Cityscapes~\citep{Cityscapes} dataset. \label{fig:cityscapes-supplementary}}
	
\end{figure*}

\begin{figure*}[h!]
	\centering
	\includegraphics[width=0.9\textwidth]{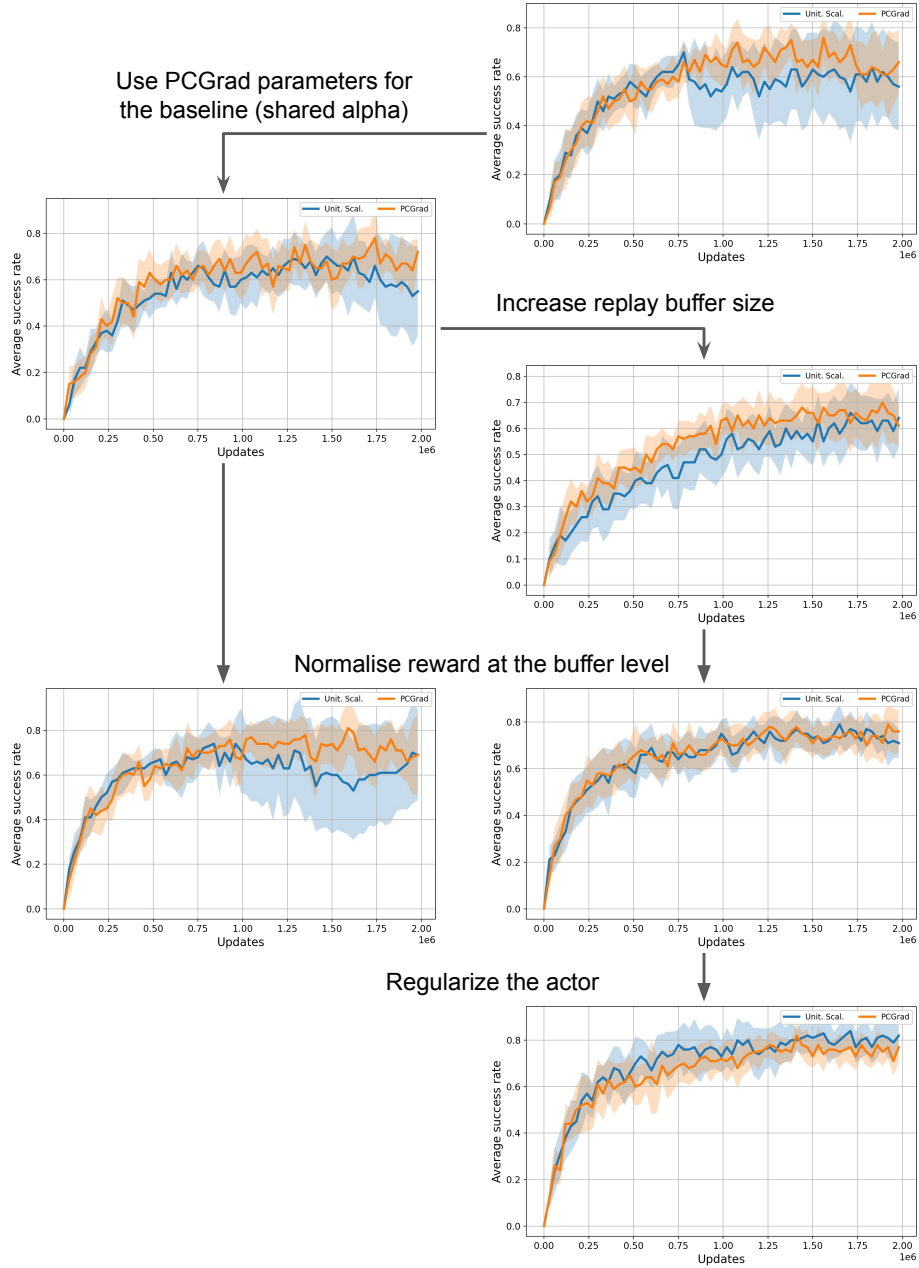}
	\caption{Metaworld's MT10 ablation experiments.}
	\label{fig:mt10-ablations-shared-alpha}
\end{figure*}

}

\end{document}